% sage_latex_guidelines.tex V1.20, 14 January 2017

\documentclass[Afour,sageh,times]{sagej}

\usepackage{moreverb,url}

\usepackage[colorlinks,bookmarksopen,bookmarksnumbered,citecolor=green,urlcolor=blue]{hyperref}

\usepackage{graphicx,subfigure}
\usepackage{amsmath}
\newcommand{\norm}[1]{\left\lVert#1\right\rVert}
\usepackage{amsfonts}

\usepackage{amsthm}
\newtheorem{theorem}{Theorem}
\newtheorem{mydef}{Definition}
\newtheorem{assumption}{Assumption}
\usepackage{chngcntr}
\usepackage{apptools}
\AtAppendix{\counterwithin{lemma}{section}}

\newtheorem{lemma}{Lemma}
\setcitestyle{square}
\usepackage{mathtools}
\usepackage{amssymb}
\usepackage{balance}

\usepackage[font=small,labelfont=bf,  justification=justified,  format=plain]{caption}
   \captionsetup[figure]{justification=justified, singlelinecheck=off}

\newcommand\BibTeX{{\rmfamily B\kern-.05em \textsc{i\kern-.025em b}\kern-.08em
T\kern-.1667em\lower.7ex\hbox{E}\kern-.125emX}}

\usepackage{cleveref}
\crefname{figure}{Fig.}{Fig.}
\Crefname{figure}{Figure}{Figure}
\crefname{equation}{}{}
\Crefname{equation}{Equation}{Equation}
\Crefname{subsection}{subsection}{subsections}
\setcounter{secnumdepth}{1}
\begin{document}
\runninghead{Grover, Liu and Sycara}

\title{The Before, During and After of  Multi-Robot Deadlock}

\author{Jaskaran Grover\affilnum{1}, Changliu Liu \affilnum{1} and Katia Sycara\affilnum{1}}

\affiliation{\affilnum{1}The authors are with the Robotics Institute, Carnegie Mellon University, Pittsburgh, PA 15213, USA. Email: {\tt\small \{jaskarag, cliu6,sycara\}@andrew.cmu.edu}. This work was funded by the DARPA Cooperative Agreement No.: HR00111820051, AFOSR awards FA9550-18-1-0097 and FA9550-15-1-0442.}

\corrauth{Jaskaran Grover (jaskarag@cs.cmu.edu)}

\begin{abstract}
Collision  avoidance for multirobot systems is a well studied problem. Recently, control barrier functions (CBFs) have been proposed for synthesizing controllers that guarantee collision avoidance and goal stabilization for multiple robots. However, it has been noted that reactive control synthesis methods (such as CBFs) are prone to \textit{deadlock}, an equilibrium of system dynamics that causes the robots to stall before reaching their goals. In this paper, we analyze the closed-loop dynamics of robots using CBFs, to characterize controller parameters, initial conditions and goal locations that invariably lead the system to deadlock.  Using tools from duality theory, we derive geometric properties of robot configurations of an $N$ robot system once it is in deadlock and we justify them using the mechanics interpretation of KKT conditions. Our key deductions are that 1) system deadlock is characterized by a force-equilibrium on robots and 2) deadlock occurs to ensure safety when safety is at the brink of being violated. These deductions allow us to interpret deadlock as a subset of the state space, and we show that this set is non-empty and located on the boundary of the safe set. By exploiting these properties, we analyze the number of admissible robot configurations in deadlock and develop a provably-correct decentralized algorithm for deadlock resolution to safely deliver the robots to their goals. This algorithm is validated in simulations as well as experimentally on Khepera-IV robots.
\end{abstract}
\keywords{Collision Avoidance, Optimization and Optimal Control, Duality Theory}
\maketitle
\section{Introduction}
Multirobot systems have been studied extensively for solving a variety of complex tasks such as target search \cite{kantor2003distributed}, \cite{grover2020parameter},  sensor coverage \cite{cortes2004coverage}, environmental exploration \cite{burgard2005coordinated} and perimeter guarding \cite{feng2019optimal}. Global coordinated behaviors result from executing local control laws on individual robots interacting with their neighbors \cite{ogren2002control}, \cite{olfati2007consensus}. Typically, the local controllers running on these robots are a combination of a task-based controller responsible for completion of a primary objective and a reactive collision avoidance controller for ensuring safety. However, augmenting the primary task based control with a hand-engineered safety control no longer guarantees that the original task will be satisfied \cite{borrmann2015control}. 
This problem becomes all the more pronounced when the number of robots increases. Motivated by this drawback, our paper focuses on an algorithmic analysis of the performance-safety trade-offs that result from augmenting a task-based controller with collision avoidance constraints as done using CBF based quadratic programs (QPs) \cite{ames2017control}. Although CBF-QPs mediate between safety and performance in a rigorous way, yet ultimately they are distributed local controllers. Such approaches exhibit a lack of look-ahead, which causes the robots to be trapped in \textit{deadlocks} as noted in \cite{petti2005safe,o1989deadlockfree,wang2017safety}. 

In deadlock, the robots stop while still being away from their goals and persist in this state unless intervened.
\begin{figure}[t]
    \centering
    \includegraphics[trim={5.5cm 17.5cm 7.7cm 2.8cm},clip,width=.7\linewidth]{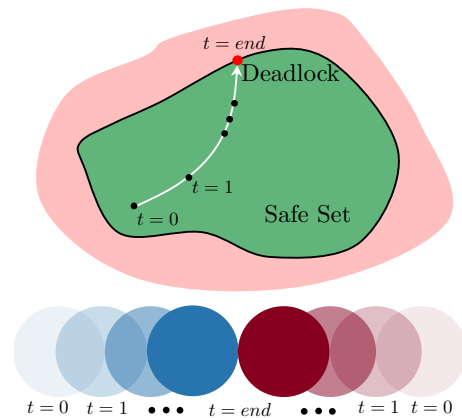}
    \setlength{\belowcaptionskip}{0pt}
    \caption{(lower) In the work space, two robots moving towards each other fall in deadlock. (upper) In the state space, the system state converges to the boundary of safe set.}
    \label{fig:deadlock_cartoon}
\end{figure}
This occurs because robots reach a state where conflict becomes inevitable, \textit{i.e.} a control favoring goal stabilization will violate safety (see the red dot in \cref{fig:deadlock_cartoon}). Hence, the only feasible strategy is to remain static. Although small perturbations can steer the system away from deadlock, there is no guarantee that deadlock will not relapse. To circumvent these issues, this work addresses the following technical questions:
\begin{enumerate}
    \item What initial conditions/controller parameters invariably lead a system to deadlock?
    \item What are the salient geometric properties of a system in deadlock?
    \item What are all the admissible configurations of robots in deadlock?
    \item How can we leverage this information to provably exit deadlock using decentralized controllers?
\end{enumerate}
To address these questions, we first recall technical definitions for CBF-QPs in \cref{CBFReview}. We divide our analysis intro three sections consistent with the chronology of deadlock incidence. 
\begin{enumerate}
    \item \textbf{Before Deadlock} (\cref{before deadlock}) In this section, we analyze the closed-loop dynamics of the multirobot system before the incidence of deadlock to characterize initial conditions and controller parameters that result in deadlock. We show that geometric symmetry in initial conditions and goals locations necessarily results in deadlock and that heterogeneity in robots' controller parameters is not sufficient to avoid deadlock.
    \item \textbf{During Deadlock} (\cref{in deadlock}) In this section, we use duality theory to derive geometric properties of robots' configurations once the system is in deadlock.  We propose a novel set-theoretic definition of deadlock to interpret it as a subset of the system's state space with an eye towards devising mitigative controllers to exit this set. We show that system deadlock is  characterized by a force equilibrium on robots and that the deadlock set is non-empty and located on the boundary of the system's safe set. Using graph enumeration, we show that the number of geometric configurations of robots that are admissible in deadlock increases combinatorially with the number of robots.
    \item \textbf{After Deadlock} (\cref{deadlock resolution}) In this section, we design provably-correct decentralized controllers to make the robots exit the deadlock set. We demonstrate this strategy on two and three robots in simulation, and experimentally on Khepera-IV robots.
\end{enumerate}
Finally, we conclude in \cref{conclusions} with a summary of our work and directions for future research questions. \\ \\ 
\textbf{Remark 1:} In the narrative, we follow the technical statement of a theorem/lemma with an intuitive explanation to convey the high-level idea. The proofs are deferred to the appendix to sustain the flow of essential ideas in the main body of the paper.
\section{Prior Work}
\label{PriorWork}
Several existing  methods provide  inspiration  for  the  results  presented  here. Among these, two are especially relevant: in the first category, we describe prior methods for collision avoidance and in the second, we focus on deadlock resolution.
\subsection{Prior Work on Avoidance Control}
Avoidance control is a well-studied problem with immediate applications for planning collision-free motions for multirobot systems. Classical avoidance control assumes a worst case scenario with no cooperation between robots \cite{leitmann1977avoidance,leitmann1983note}
Cooperative collision avoidance is explored in \cite{stipanovic2007cooperative,hokayem2010coordination} where avoidance control laws are computed using value functions.  Velocity obstacles have been proposed in \cite{fiorini1998motion} for motion planning in dynamic environments. They select avoidance maneuvers outside of robot's velocity obstacles to avoid static and moving obstacles by means of a tree-search. While this method is prone to undesirable oscillations, the authors in \cite{van2008reciprocal,van2011reciprocal,wilkie2009generalized} proposed reciprocal velocity obstacles that are immune to such oscillations. Recent work by
\cite{douthwaite2018comparative} gave a comparative analysis of the variants of velocity obstacles and empirically showed incidence of deadlock in symmetric situations. Additionally, control barrier function based controllers have been used in \cite{borrmann2015control,wang2017safety} to mediate between safety and performance using QPs. \cite{pan2020augmenting} presents a multi-robot feedback control
policy augmented with a global planner for robust safe navigation in a
complex workspace. 
\subsection{Prior Work on Deadlock Resolution}
The importance of coordinating motions of multiple robots while simultaneously ensuring safety, performance and deadlock prevention has been acknowledged in works as early as in \cite{o1989deadlockfree}. Here, authors proposed scheduling algorithms to asynchronously coordinate motions of two manipulators to ensure that their trajectories remain collision free and deadlock free.  In the context of mobile robots, \cite{yamaguchi1999cooperative} identified the presence of deadlocks in a cooperative  scenario using mobile robot troops. To the best of our knowledge, \cite{jager2001decentralized} were the first to propose algorithms for deadlock resolution specifically for multiple mobile robots. Their strategy for collision avoidance modifies planned paths by inserting idle times and resolves deadlocks by asking the trajectory planners of each robot to plan an alternative trajectory until deadlock is resolved. \cite{4414232} show that without sharing any information
on the global situation of the multi-robot system, deadlock cannot be avoided. Further, they proposed a random move + priority-based rule for deadlock resolution.  Authors in \cite{li2005motion} proposed coordination graphs to resolve deadlocks in robots navigating through narrow corridors. \cite{wang2017safety,rodriguez2016guaranteed} added perturbation terms to their controllers for avoiding deadlock.  Authors in \cite{douthwaite2019velocity,douthwaite2018comparative,douthwaite2019multi} gave empirical simulation based evidence of incidence of deadlock in symmetric situations and prevented deadlock occurrence by adding small perturbations, whereas our work gives analytical proofs why those geometric arrangements are prone to deadlock with extensions to heterogeneous robots. \cite{7875176,zhou2017collision} proposed a distributed algorithm to avoid collisions and deadlocks by stopping robots preemptively before these events occur. \cite{alonso2018reactive} presents an algorithm for mission and motion planning for small teams of robots performing a task around moving obstacles guaranteeing safety and providing an algorithm and conditions for deadlock resolution.

In these works, controllers for task-completion and collision avoidance are highly coupled. However, in our work, we assume that a nominal task based control is already given and must be followed as much as possible while maintaining safety using the framework of CBF-QPs. Thus, the mediation between safety and task performance is hierarchical with safety constraints taking precedence over task-satisfaction. Due to this nature, our analysis is generalizable to deadlocks resulting from an arbitrary task-based control and not restricted to just the go-to-goal task. Further, differently from prior work, we characterize analytical properties of the system dynamics to characterize the reasons behind deadlock incidence. We demonstrate that intuitive explanations for geometric properties of a system in deadlock are indeed recovered using duality. Moreover, our analysis can be extended to reveal bottlenecks of any optimization-based controller synthesis method such as velocity obstacles since the underlying formalism is obtained from duality theory. Additionally, to our knowledge, we are the first to use graph enumeration and KKT conditions to highlight the combinatorial nature of geometric configurations admissible in deadlock. For deadlock resolution, we do not consider additive perturbations since they lack formal guarantees. Instead, we exploit the derived geometric properties  to guide the design of a provably correct controller that ensures safety, performance and deadlock resolution. 

% assumes feasible collision free trajs are know

% geometric properties

% feasi

% \color{red}
% Reactive mission and motion planning with deadlock resolution avoiding dynamic obstacles

% Verifiable Control of Robotic Swarm from High-level Specifications

% A novel method for deadlock prevention of AMS by using resource-oriented Petri nets

% Decentralized deadlock-free control for AGV systems

% Decentralized collision avoidance, deadlock detection, and deadlock resolution for multiple mobile robots. 

% Collision and deadlock avoidance in multirobot systems: A distributed approach

%  Due to theexistenceofdeadlocks,thisstrategyissuitableforsystemswheredeadlocks are rare and cannot result in severe catastrophes, andthe recovery is affordable

% Deadlock prevention, such as the work in Chen et al. (2016),Uzam and Zhou (2007) and Xing et al. (2018), is an off-line mechanism to avoid deadlocks. The key task is to compute liveness conditions or design a proper controller before a system is released, so deadlocks can never occur. However, such methods are with exponential computation complexity with regard to the net sizes (Chen et al., 2016)

% \color{black}
% \cite{liu2017distributed}

\section{Avoidance Control with CBFs: Review}
\label{CBFReview}
 In this section, we review CBF based QPs used for synthesizing controllers that mediate between safety (collision avoidance) and performance (goal-stabilization) for multirobot systems. We refer the reader to \cite{wang2017safety} for a comprehensive treatment on this subject, since our work builds on top of their approach. Assume we have $N$ robots, where each robot follows single-integrator dynamics (we refer the reader to \cite{grover2019deadlock} where we considered double-integrators):
\begin{align}
\dot{\boldsymbol{p}}_i = \boldsymbol{u}_i.
\end{align} 
Here $\boldsymbol{p}_i=(x_i,y_i)\in\mathbb{R}^2$ is the position of robot $i$, $\boldsymbol{u}_i\in\mathbb{R}^2$ is its velocity (\textit{i.e.} control input) and $i \in \{1,2,\cdots,N\} = [\mathbb{N}]$. In this paper, we ignore actuator limit constraints $\vert\boldsymbol{u}_i\vert \leq \alpha_i$ but the reader can refer to our related work \cite{grover2019deadlock} where those constraints were considered. The problem of goal stabilization with collision-avoidance requires that each robot $i$ must reach a goal position $\boldsymbol{p}_{d_i}$ while avoiding collisions with every other robot $j$, $\forall j\in [\mathbb{N}]\backslash i$. Assume that there is a user-prescribed proportional  controller $\hat{\boldsymbol{u}}_i(\boldsymbol{p}_i)=-k_{p_i}(\boldsymbol{p}_i-\boldsymbol{p}_{d_i})$ that generates movement towards goal. Here $k_{p_i}$ is the controller gain for robot $i$. For posing the collision-free requirement, a pairwise function is formulated that maps the joint state space of $i$ and $j$ to a real-valued safety index \textit{i.e.} $h:\mathbb{R}^2 \times \mathbb{R}^2\longrightarrow\mathbb{R}$:
\begin{align}
\label{hdef}
h_{ij}= \norm{\Delta \boldsymbol{p}_{ij}}^2-D_s^2,
\end{align}
where $D_s$ is a desired safety margin. Robots $i\mbox{ and }j$ are considered to be collision-free or safe if their positions $(\boldsymbol{p}_i,\boldsymbol{p}_j)$ are such that $h_{ij}(\boldsymbol{p}_i,\boldsymbol{p}_j)\geq0$ (\textit{i.e.} at-least $D_s$ distance apart). We define ``pairwise safe set" as the 0-level superset of $h_{ij}$ \textit{i.e.}  $\mathcal{C}_{ij}\coloneqq \{(\boldsymbol{p}_i,\boldsymbol{p}_j)\in\mathbb{R}^4\mid h_{ij}(\boldsymbol{p}_i,\boldsymbol{p}_j)\geq0$\}. The boundary of the pairwise safe set is \vspace{-0.2cm}
\begin{align}
\label{safetysetdef}
    \partial \mathcal{C}_{ij}=\{(\boldsymbol{p}_i,\boldsymbol{p}_j) \in \mathbb{R}^2 \vert h_{ij}(\boldsymbol{p}_i,\boldsymbol{p}_j)=0\}.
\end{align}
\begin{figure}[t]
	\setlength{\belowcaptionskip}{0pt}
	\centering     %%% not \center
	\subfigure[Positions ]{\label{fig:deadlock_pos}\includegraphics[width=\linewidth]{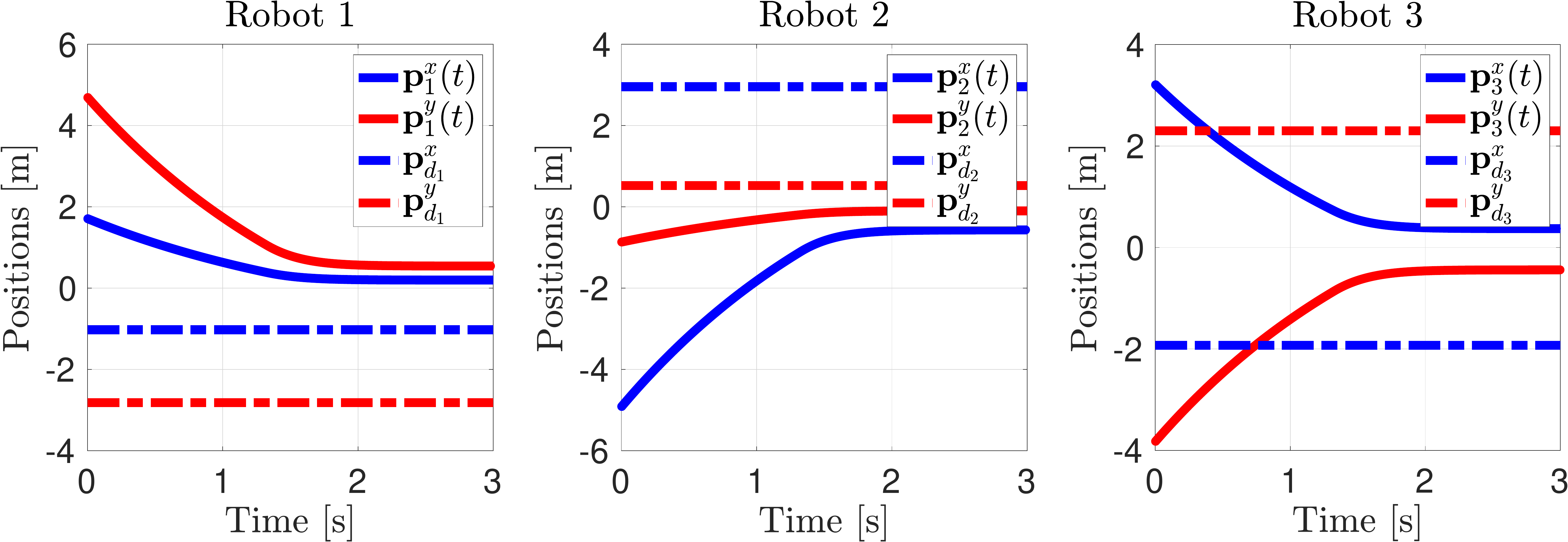}}
	\subfigure[Velocities (control inputs) ]{\label{fig:deadlock_acc}\includegraphics[width=\linewidth]{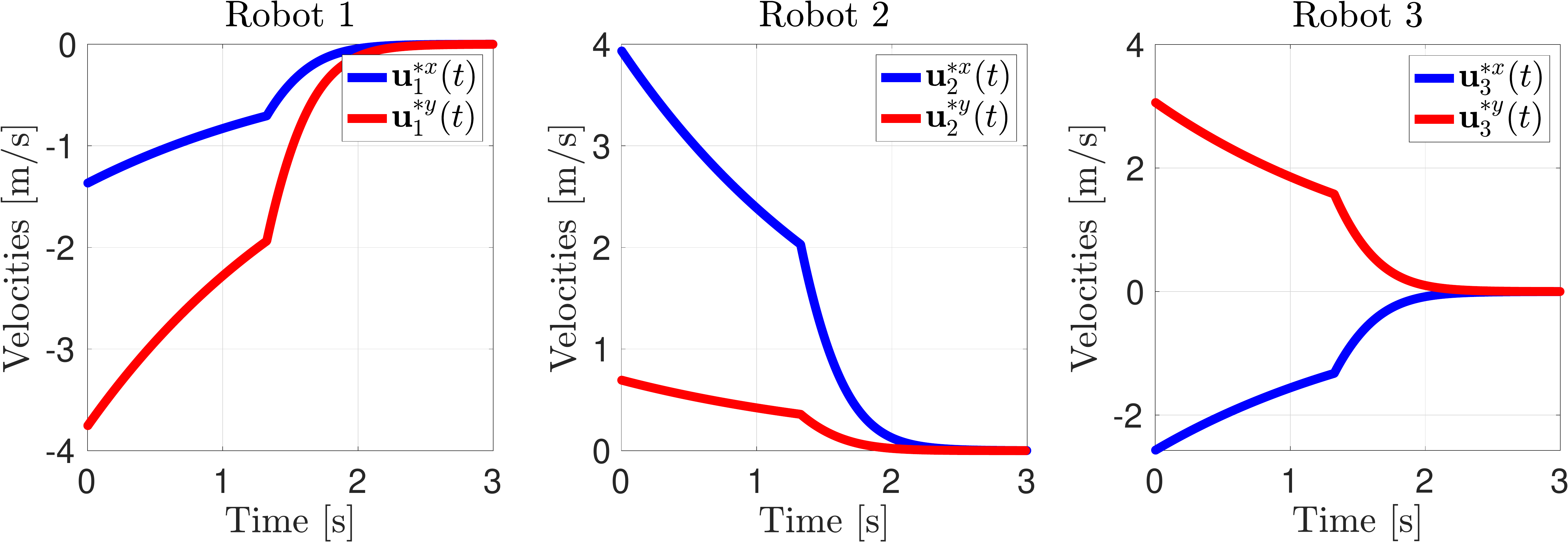}}
	\caption{Positions and velocities of robots falling in deadlock. Note that $\lim_{t\rightarrow \infty}p_{x,y}\neq p_{d_{x,y}}$ yet  $\lim_{t\rightarrow \infty}v_{x,y}=0$. Simulation results at   \url{https://youtu.be/FShai_JESII} and experimental demonstration at \url{https://youtu.be/e6eOeuh7Uec}.}
	\label{fig:deadlock_traj}
\end{figure}
\hspace{-0.2cm} In other words, the system state is on the boundary of the safe set when the robots are grazing each other \textit{i.e.} separated by the safety margin $D_s$. Assuming that initial positions of robots $i,j$ are collision-free \textit{i.e.} $h_{ij}(\boldsymbol{p}_i(0),\boldsymbol{p}_j(0))\geq0$, we would like to synthesize controls $\boldsymbol{u}_i ,\boldsymbol{u}_j$ that ensure their future positions are also collision-free \textit{i.e.} $h_{ij}(\boldsymbol{p}_i(t),\boldsymbol{p}_j(t))\geq0,\mbox{ }\forall t>0$. This can be achieved by ensuring that (\cite{ames2017control}): 
\begin{align}
\label{constraint}
\frac{dh_{ij}}{dt} \geq - \gamma h_{ij},
\end{align}
where $\gamma>0$ (a user-defined hyperparameter). Substituting  \eqref{hdef} in  \eqref{constraint}, we get 
\begin{align}
-\Delta\boldsymbol{p}^T_{ij}\Delta \boldsymbol{u}_{ij} \leq \frac{\gamma}{2}h_{ij}.
\end{align}
This constraint is distributed on robots $i \mbox{ and }j$ as:
\begin{align}
\label{decentralizedconstraints}
-\Delta\boldsymbol{p}_{ij}^T\boldsymbol{u}_i \leq  \frac{\gamma}{4}h_{ij} \mbox{ and }
\Delta\boldsymbol{p}_{ij}^T\boldsymbol{u}_j \leq  \frac{\gamma}{4}h_{ij}
\end{align}
Therefore, any $\boldsymbol{u}_i$ and $\boldsymbol{u}_j$ that satisfy  \cref{decentralizedconstraints} are guaranteed to ensure collision free trajectories for robots $i\mbox{ and }j$ in the multirobot system.  Since robot $i$ wants to avoid collisions with $N-1$ robots, there are $N-1$ collision avoidance constraints. To mediate between safety and goal stabilization objective, a QP is posed that computes a controller closest to the prescribed control $\hat{\boldsymbol{u}}_i(\boldsymbol{p}_i)$ and satisfies the $N-1$ constraints, as shown:
\begin{align}
\label{optimization_formulation_2}
	\begin{aligned}
		\boldsymbol{u}^*_i&= \underset{\boldsymbol{u}_i}{\arg\min}
		& & \norm{\boldsymbol{u}_i - \hat{\boldsymbol{u}}_i(\boldsymbol{p}_i)}^2 \\
		& \text{subject to}
		& & \boldsymbol{a}_{ij}^T\boldsymbol{u}_i \leq b_{ij}  \mbox{ } \forall j\in [\mathbb{N}]\backslash i
	\end{aligned}
\end{align}
where using  \eqref{decentralizedconstraints}, we define
\begin{align}
\label{abdefs}
\boldsymbol{a}_{ij} \coloneqq -\Delta \boldsymbol{p}_{ij} \mbox{,   }
    b_{ij} \coloneqq  \frac{\gamma}{4}h_{ij} =\frac{\gamma}{4}  (\norm{\Delta \boldsymbol{p}_{ij}}^2-D_s^2)
\end{align}
Each robot $i$ locally solves this QP to determine its $\boldsymbol{u}^*_i$, which guarantees collision avoidance of robot $i$ with $N-1$ robots while encouraging motion towards its goal. While provably safe, this approach does not guarantee that stabilizing to  goals will be accomplished because goal stabilization is expressed as a cost function unlike safety which is expressed as a hard constraint. As an example, \cref{fig:deadlock_traj} shows the result of  executing  \cref{optimization_formulation_2} on three robots. Notice from \cref{fig:deadlock_pos} that the positions of robots have converged, but \textbf{not} to their respective goals. However, the control inputs from \cref{optimization_formulation_2} (\cref{fig:deadlock_acc}) have already converged to zero. An experimental realization can be seen at \url{https://youtu.be/e6eOeuh7Uec} where three Khepera robots fall in deadlock after using these controllers. Manual intervention is needed to resolve deadlock.  Based on these observations, deadlock is defined as (\cite{grover2020does} ):
\begin{mydef} 
\label{def:deadlock_definition}
Robot $i$ is in deadlock if $\boldsymbol{u}^*_i=0$ and the prescribed nominal control $\hat{\boldsymbol{u}}_i \neq \boldsymbol{0} \iff \boldsymbol{p}_i \neq \boldsymbol{p}_{d_i}$
\end{mydef}
In simpler terms, this definition states that for a robot to be in deadlock, the output from the QP based controller is zero, even though the prescribed controller reports non-zero velocity because the robot is not at its intended destination. We do not analyze ``livelock" in this paper because in our simulations we did not observe incidence of oscillatory behavior. Therefore, the focus is on deadlock. In the next section, we examine the closed-loop dynamics of the system \textit{before deadlock} as a function of the robots' initial conditions, goal locations and controller parameters $k_{p_i}$ to characterize what causes deadlock.
\section{Before Deadlock}
\label{before deadlock}
In this section, we analytically investigate the solutions of \eqref{optimization_formulation_2} $\forall i \in \{1,2,\cdots,N\}$ because these solutions govern the instantaneous dynamics of the robots. However, owing to the fact that \eqref{optimization_formulation_2} is an  optimization problem, the controls $\boldsymbol{u}^*_i$ are not available as explicit function of the robots' states. This prohibits straightforward use of tools like Lyapunov theory to analyze long-term behavior of the robots' motion. To circumvent this issue, we use duality theory, specifically KKT conditions of \eqref{optimization_formulation_2} to derive an explicit expression for $\boldsymbol{u}^*_i$ as a function of the robots' positions. These conditions are necessary and sufficient for a global optimum of this QP. The Lagrangian is
\begin{align}
L(\boldsymbol{u}_i,\boldsymbol{\mu}_i) =  \norm{\boldsymbol{u}_i - \hat{\boldsymbol{u}}_i}^2_2  + \sum_{j \in [\mathbb{N}]\backslash i}\mu_{ij}(\boldsymbol{a}^T_{ij}\boldsymbol{u}_i-b_{ij}) \nonumber
\end{align}
Let $(\boldsymbol{u}^*_i,\boldsymbol{\mu}^*_i)$ be the optimal primal-dual solution to  \eqref{optimization_formulation_2}. The KKT conditions are \cite{boyd2004convex}:
\begin{enumerate}
	\item Stationarity: $\nabla_{\boldsymbol{u}_i}L(\boldsymbol{u}_i,\boldsymbol{\mu}_i)\vert_{(\boldsymbol{u}^*_i,\boldsymbol{\mu}^*_i)} = 0$
	\begin{align}
	\label{stationarity1}
	\implies \boldsymbol{u}^*_i = \hat{\boldsymbol{u}}_{i} - \frac{1}{2}\sum_{j\in [\mathbb{N}]\backslash i}\mu^*_{ij}\boldsymbol{a}_{ij}.
	\end{align}
\item Primal Feasibility 
\begin{align}
\label{primal_feasibility1}
\boldsymbol{a}^T_{ij}\boldsymbol{u}^*_i \leq b_{ij} \mbox{  }	 \forall j \in [\mathbb{N}]\backslash i
\end{align}
\item Dual Feasibility 
\begin{align}
\label{dual_feasibility1}
{\mu^*_{ij}} \geq 0 \mbox{  }	 \forall j \in [\mathbb{N}]\backslash i
\end{align}
\item Complementary Slackness 
\begin{align}
	\label{complimentarty slackness1}
	\mu^*_{ij} \cdot (\boldsymbol{a}^T_{ij}\boldsymbol{u}^*_i -b_{ij}) = 0 
	 \mbox{   }\forall j \in [\mathbb{N}]\backslash i
\end{align}
\end{enumerate}
We define the set of active and inactive constraints as
\begin{align}
\label{activeinactive}
	\mathcal{A}(\boldsymbol{u}^*_i) = \{[\mathbb{N}]\backslash i \mid \boldsymbol{a}^T_{ij}\boldsymbol{u}^*_i = b_{ij} \} \\
	\mathcal{IA}(\boldsymbol{u}^*_i) = \{[\mathbb{N}]\backslash i \mid \boldsymbol{a}^T_{ij}\boldsymbol{u}^*_i < b_{ij} \} 
\end{align}
\textcolor{black}{These two sets define a partition of the set of non-ego robots. As we will explain in \cref{in deadlock}, intuitively, the active set is those set of robots that the ego robot ``worries" about for a collision and therefore ``actively" repels to stay safe.} Using complementary slackness from \eqref{complimentarty slackness1}, we deduce
\begin{align}
\label{cs3}
	\mu^*_{ij} = 0 \mbox{ $\forall j $} \in \mathcal{IA}(\boldsymbol{u}^*_i)
\end{align}  
Therefore, we can restrict the summation in \eqref{stationarity1}  to only the set of active constraints \textit{i.e.}
\begin{align}
\label{kkt_general}
	\boldsymbol{u}^*_i = \hat{\boldsymbol{u}}_i - \frac{1}{2}\sum_{j \in \mathcal{A}(\boldsymbol{u}^*_i)}\mu^*_{ij}\boldsymbol{a}_{ij}
\end{align}
\textcolor{black}{This equation says that the overall control returned by \cref{optimization_formulation_2} is the sum of the task-based nominal control
$\hat{\boldsymbol{u}}_i$ and the resultant of collision-avoidance velocities from the active robots. While we state this as a fact here, we will prove this in \cref{in deadlock} using a geometric argument}. We will use this representation of controls to explore scenarios in which deadlock occurs. Our aim is to characterize the parameters and the geometry of robot arrangements which necessarily result in deadlock instead of characterizing every possible geometric arrangement where deadlock will occur, since this latter problem is combinatorially complex in the number of robots (see \cref{complexity}). Therefore, to convey the idea, we focus on the simpler cases of two and three robots. 
\subsection{Closed-loop Analysis for Two Robots}
\label{tworobotdeadlock}
Consider two robots positioned on the line connecting their goals at $t=0$ as shown in \cref{fig:deadlock_in_two_cartoon}. We will demonstrate that even if the prescribed control of one robot is more aggressive than the other, \eqref{optimization_formulation_2} will necessarily result in deadlock because of the collinearity of initial conditions and goals.
Let $\boldsymbol{p}_i(t) \in \mathbb{R}^2$ denote the position of robot $i \in \{1,2\}$ at time $t$. At $t=0$,
\begin{align}
\label{initial_positions_two_robots}
    \boldsymbol{p}_1(0) &= (x_0,y_0) \nonumber \\
    \boldsymbol{p}_2(0) &= \boldsymbol{p}_1(0) + D_{init}\hat{\boldsymbol{e}}_\alpha ,
\end{align}
where $\hat{\boldsymbol{e}}_{\alpha}$ is a unit vector oriented at $\alpha$ relative to $X_{w}$ axis and $D_{init} \in \mathbb{R}^{+}$ is the initial distance between the robots (see  \cref{fig:deadlock_in_two_cartoon}). The desired goal positions are
\begin{align}
\label{goal_positions_two_robots}
    \boldsymbol{p}_{d_1} &= \boldsymbol{p}_1(0) + D_{G_1}\hat{\boldsymbol{e}}_{\alpha} \nonumber \\
    \boldsymbol{p}_{d_2} &= \boldsymbol{p}_2(0) - D_{G_2}\hat{\boldsymbol{e}}_{\alpha}.
\end{align}
Here $D_{G_i}$ is the distance of  $i$ from its goal $\boldsymbol{p}_{d_i}$ at $t=0$. We assume that $D_{G_1},D_{G_2}>D_{init}$ to encourage incidence of deadlock. We formally state the deadlock incidence result as follows:
\begin{figure}
    \centering
    \includegraphics[trim={2.8cm 16.7cm 5.3cm 3.8cm},clip,width=.8\linewidth]{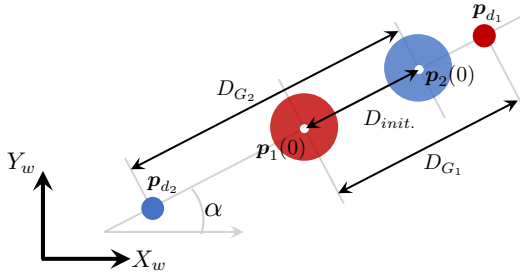}
    \caption{Geometric configuration of two robots and their goals. Notice that everything is collinear along $\hat{\boldsymbol{e}}_{\alpha}$.}
    \label{fig:deadlock_in_two_cartoon}
\end{figure}
\begin{theorem}
\label{tworobottheorem}
Given the initial positions of robots as in  \eqref{initial_positions_two_robots} and desired goals as in  \eqref{goal_positions_two_robots}, the controls generated by the QP in \eqref{optimization_formulation_2} will cause the robots to fall in deadlock $\forall D_{G_1},D_{G_2}>D_{init.}$ and $\forall k_{p_1},k_{p_2} \in \mathbb{R}^{+}$.
\end{theorem}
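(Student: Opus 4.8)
The structural fact I would exploit first is that the entire configuration in \eqref{initial_positions_two_robots}--\eqref{goal_positions_two_robots} is collinear along $\hat{\boldsymbol{e}}_\alpha$, so the closed loop is effectively one-dimensional. Decomposing $\mathbb{R}^2$ into $\hat{\boldsymbol{e}}_\alpha$ and its orthogonal complement $\hat{\boldsymbol{e}}_\alpha^\perp$, at $t=0$ every robot position and every goal has the same $\hat{\boldsymbol{e}}_\alpha^\perp$-component. Whenever that holds, $\boldsymbol{a}_{12}=-\Delta\boldsymbol{p}_{12}$ is parallel to $\hat{\boldsymbol{e}}_\alpha$, so the single constraint in each robot's QP \eqref{optimization_formulation_2} does not act on the $\hat{\boldsymbol{e}}_\alpha^\perp$-component of $\boldsymbol{u}_i$; hence that component of $\boldsymbol{u}_i^*$ coincides with that of $\hat{\boldsymbol{u}}_i$, which is $0$ because positions and goals are aligned. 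Using that the QP solution map is locally Lipschitz here (the lone constraint is never degenerate, since $\norm{\Delta\boldsymbol{p}_{12}}\ge D_s>0$ by forward invariance of $\mathcal{C}_{12}$) together with uniqueness of ODE solutions, the trajectory stays collinear for all $t$, and I can parametrize positions by scalars $s_i(t)$ along $\hat{\boldsymbol{e}}_\alpha$ with origin $\boldsymbol{p}_1(0)$, so $s_1(0)=0$, $s_2(0)=D_{init}$, $s_{d_1}=D_{G_1}$, $s_{d_2}=D_{init}-D_{G_2}$.

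In these coordinates the QPs reduce to scalar projections onto half-lines. Writing $\delta:=s_2-s_1>0$ and $g(\delta):=\tfrac{\gamma}{4}\bigl(\delta-D_s^2/\delta\bigr)$, one gets $u_1^*=\min(\hat u_1,\,g(\delta))$ and $u_2^*=\max(\hat u_2,\,-g(\delta))$, with $\hat u_1=k_{p_1}(D_{G_1}-s_1)$ and $\hat u_2=-k_{p_2}(s_2-s_{d_2})$. I would then record three monotonicity facts: (i) forward invariance of the safe set under the CBF constraint \eqref{constraint} gives $\delta(t)\ge D_s$, hence $g(\delta)\ge 0$; (ii) from $\dot s_1=u_1^*\le\hat u_1=k_{p_1}(D_{G_1}-s_1)$ a Gr\"onwall estimate on $D_{G_1}-s_1$ shows robot $1$ never reaches its goal, so $\hat u_1>0$ for all $t$, and symmetrically $\hat u_2<0$; (iii) therefore $u_1^*=\min(\hat u_1,g(\delta))\ge 0$ and $u_2^*=\max(\hat u_2,-g(\delta))\le 0$, so $s_1$ is nondecreasing, $s_2$ nonincreasing, and $\delta$ nonincreasing, with $\dot\delta=u_2^*-u_1^*<0$ strictly whenever $\delta>D_s$ (the only way both terms vanish is $g(\delta)=0$). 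Boundedness of $(s_1,s_2)$ plus local Lipschitzness gives existence for all $t\ge 0$.

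From (iii), $s_1\uparrow s_{1,\infty}\le D_{G_1}$, $s_2\downarrow s_{2,\infty}$, and $\delta\downarrow\delta_\infty\ge D_s$. Since $u_i^*$ is a continuous function of the convergent quantities $s_1,s_2,\delta$ it converges, and because $s_i$ converges its derivative limit must be $0$ (Barbalat); so $\min\bigl(k_{p_1}(D_{G_1}-s_{1,\infty}),\,g(\delta_\infty)\bigr)=0$ and $\max\bigl(-k_{p_2}(s_{2,\infty}-s_{d_2}),\,-g(\delta_\infty)\bigr)=0$. Here the hypotheses $D_{G_1},D_{G_2}>D_{init}$ carry the argument: if $s_{1,\infty}=D_{G_1}$ then $\delta_\infty=s_{2,\infty}-D_{G_1}\le D_{init}-D_{G_1}<0$, contradicting $\delta_\infty\ge D_s>0$; hence $s_{1,\infty}<D_{G_1}$, which forces $g(\delta_\infty)=0$, i.e. $\delta_\infty=D_s$. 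An identical inequality rules out $s_{2,\infty}=s_{d_2}$. Thus $\boldsymbol{u}_i^*\to\boldsymbol{0}$ while $\boldsymbol{p}_i\to\boldsymbol{p}_{i,\infty}\ne\boldsymbol{p}_{d_i}$ for $i=1,2$ (the robots asymptotically graze, $\norm{\boldsymbol{p}_{1,\infty}-\boldsymbol{p}_{2,\infty}}=D_s$), which is exactly deadlock per \cref{def:deadlock_definition}, for every $k_{p_1},k_{p_2}\in\mathbb{R}^+$ and every $D_{G_1},D_{G_2}>D_{init}$.

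The two steps I expect to be the real obstacles are: (a) making the one-dimensional reduction fully rigorous — i.e. the Lipschitz regularity of the QP solution and the invariance/uniqueness argument that keeps the trajectory on the line $\hat{\boldsymbol{e}}_\alpha$; and (b) the asymptotic endgame — showing the controls genuinely vanish (not merely that positions converge) and that $\delta_\infty=D_s$ exactly, rather than some $\delta_\infty>D_s$. Step (b) is precisely where collinearity and the assumption that each goal is farther than $D_{init}$ are indispensable, since together they prevent either robot from slipping past the other to reach its goal.
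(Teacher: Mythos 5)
Your proof is correct, but it takes a genuinely different route from the paper's. The paper proves \cref{tworobottheorem} by a three-phase decomposition: it computes explicit critical distances $\beta^i_+(t)$ (Lemmas \ref{lemma1}--\ref{lemma2}), shows the constraints activate sequentially at finite times $t_1<t_2$ (Lemma \ref{lemma3}), and then integrates the phase-3 relative dynamics \cref{relativedynamicsphase3} in closed form to get $D(t)\to D_s$ exponentially (Lemma \ref{lemma4}); this requires the ordering assumption $D_{init}>\beta^1_+>\beta^2_+$ and an implicit argument that constraints, once active, remain active. You instead collapse all phases into the single clamp formulas $u_1^*=\min(\hat u_1,g(\delta))$, $u_2^*=\max(\hat u_2,-g(\delta))$, which hold uniformly regardless of which constraint is active, and then run a monotone-convergence plus vanishing-derivative argument to pin down $\delta_\infty=D_s$ and $s_{i,\infty}\neq s_{d_i}$. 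Both proofs hinge on the same two pillars — the vanishing of the $\hat{\boldsymbol{e}}_\alpha^\perp$ velocity component (which the paper also establishes, phase by phase), and the hypothesis $D_{G_1},D_{G_2}>D_{init}$ preventing either robot from reaching its goal before grazing the other. What the paper's route buys is quantitative detail: the explicit switching times, the time-varying critical distances plotted in \cref{fig:phases}, and the exact exponential rate in \cref{dtsolution2}. What your route buys is robustness and economy: no WLOG ordering of $\beta^1_+,\beta^2_+$, no case analysis over which robot activates first, and a cleaner asymptotic endgame that does not rely on tracking constraint activation status over time. The one point to make fully rigorous in your version is the local Lipschitz continuity of the clamped vector field (immediate here since $\delta\geq D_s>0$ keeps $g$ smooth), which you correctly flag; with that, your argument is complete and reaches the same conclusion in the same asymptotic sense as the paper's Lemma \ref{lemma4}.
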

\begin{proof}
    We give the sketch of the proof to convey the overall idea and break it down into shorter lemmas.  Recall from \eqref{kkt_general} that the control input on robot $i$ at every time step, depends on the collision avoidance constraints that are active. Therefore, to begin with, we analyze the controls returned by \eqref{optimization_formulation_2} at $t=0$. In \textbf{Lemma \ref{lemma1}}, we show that if the initial distance between the robots is greater than a certain critical distance, then the collision avoidance constraints of both robots are inactive, and as a result $\boldsymbol{u}_i^*(0)=\boldsymbol{\hat{u}}_i(0) \forall i$. We then extend this analysis to future time $t>0$ and show that there exist three sequential phases of motion which are dependent on which robot's collision avoidance constraint is active/inactive:
\begin{enumerate}
    \item  Phase 1 corresponds to the duration in which the collision avoidance constraints of both robots are inactive. We show that this phase culminates in a finite time and the robots move closer to each other during this phase (\textbf{Lemma \ref{lemma2}}).
    \item Next, phase 2 begins where one robot's constraint becomes active while the other's is still inactive. This is due to the  heterogeneity $k_{p_1}\neq k_{p_2}$ and $D_{G_1}\neq D_{G_2}$. We show that phase 2 also culminates and the robots move further closer to each other  (\textbf{Lemma \ref{lemma3}}).
   \item Finally, phase 3 begins, where both constraints are active. We show that the distance between the robots converges to the safety margin $D_s$ and the robots stop moving (\textbf{Lemma \ref{lemma4}}), while still away from their goals, thus proving that they have fallen in \textit{deadlock}. \qed
\end{enumerate}
\end{proof}

For the special case of two robots, the control for robot $i$ can be written by adapting  \eqref{kkt_general},
\begin{align}
\label{generalu2robot}
\boldsymbol{u}^*_{i} &= \hat{\boldsymbol{u}}_{i} - \frac{1}{2}\mu^*_{ij}\boldsymbol{a}_{ij},
\end{align}
where $j \neq i$ and $i \in \{1,2\}$. In \cref{cs3}, we showed that the value of  Lagrange multiplier $\mu_{ij}$ depends on whether the collision avoidance constraint $\boldsymbol{a}_{ij}^T\boldsymbol{u}_i - b_{ij} \leq 0 $ is active ($=$) or inactive ($<$) at $\boldsymbol{u}_i = \boldsymbol{u}^*_i$. 
Thus, we have a check flag for evaluating the feasibility of a candidate control $\tilde{\boldsymbol{u}}_i$:
\begin{align}
    f_{ij}(\tilde{\boldsymbol{u}}_i)&=\boldsymbol{a}_{ij}^T\tilde{\boldsymbol{u}}_i -b_{ij},
\end{align}
$\forall i \in \{1,2\}$ and $j \neq i$. \textcolor{black}{If for a candidate control $\tilde{\boldsymbol{u}}_i$, $f_{ij}(\tilde{\boldsymbol{u}}_i)>0$, then $\tilde{\boldsymbol{u}}_i$ violates robot $i's$ constraint with $j$ in \cref{optimization_formulation_2}.} 
 \textcolor{black}{Let us check this flag for the nominal control $\boldsymbol{\hat{u}}_i$. Suppose that $f_{ij}(\boldsymbol{\hat{u}}_i) < 0$ \textit{i.e.} $\boldsymbol{\hat{u}}_i$ ensures that the constraint of $i$ with $j$ is feasible. Then, looking at \eqref{optimization_formulation_2}, notice that its optimum will be attained at $\boldsymbol{u}^*_i = \boldsymbol{\hat{u}}_i$ itself with zero cost (as long as there are only two robots in the system). This is because $\boldsymbol{\hat{u}}_i$ by itself satisfies the constraint and the cost function penalizes deviation from $\boldsymbol{\hat{u}}_i$. Further, $f_{ij}(\boldsymbol{u}^*_i)=f_{ij}(\boldsymbol{\hat{u}}_i)  < 0$ implies that the constraint of $i$ with $j$ is inactive.
%  the definition of inactive constraint it would follow that  from complementary slackness  \eqref{complimentarty slackness1}, it follows that $\mu^*_{ij}=0$ and from \eqref{generalu2robot} it would follow that $\boldsymbol{u}^*_i = \boldsymbol{\hat{u}}_i$. 
%  This means that the optimal control returned by \cref{optimization_formulation_2} \textit{i.e.} $\boldsymbol{u}^*_i$ is nothing but the nominal control $\boldsymbol{\hat{u}}_i$ itself, but this is only when $\boldsymbol{\hat{u}}_i$ makes the constraint strictly feasible.
}  
Let us look at the robots geometrically to see when this happens. Define critical distance for robot $i$ as (to be derived in \textbf{Lemma \ref{lemma1}}):
\begin{align}
\label{generaldcrit}
\beta^i_{+}=\frac{2D_{G_i}k_{p_i}}{\gamma} + \sqrt{\bigg(\frac{2D_{G_i}k_{p_i}}{\gamma}\bigg)^2 + D_s^2}  
 \end{align}
$\forall i \in \{1,2\}$. We now show that if the initial distance between the robots $i$ and $j$ \textit{i.e.} $D_{init.}$, is greater than $i$'s critical distance $\beta^i_{+}$, then $i's$ constraint with $j$ is inactive at $t=0$. 
\begin{lemma}
\label{lemma1}
If at $t=0$, $D_{init}>\beta^i_{+}$, then $\boldsymbol{u}^*_i(0) = \boldsymbol{\hat{u}}_i(0)$. 
\end{lemma}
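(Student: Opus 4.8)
The plan is to reduce the statement to a single scalar inequality and then analyze a quadratic. The key ingredient is the observation already made in this section: for a two-robot system, if the nominal control satisfies robot $i$'s lone collision-avoidance constraint strictly, i.e. $f_{ij}(\hat{\boldsymbol{u}}_i) < 0$, then the QP \eqref{optimization_formulation_2} attains its minimum (zero cost) at $\boldsymbol{u}^*_i = \hat{\boldsymbol{u}}_i$ and that constraint is inactive. Hence it suffices to show $f_{ij}(\hat{\boldsymbol{u}}_i)\big|_{t=0} < 0$ whenever $D_{init} > \beta^i_+$.

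First I would evaluate $\boldsymbol{a}_{ij}$, $b_{ij}$ and $\hat{\boldsymbol{u}}_i$ at $t=0$ using the collinear configuration \eqref{initial_positions_two_robots}--\eqref{goal_positions_two_robots}. Since $\boldsymbol{p}_2(0)-\boldsymbol{p}_1(0) = D_{init}\hat{\boldsymbol{e}}_\alpha$, we get $\boldsymbol{a}_{12}(0) = -\Delta\boldsymbol{p}_{12}(0) = D_{init}\hat{\boldsymbol{e}}_\alpha$, $\boldsymbol{a}_{21}(0) = -D_{init}\hat{\boldsymbol{e}}_\alpha$, and $b_{12}(0) = b_{21}(0) = \tfrac{\gamma}{4}(D_{init}^2 - D_s^2)$. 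From the goal definitions, $\hat{\boldsymbol{u}}_1(0) = k_{p_1}D_{G_1}\hat{\boldsymbol{e}}_\alpha$ and $\hat{\boldsymbol{u}}_2(0) = -k_{p_2}D_{G_2}\hat{\boldsymbol{e}}_\alpha$. Using $\hat{\boldsymbol{e}}_\alpha^T\hat{\boldsymbol{e}}_\alpha = 1$, both index orderings collapse to the same scalar expression
\begin{align}
f_{ij}(\hat{\boldsymbol{u}}_i)\big|_{t=0} = k_{p_i}D_{G_i}D_{init} - \frac{\gamma}{4}\big(D_{init}^2 - D_s^2\big), \qquad i\in\{1,2\},\ j\neq i. \nonumber
\end{align}

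Next I would treat $f_{ij}(\hat{\boldsymbol{u}}_i)\big|_{t=0} < 0$ as a quadratic inequality in $D_{init}$: it is equivalent to $D_{init}^2 - \tfrac{4 k_{p_i}D_{G_i}}{\gamma}D_{init} - D_s^2 > 0$. The associated quadratic opens upward; since the product of its roots is $-D_s^2 < 0$, it has exactly one positive root, which is precisely $\beta^i_+$ of \eqref{generaldcrit} (this is how the formula \eqref{generaldcrit} gets "derived"), and one negative root that is irrelevant because $D_{init}>0$. Therefore the inequality holds iff $D_{init} > \beta^i_+$. Combining with the reduction of the first paragraph yields $\boldsymbol{u}^*_i(0) = \hat{\boldsymbol{u}}_i(0)$, proving the lemma.

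There is no deep obstacle here; the argument is a direct computation followed by sign analysis of a quadratic. The only points that need care are (i) tracking signs through $\boldsymbol{a}_{ij} = -\Delta\boldsymbol{p}_{ij}$ for both orderings of the index pair so that the two robots indeed produce the same scalar condition, and (ii) justifying that the negative root may be discarded -- which incidentally confirms $D_{init} > \beta^i_+ > D_s$, so that $b_{ij}(0) > 0$ and the robots start safely, consistent with the standing assumption $h_{ij}(\boldsymbol{p}_i(0),\boldsymbol{p}_j(0)) \geq 0$.
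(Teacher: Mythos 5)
Your proposal is correct and follows essentially the same route as the paper's proof in the appendix: evaluate $f_{ij}(\hat{\boldsymbol{u}}_i)$ at $t=0$ to obtain the scalar quadratic $-\tfrac{\gamma}{4}D_{init}^2 + k_{p_i}D_{G_i}D_{init} + \tfrac{\gamma}{4}D_s^2$ in $D_{init}$, identify $\beta^i_+$ as its unique positive root, and conclude negativity (hence inactivity and $\boldsymbol{u}^*_i(0)=\hat{\boldsymbol{u}}_i(0)$) for $D_{init}>\beta^i_+$. The only cosmetic difference is that you normalize to an upward-opening quadratic and use the product-of-roots sign argument where the paper reads the sign off a downward-facing parabola.
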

\hspace{-0.4cm}\textbf{Intuition:} This result says that if at $t=0$, robot $i$ is far from $j$ (specifically at-least $\beta^i_+$ distance away), then $i$ can simply use it's prescribed nominal control $\boldsymbol{\hat{u}}_i$ at $t=0$, and not worry about a collision with $j$ at-least until the next instant. 
\begin{proof}
\textcolor{black}{See appendix (\ref{lemma1_appendix}).}
\end{proof}
\begin{assumption}
\label{ass2}
WLOG assume that $D_{init}>\beta^1_+>\beta^2_+$
\end{assumption}
From assumption \eqref{ass2}, and the result of \textbf{Lemma} \ref{lemma1}, it follows that the collision avoidance constraints of both robots are inactive at $t=0$. Thus, $\boldsymbol{u}^*_i(0) = \boldsymbol{\hat{u}}_i(0)$ for $i=\{1,2\}$. If we give these velocities to robots for a small time $\Delta t$, the distance between the robots and the critical distance will change. So at the next instant, we will compare the \textit{updated inter-robot distance}  with the \textit{updated critical distances} to decide whether $\boldsymbol{u}^*_i(\Delta t) = \boldsymbol{\hat{u}}_i(\Delta t)$ and so on. Therefore, one can assume that these constraints remain inactive for some finite time, which is precisely the duration of phase 1.
\subsubsection{\textbf{Phase 1:}}
\textcolor{black}{We define phase 1 as the period in which the constraints of both robots are inactive \textit{i.e.} both their flags are strictly negative when evaluated using their nominal controllers. In other words,}
\begin{align}
\label{constraints_time1}
    &f_{12}(\boldsymbol{\hat{u}}_1(t))=\boldsymbol{a}^T_{12}(t)\boldsymbol{\hat{u}}_1(t) - b_{12}(t) <0  \\
    &f_{21}(\boldsymbol{\hat{u}}_2(t))=\boldsymbol{a}^T_{21}(t)\boldsymbol{\hat{u}}_2(t) - b_{21}(t) <0  
\end{align}
Mathematically, the duration of phase 1 is given by,
\begin{align}
\label{time_def}
    t_1 \coloneqq \sup_{t>0}\{t|f_{12}(\boldsymbol{\hat{u}}_1(t))<0,f_{21}(\boldsymbol{\hat{u}}_2(t))<0\},
\end{align}
\textcolor{black}{\textit{i.e.} the maximum time until which the nominal controllers of both robots remain feasible. So until $t_1$, we have
\begin{align}
\label{robot12inactive}
   \dot{\boldsymbol{p}}_1 = \boldsymbol{u}^*_{1} = \hat{\boldsymbol{u}}_{1} \mbox{ and  } 
   \dot{\boldsymbol{p}}_2 =\boldsymbol{u}^*_{2} = \hat{\boldsymbol{u}}_{2}
\end{align}
Now given the initial positions and desired goals as in \cref{fig:deadlock_in_two_cartoon}, it cannot be the case that both robots continue to use their nominal controllers indefinitely  to move to their goals because that would result in a collision. Thus, a time will come when using the nominal control is no longer feasible for at-least one robot. This is precisely the time $t_1$ defined in Def. \eqref{time_def} and we prove its existence in the next lemma. }
\begin{lemma}
\label{lemma2}
$\exists$ a finite time $t_1$ as described in Def. \eqref{time_def}, until which the collision avoidance constraints of both robots are simultaneously inactive. 
\end{lemma}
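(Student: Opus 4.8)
The plan is to show that $t_1$ is both strictly positive and finite. Positivity is immediate from what precedes: \textbf{Lemma \ref{lemma1}} together with \textbf{Assumption \ref{ass2}} gives $f_{12}(\boldsymbol{\hat{u}}_1(0))<0$ and $f_{21}(\boldsymbol{\hat{u}}_2(0))<0$, and since $\boldsymbol{\hat{u}}_i$, $\boldsymbol{a}_{ij}$ and $b_{ij}$ are continuous functions of the (continuous) robot positions, both flags remain strictly negative on a neighbourhood of $t=0$; hence the set in Def.~\eqref{time_def} is non-empty and $t_1>0$. The substance of the lemma is finiteness, which I would establish by contradiction: assume $t_1=\infty$, so that $\dot{\boldsymbol{p}}_i=\boldsymbol{\hat{u}}_i=-k_{p_i}(\boldsymbol{p}_i-\boldsymbol{p}_{d_i})$ holds for all $t\ge 0$ by \eqref{robot12inactive}.

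The first step is to record that $h_{12}$ stays positive throughout phase 1. Adding the two flag inequalities and using $\boldsymbol{a}_{12}=-\Delta\boldsymbol{p}_{12}$, $\boldsymbol{a}_{21}=\Delta\boldsymbol{p}_{12}$, $b_{12}=b_{21}=\tfrac{\gamma}{4}h_{12}$, and $\boldsymbol{\hat{u}}_i=\dot{\boldsymbol{p}}_i$ on $[0,t_1)$ yields $-\Delta\boldsymbol{p}_{12}^T\tfrac{d}{dt}\Delta\boldsymbol{p}_{12}<\tfrac{\gamma}{2}h_{12}$, i.e. $\dot h_{12}>-\gamma h_{12}$, since $\tfrac{d}{dt}h_{12}=2\Delta\boldsymbol{p}_{12}^T\tfrac{d}{dt}\Delta\boldsymbol{p}_{12}$. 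Because the initial positions are collision-free, $h_{12}(0)=D_{init}^2-D_s^2\ge 0$, so the comparison lemma gives $h_{12}(t)>h_{12}(0)e^{-\gamma t}\ge 0$ for every $t>0$ at which phase 1 holds; under the standing assumption $t_1=\infty$ this means $\norm{\Delta\boldsymbol{p}_{12}(t)}>D_s$ for all $t>0$.

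The second step exploits collinearity. Since $\boldsymbol{\hat{u}}_i$ points along $\hat{\boldsymbol{e}}_\alpha$ and the initial positions and goals in \eqref{initial_positions_two_robots}--\eqref{goal_positions_two_robots} all lie on that line, both trajectories remain on it; writing $\boldsymbol{p}_i(t)=\boldsymbol{p}_1(0)+s_i(t)\hat{\boldsymbol{e}}_\alpha$ reduces the phase-1 dynamics to the scalar linear ODEs $\dot s_1=-k_{p_1}(s_1-D_{G_1})$, $s_1(0)=0$, and $\dot s_2=-k_{p_2}(s_2-(D_{init}-D_{G_2}))$, $s_2(0)=D_{init}$, with solutions $s_1(t)=D_{G_1}(1-e^{-k_{p_1}t})$ and $s_2(t)=(D_{init}-D_{G_2})+D_{G_2}e^{-k_{p_2}t}$, and $\norm{\Delta\boldsymbol{p}_{12}(t)}=|\Delta s(t)|$ with $\Delta s\coloneqq s_2-s_1$. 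Then $\Delta s$ is continuous, $\Delta s(0)=D_{init}>0$, and $\lim_{t\to\infty}\Delta s(t)=D_{init}-D_{G_1}-D_{G_2}<-D_{init}<0$ because $D_{G_1},D_{G_2}>D_{init}$. By the intermediate value theorem there is a finite $t^\star>0$ with $\Delta s(t^\star)=0$, whence $h_{12}(t^\star)=-D_s^2<0$, contradicting the invariant of the previous paragraph. Therefore $t_1<\infty$.

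I expect the main obstacle to be bookkeeping rather than anything conceptual: making the reduction to the scalar coordinate $s_i$ fully rigorous (invariance of the line under the phase-1 flow, and $\norm{\Delta\boldsymbol{p}_{12}}=|\Delta s|$), and checking that the ``flags negative $\Rightarrow h_{12}>0$'' step invokes only the phase-1 dynamics so there is no circularity with the definition of $t_1$. One minor edge case deserves a remark: if $D_{init}=D_s$ (robots initially grazing) then $h_{12}(0)=0$, but the strict differential inequality $\dot h_{12}>-\gamma h_{12}$ still forces $h_{12}(t)>0$ for $t>0$, so the argument goes through unchanged.
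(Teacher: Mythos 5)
Your proof is correct, but it takes a genuinely different route from the paper's. The paper works constructively: it integrates the nominal dynamics in closed form, writes the flag $f_{12}(\hat{\boldsymbol{u}}_1(t))$ as a downward-facing parabola in the signed separation $D(t)$, extracts the time-varying critical distance $\beta^1_{+}(t)$ in \cref{time_crit_robot_1}, and concludes by noting that $D(t)$ decreases monotonically to the negative limit $-(D_{G_1}+D_{G_2}-D_{init})$ while $\beta^1_{+}(t)$ decreases to $D_s>0$, so the two curves must cross at a finite $t_a$; taking $t_1=\min\{t_a,t_b\}$ also identifies \emph{which} robot's constraint activates first. Your argument is by contradiction and is more conceptual: you observe that the two negative flags sum to exactly the CBF inequality $\dot h_{12}>-\gamma h_{12}$, which forces $h_{12}>0$ for all time, whereas the unconstrained nominal flow drives $\Delta s$ through zero in finite time by the intermediate value theorem, so both flags cannot stay negative forever. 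This buys a cleaner explanation of \emph{why} a constraint must activate (negative flags certify safety, but the nominal flow is unsafe) and avoids the parabola bookkeeping; what it does not deliver is the explicit curves $\beta^i_{+}(t)$ and the ordering $t_a<t_b$, both of which the paper reuses in the Phase 2 and Phase 3 analysis (Lemmas \ref{lemma3} and \ref{lemma4}), so if you adopted your proof you would need to recover that information separately. Your reduction to the scalar coordinate $s_i$ and the comparison-lemma step are both sound, and the edge case $D_{init}=D_s$ you flag cannot actually occur under Assumption \ref{ass2}, since $\beta^1_{+}>D_s$.
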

\hspace{-0.4cm}\textbf{Intuition:} This result states that for sometime, both robots may continue to use their nominal controllers and move without ``worrying" about collisions with the other. Nevertheless, eventually at $t_1$, the inter-robot distance will fall below a critical threshold for one robot, at which point, its collision avoidance constraint will get activated. For example, in \cref{fig:phases}, the dark green curve (the inter-robot distance) begins to fall below the blue curve (the critical distance of robot one \cref{time_crit_robot_1}) at $t_1$. Thus, robot one can no longer use its nominal control starting from $t_1$.
\begin{proof}
\textcolor{black}{See appendix \ref{lemma2_appendix}. }
\end{proof}
The critical distance for robot one (blue curve in \cref{fig:phases}) is derived in the appendix \ref{lemma2_appendix} and is given by:
\begin{align}
\label{time_crit_robot_1}
\beta^1_{+}(t)=\frac{2D_{G_1}k_{p_1}e^{-k_{p_1}t}}{\gamma} + \sqrt{\bigg(\frac{2D_{G_1}k_{p_1}e^{-k_{p_1}t}}{\gamma} \bigg)^2 + D_s^2}    
\end{align}
Thus, phase 1 ends at $t_1$ marking the beginning of phase 2.  
\subsubsection{\textbf{Phase 2:}}
\textcolor{black}{We define this phase as the period in which the constraint of robot one has become active \textit{i.e.}  $\boldsymbol{a}^T_{12}\boldsymbol{u}^*_{1}=b_{12}$ while that of robot two is inactive \textit{i.e.} $\boldsymbol{a}^T_{21}\boldsymbol{u}^*_{2}<b_{21}$. So for $t\geq t_1$, the dynamics of the robots are}
\begin{align}
   &\dot{\boldsymbol{p}}_1=\boldsymbol{u}^*_{1} = \hat{\boldsymbol{u}}_{1}- \frac{1}{2}\mu_{12}\boldsymbol{a}_{12}\hspace{0.1cm},\mbox{where}\hspace{0.1cm} \mu_{12} = 2\frac{\boldsymbol{a}^T_{12}\boldsymbol{\hat{u}}_{1}-b_{12}}{\norm{\boldsymbol{a}_{12}}^2} \nonumber \\
   &\dot{\boldsymbol{p}}_2=\boldsymbol{u}^*_{2} = \hat{\boldsymbol{u}}_{2} \nonumber
\end{align}
\textcolor{black}{ Here we derived $\boldsymbol{u}^*_{1}$ using \cref{generalu2robot}.} With some calculation, we can show that the components of velocities in $\boldsymbol{u}^*_{\{1,2\}}$ perpendicular to $\hat{\boldsymbol{e}}_{\alpha}$ vanish. Now, owing to the inevitability of a collision should robot two choose to continue using $\hat{\boldsymbol{u}}_{2}$ forever, eventually, the constraint of robot two will also switch from inactive to active (\ref{lemma3_appendix}). Therefore, similar to the definition of $t_1$, we define the duration of phase 2 until the time for which the constraint of robot two stays inactive \textit{i.e.}
\begin{align}
\label{time_def2}
    t_2 \coloneqq \sup_{t>t_1}\{t|f_{21}(\boldsymbol{\hat{u}}_2(t))<0\},
\end{align}
\begin{figure}[t]
    \centering
    \includegraphics[trim={0cm 0.0cm 0cm 0cm},clip,width=1\linewidth]{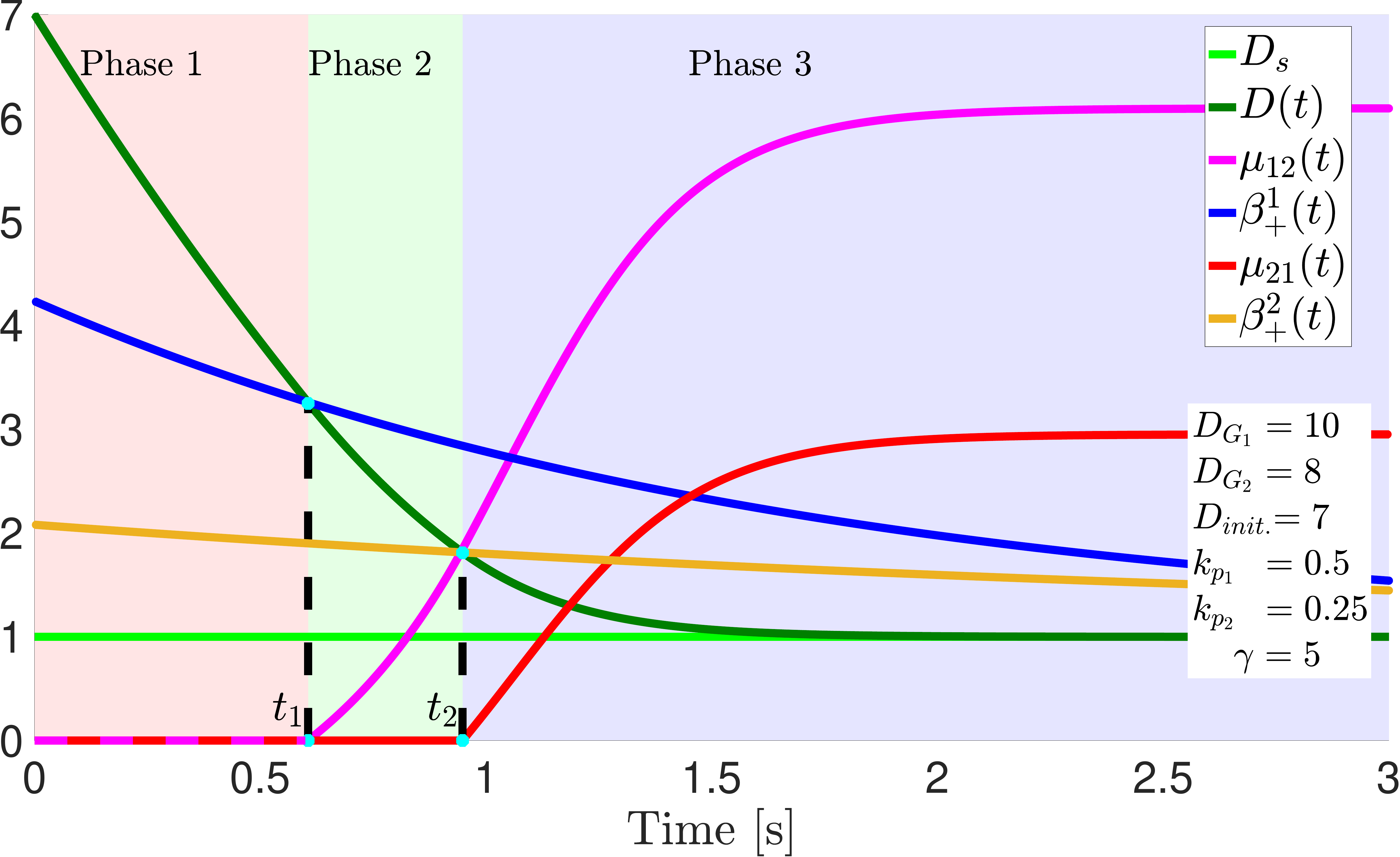}
    \caption{Simulation results for two-robot deadlock. Simulation parameters shown in bottom right panel. Note at $t=t_1$, $D(t)=\beta^1_+(t)$ and $\mu_{12}(t)$ switches on, similarly at $t=t_2$,  $\mu_{21}(t)$ switches on. Finally, $D(t) \longrightarrow D_s$.}
    \label{fig:phases}
\end{figure}
\begin{lemma}
\label{lemma3}
$\exists$ a finite time $t_2$ as in Def. \eqref{time_def2}, until which the constraint  of robot two stays inactive.
\end{lemma}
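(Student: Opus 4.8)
The plan is to recast the Phase 2 dynamics as a one–dimensional system along $\hat{\boldsymbol{e}}_\alpha$ and then rule out $t_2=\infty$ by a purely kinematic contradiction, rather than by a rate comparison. By the collinearity in \cref{fig:deadlock_in_two_cartoon} and the fact established in the main text that the components of $\boldsymbol{u}^*_{\{1,2\}}$ orthogonal to $\hat{\boldsymbol{e}}_\alpha$ vanish for all $t$, write $\boldsymbol{p}_i(t)=\boldsymbol{p}_1(0)+s_i(t)\hat{\boldsymbol{e}}_\alpha$, so that \eqref{initial_positions_two_robots}--\eqref{goal_positions_two_robots} give $s_1(0)=0$, $s_2(0)=D_{init}$, $s_{d_1}=D_{G_1}>0$ and, crucially, $s_{d_2}=D_{init}-D_{G_2}<0$ (this strict inequality is where the standing assumption $D_{G_2}>D_{init}$ enters). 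Put $D(t):=s_2(t)-s_1(t)=\norm{\Delta\boldsymbol{p}_{12}(t)}$; since throughout Phase 2 both decentralized constraints in \eqref{decentralizedconstraints} hold (robot one's with equality, robot two's strictly), the pairwise CBF inequality \eqref{constraint} holds, so $D(t)\ge D_s$ on the whole of Phase 2.

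I would then nail down the two scalar trajectories. Robot two uses $\boldsymbol{u}^*_2=\hat{\boldsymbol{u}}_2$ in Phase 2 (its constraint is inactive) and has done so since $t=0$, so $\dot s_2=-k_{p_2}(s_2-s_{d_2})$ and hence $s_2(t)-s_{d_2}=D_{G_2}e^{-k_{p_2}t}$; in particular $s_2$ is strictly decreasing with $\lim_{t\to\infty}s_2(t)=s_{d_2}<0$. Robot one has its constraint active, and substituting the active-constraint multiplier into \eqref{generalu2robot} and projecting onto $\hat{\boldsymbol{e}}_\alpha$ produces the cancellation $\dot s_1=u^*_1=\frac{\gamma}{4D}(D^2-D_s^2)\ge 0$ (equivalently, this is just robot one's saturated constraint $\boldsymbol{a}^T_{12}\boldsymbol{u}^*_1=b_{12}$). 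Thus $s_1$ is nondecreasing in Phase 2, and since robot one moved strictly forward during Phase 1 (where $s_1(t)=D_{G_1}(1-e^{-k_{p_1}t})$, giving $s_1(t_1)>0$), we get $s_1(t)>0$ throughout Phase 2. Finally, evaluating robot two's flag, $f_{21}(\hat{\boldsymbol{u}}_2(t))=\boldsymbol{a}^T_{21}\hat{\boldsymbol{u}}_2-b_{21}=k_{p_2}D_{G_2}e^{-k_{p_2}t}D(t)-\frac{\gamma}{4}\big(D(t)^2-D_s^2\big)$, the same quadratic-root computation as in \textbf{Lemma \ref{lemma1}} shows $f_{21}(\hat{\boldsymbol{u}}_2(t))<0\iff D(t)>\beta^2_+(t)$, where $\beta^2_+(t)=\frac{2k_{p_2}D_{G_2}e^{-k_{p_2}t}}{\gamma}+\sqrt{\big(\frac{2k_{p_2}D_{G_2}e^{-k_{p_2}t}}{\gamma}\big)^2+D_s^2}>D_s$, the exact analogue of \eqref{time_crit_robot_1}. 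So ``Phase 2 is ongoing at time $t$'' is equivalent to ``$D(t)>\beta^2_+(t)$''.

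The finiteness of $t_2$ now follows by contradiction. Suppose $t_2=\infty$, so Phase 2 persists for all $t>t_1$. For every such $t$ we then have, on one side, $D(t)>\beta^2_+(t)>D_s$, and on the other side $D(t)=s_2(t)-s_1(t)<s_2(t)$ because $s_1(t)>0$; together these force $s_2(t)>D_s$ for all $t>t_1$. But $s_2(t)\to s_{d_2}=D_{init}-D_{G_2}<0<D_s$, so $s_2(t)<D_s$ for all sufficiently large $t$ — a contradiction. Hence $t_2<\infty$; and by continuity of $t\mapsto f_{21}(\hat{\boldsymbol{u}}_2(t))$ together with $f_{21}(\hat{\boldsymbol{u}}_2(t_1))<0$, the supremum is attained with $f_{21}(\hat{\boldsymbol{u}}_2(t_2))=0$, i.e.\ $D(t_2)=\beta^2_+(t_2)$ and robot two's collision-avoidance constraint switches on, beginning Phase 3.

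The step I expect to carry the argument is the last paragraph's kinematic squeeze. The tempting alternative — showing directly that the monotonically decreasing $D(t)$ dips below $\beta^2_+(t)$ by comparing their decay rates — is genuinely awkward, because both $D(t)$ and $\beta^2_+(t)$ tend to $D_s$ and the ordering of their exponential rates flips with the sign of $\gamma/2-k_{p_2}$. The observation $D(t)<s_2(t)\to s_{d_2}<D_s$ avoids all of this; the only preliminary care it needs is the sign statement $s_1(t)>0$ on Phase 2, which in turn relies on $\dot s_1=\frac{\gamma}{4D}(D^2-D_s^2)>0$ being valid precisely because $D>D_s$ there.
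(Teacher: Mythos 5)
Your proof is correct, and while it shares the paper's Phase-2 setup (the vanishing of the components of $\boldsymbol{u}^*_{\{1,2\}}$ orthogonal to $\hat{\boldsymbol{e}}_\alpha$, the closed-form nominal trajectory of robot two, and robot one's saturated-constraint dynamics $\dot{\boldsymbol{p}}_1=\gamma\frac{D^2-D_s^2}{4D}\hat{\boldsymbol{e}}_\alpha$), it closes the argument by a genuinely different mechanism. The paper stays entirely in the relative coordinate: it derives $\dot D=-\gamma\frac{D^2-D_s^2}{4D}-k_{p_2}D_{G_2}e^{-k_{p_2}t}$, observes that $D$ is monotonically decreasing while $D>D_s$, introduces $t_c$ as the time at which $D$ reaches $D_s$, and concludes by an intermediate-value crossing of $D$ with $\beta^2_{+}(t)>D_s$ on $[t_1,t_c]$. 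That argument quietly presupposes that $t_c$ is finite — equivalently, that $D$ cannot hover above $\beta^2_{+}$ while both decay to $D_s$ — which is exactly the delicate rate comparison you flag as awkward. Your absolute-coordinate squeeze ($s_1$ nondecreasing and positive throughout Phase 2, $s_2\to s_{d_2}<0$, hence $D=s_2-s_1<s_2$ must eventually drop below $D_s$, contradicting $D>\beta^2_{+}>D_s$) supplies precisely the missing quantitative fact — robot two still has $D_{G_2}e^{-k_{p_2}t_1}>D(t_1)-D_s$ of travel remaining — and in that sense is tighter than the paper's own version while using only the standing assumption $D_{G_2}>D_{init}$. The one ingredient you import without reproof is the vanishing of robot one's perpendicular velocity component, which the paper's appendix establishes by a recursion argument; since the content of this lemma is the finiteness of $t_2$, leaning on that stated fact is reasonable.
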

\textcolor{black}{\hspace{-0.4cm}\textbf{Intuition:} Eventually there will come a time $t_2$ when the inter-robot distance will fall just below the critical threshold of robot two as well, just the way it did for robot one at $t_1$. So at this time, robot two's collision avoidance constraint will become active (see \cref{fig:phases} where the dark green curve (the inter-robot distance) intersects the yellow curve (the critical distance of robot two \cref{time_crit_robot_2}). }
\begin{proof}
\textcolor{black}{See appendix \ref{lemma3_appendix}.}
\end{proof}
The critical distance for robot two (yellow curve in \cref{fig:phases}) is defined similar to \cref{time_crit_robot_1} as:
\begin{align}
\label{time_crit_robot_2}
\beta^2_{+}(t)=\frac{2D_{G_2}k_{p_2}e^{-k_{p_2}t}}{\gamma} + \sqrt{\bigg(\frac{2D_{G_2}k_{p_2}e^{-k_{p_2}t}}{\gamma} \bigg)^2 + D_s^2}    
\end{align}
\subsubsection{\textbf{Phase 3:}}
\textcolor{black}{This phase begins at $t_2$ and in this  phase, the constraints of both robots are active \textit{i.e.} $\boldsymbol{a}^T_{12}\boldsymbol{u}^*_{1}=b_{12}$ and $\boldsymbol{a}^T_{21}\boldsymbol{u}^*_{2}=b_{21}$. So for $t\geq t_2$, the dynamics of the robots are}
\begin{align}
   &\dot{\boldsymbol{p}}_1=\boldsymbol{u}^*_{1} = \hat{\boldsymbol{u}}_{1}- \frac{1}{2}\mu_{12}\boldsymbol{a}_{12}\hspace{0.1cm},\mbox{where}\hspace{0.1cm} \mu_{12} = 2\frac{\boldsymbol{a}^T_{12}\boldsymbol{\hat{u}}_{1}-b_{12}}{\norm{\boldsymbol{a}_{12}}^2} \nonumber \\
   &\dot{\boldsymbol{p}}_2=\boldsymbol{u}^*_{2} = \hat{\boldsymbol{u}}_{2}- \frac{1}{2}\mu_{21}\boldsymbol{a}_{21}\hspace{0.1cm},\mbox{where}\hspace{0.1cm} \mu_{21} = 2\frac{\boldsymbol{a}^T_{21}\boldsymbol{\hat{u}}_{2}-b_{21}}{\norm{\boldsymbol{a}_{21}}^2}. \nonumber
\end{align}
We can show that the velocities of both robots \textit{i.e.} $\boldsymbol{u}^*_{\{1,2\}}$ have no component perpendicular to $\hat{\boldsymbol{e}}_{\alpha}$. Recall that $\boldsymbol{a}_{12}=-\boldsymbol{a}_{21}=\Delta\boldsymbol{p}_{21}$ is parallel to $\hat{\boldsymbol{e}}_{\alpha}$. Additionally,  $b_{12}=b_{21}=\frac{\gamma}{4}(\norm{\Delta \boldsymbol{p}_{21}}-D_s^2)$. Using these,  we can simplify the dynamics of the robots to
\begin{align}
\label{phasethreedynamics}
   & \dot{\boldsymbol{p}}_1 =+\gamma\frac{\norm{\Delta \boldsymbol{p}_{21}}^2-D_s^2}{4\norm{\Delta \boldsymbol{p}_{21}}^2}\Delta \boldsymbol{p}_{21} \\
    &\dot{\boldsymbol{p}}_2 =-\gamma\frac{\norm{\Delta \boldsymbol{p}_{21}}^2-D_s^2}{4\norm{\Delta \boldsymbol{p}_{21}}^2}\Delta \boldsymbol{p}_{21} 
\end{align}
\begin{align}
\label{relativedynamicsphase3}
 \implies \dot{\Delta \boldsymbol{p}}_{21}=-\gamma\frac{\norm{\Delta \boldsymbol{p}_{21}}^2-D_s^2}{2\norm{\Delta \boldsymbol{p}_{21}}^2}\Delta \boldsymbol{p}_{21} 
\end{align}
\textcolor{black}{Given this inter-robot dynamics, we show in \ref{lemma4_appendix} that the asymptotic solution of \cref{relativedynamicsphase3} proves that $\norm{\Delta \boldsymbol{p}_{21}(t) }$ \textit{i.e.} the inter-robot distance converges to $D_s$ and additionally, both robots come to a halt while still being away from their goals. This establishes the incidence of deadlock.}
\begin{lemma}
\label{lemma4}
The distance between robots converges to the safety margin $D_s$ at which point they stop moving and fall in deadlock.
\end{lemma}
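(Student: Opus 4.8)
The plan is to collapse the coupled vector field \eqref{phasethreedynamics}--\eqref{relativedynamicsphase3} into a single scalar ODE for the inter-robot distance $D(t) \coloneqq \norm{\Delta \boldsymbol{p}_{21}(t)}$, analyze that ODE, and then read the conclusion back off in terms of the control inputs and positions. Differentiating $D$ and substituting \eqref{relativedynamicsphase3} (using $\tfrac{d}{dt}\norm{\boldsymbol v} = \boldsymbol v^{T}\dot{\boldsymbol v}/\norm{\boldsymbol v}$) gives the autonomous equation
\begin{align}
\label{scalarD_sketch}
\dot D = -\frac{\gamma}{2}\,\frac{D^{2}-D_s^{2}}{D},
\end{align}
valid for all $t \ge t_2$; the phase-3 computation already noted (vanishing of the components perpendicular to $\hat{\boldsymbol{e}}_\alpha$) keeps the whole configuration collinear, so no information is lost in passing to $D$.

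Next I would analyze \eqref{scalarD_sketch}. Since the CBF constraints are active in phase~3, safety is maintained, so $h_{12}(t)\ge 0$, i.e. $D(t)\ge D_s$; and because $D=D_s$ is the unique equilibrium of \eqref{scalarD_sketch}, uniqueness of solutions forbids reaching it in finite time, so $D(t)>D_s$ for every finite $t$. On $(D_s,\infty)$ the right-hand side of \eqref{scalarD_sketch} is strictly negative, hence $D(\cdot)$ is strictly decreasing and bounded below by $D_s$, so $D(t)\to D_\infty\ge D_s$; the limit must be an equilibrium, forcing $D_\infty=D_s$. (If a quantitative rate is wanted, one can separate variables in \eqref{scalarD_sketch} explicitly, or linearize near $D_s$, where $\dot{\overline{(D-D_s)}}\approx -\gamma(D-D_s)$, to exhibit exponential convergence.) Feeding this back into \eqref{phasethreedynamics}, $\norm{\boldsymbol{u}^*_1}=\norm{\boldsymbol{u}^*_2}=\gamma\,(D^{2}-D_s^{2})/(4D)\to 0$ as $D\to D_s$, so both control inputs vanish asymptotically and the robots come to rest.

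It then remains to verify that the rest configuration satisfies Definition~\ref{def:deadlock_definition}, i.e. $\boldsymbol{p}_i\neq\boldsymbol{p}_{d_i}$ in the limit. I would track the scalar coordinates $z_i(t)$ of robot $i$ along $\hat{\boldsymbol{e}}_\alpha$ relative to $\boldsymbol{p}_1(0)$, so $z_1(0)=0$, $z_2(0)=D_{init}$, $z_{d_1}=D_{G_1}$, and $z_{d_2}=D_{init}-D_{G_2}<0$. In phase~1 the nominal controllers drive $z_1$ upward toward $D_{G_1}$ and $z_2$ downward toward $D_{init}-D_{G_2}$; in phases~2 and~3, activity of $\boldsymbol{a}^{T}_{ij}\boldsymbol{u}^*_i=b_{ij}$ pins $\boldsymbol{u}^*_1\cdot\hat{\boldsymbol{e}}_\alpha=\gamma(D^{2}-D_s^{2})/(4D)>0$ and $\boldsymbol{u}^*_2\cdot\hat{\boldsymbol{e}}_\alpha=-\gamma(D^{2}-D_s^{2})/(4D)<0$, the same signs as in phase~1. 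Hence $z_1$ is nondecreasing and $z_2$ nonincreasing throughout, with $z_2(t)-z_1(t)\to D_s$. From $z_2(t)\le z_2(0)=D_{init}$ I get $z_1(\infty)\le D_{init}-D_s<D_{init}<D_{G_1}$, and from $z_1(t)\ge z_1(0)=0$ I get $z_2(\infty)\ge D_s>0>D_{init}-D_{G_2}$. Thus each robot halts strictly short of its goal, so $\hat{\boldsymbol{u}}_i\neq\boldsymbol{0}$ while $\boldsymbol{u}^*_i\to\boldsymbol{0}$ — exactly deadlock.

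The routine part is the scalar ODE analysis in \eqref{scalarD_sketch}. The two delicate points I expect to be the real work are: (i) making rigorous that $D(t)>D_s$ strictly for all finite $t$, so that \eqref{scalarD_sketch} is well posed and $\boldsymbol{u}^*_i\neq\boldsymbol 0$ strictly before the limit — this rests on the barrier (forward-invariance) property of the CBF constraint together with the earlier phase lemmas, not on a new computation; and (ii) the monotonicity bookkeeping of the last step, which is what actually excludes the degenerate possibility that the robots happen to coast to their goals as they stop, and which is where the hypotheses $D_{G_1},D_{G_2}>D_{init}$ get used.
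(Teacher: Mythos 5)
Your proposal is correct and follows the same overall route as the paper's proof in Appendix \ref{lemma4_appendix}: reduce \eqref{relativedynamicsphase3} to the scalar ODE $\dot D=-\gamma(D^2-D_s^2)/(2D)$ with $D(t_2)=\beta^2_+(t_2)>D_s$, conclude $D\to D_s$, read off from \eqref{phasethreedynamics} that both velocities vanish, and then check the robots are not at their goals. Two points differ, both in your favor as a matter of rigor but neither changing the substance. First, the paper integrates the ODE in closed form, $D(t)=\sqrt{(D^2(t_2)-D_s^2)e^{-\gamma(t-t_2)}+D_s^2}$, which gives the exponential rate for free; your monotone-and-bounded argument reaches the same limit without the explicit solution. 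Second, for the ``not at goals'' step the paper only observes that $\lim_t\Delta\boldsymbol{p}_{21}=D_s\hat{\boldsymbol{e}}_\alpha$ while $\Delta\boldsymbol{p}_{d_{21}}=-(D_{G_1}+D_{G_2}-D_{init})\hat{\boldsymbol{e}}_\alpha$ is anti-parallel, which strictly speaking only rules out \emph{both} robots being at their goals simultaneously; your coordinate bookkeeping along $\hat{\boldsymbol{e}}_\alpha$ (with $z_1$ nondecreasing, $z_2$ nonincreasing, and the hypotheses $D_{G_1},D_{G_2}>D_{init}$) pins down that \emph{each} robot individually halts strictly short of its goal, which is what Definition \ref{def:deadlock_definition} actually requires per robot. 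No gaps.
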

\begin{proof} \textcolor{black}{See appendix \ref{lemma4_appendix}.}
\end{proof}
% \begin{proof}
% Letting $\Delta \boldsymbol{p}_{21}(t)=D(t)\hat{\boldsymbol{e}}_\alpha$  in  \eqref{relativedynamicsphase3}
% \begin{align}
%     \dot{D}(t) = -\gamma \frac{D^2(t) -D_s^2}{2D(t)}
% \end{align}
% where $D(t_2)=\beta^2_{+}(t_2)>D_s$. The solution to this  is 
% \begin{align}
% \label{dtsolution2}
%     D(t) = \sqrt{(D^2(t_2)-D_s^2)e^{-\gamma (t-t_2)} + D_s^2}
% \end{align}
% Therefore $D(t)\longrightarrow D_s$ and likewise, $\norm{\Delta \boldsymbol{p}_{12}(t)}\longrightarrow D_s$. Therefore, the robots will be just at the verge of colliding.  Moreover, note from  \eqref{phasethreedynamics} that as $\norm{\Delta \boldsymbol{p}_{12}(t)}\longrightarrow D_s$, $\dot{\boldsymbol{p}}_1 \longrightarrow \boldsymbol{0}$, $\dot{\boldsymbol{p}}_2 \longrightarrow \boldsymbol{0}$, \textit{i.e.} the robots stop moving.  However, once they stop, note that $\lim_{t \longrightarrow \infty}\Delta \boldsymbol{p}_{21}(t)=D_s\hat{\boldsymbol{e}}_{\alpha}$, yet $\Delta \boldsymbol{p}_{d_{21}}=-(D_{G_1}+D_{G_2}-D_{init.})\hat{\boldsymbol{e}}_{\alpha}$. Therefore, the robots are not at their goals (because the goal vector is anti-parallel to the vector connecting the robots). Hence, we conclude that the robots have fallen in \textit{deadlock}.\qed
% \end{proof}

\textbf{Takeaway: } We have demonstrated that two robots fall in deadlock given initial conditions \eqref{initial_positions_two_robots} and goals \eqref{goal_positions_two_robots}. This is because the \textit{geometric arrangement of initial positions and goals prevents both $\hat{\boldsymbol{u}}$ and the QP to output velocities perpendicular to the $\hat{\boldsymbol{e}}_{\alpha}$ direction} in all the three phases. With the control always along $\boldsymbol{u}_{\parallel} \parallel \hat{\boldsymbol{e}}_{\alpha}$, the only way to reach the goals would involve intersection of the robots \textit{i.e.} collisions. This is why, the QP based controller decides to prevent collisions from happening by forcing the robots to stall, resulting in deadlock. 
\subsection{Closed-Loop Analysis for Three-Robots}
\label{threerobotdeadlock}
We describe another scenario prone to deadlock, which consists of three robots positioned on the vertices of an equilateral triangle, and required to stabilize to their respective antipodal positions. For simplicity, we assume that all prescribed controller gains are \textit{identical}, as are the distances of the robots' initial positions to their goals. Let $\boldsymbol{p}_i(t) \in \mathbb{R}^2$ be the position of robot $i \in \{1,2,3\}$ given as
\begin{align}
\label{initial_positions_three_robots}
    \boldsymbol{p}_1(0) &= (x_0,y_0) \nonumber \\
    \boldsymbol{p}_2(0) &= \boldsymbol{p}_1(0) + D_{init}\hat{\boldsymbol{e}}_{\alpha} \nonumber \\
    \boldsymbol{p}_3(0) &= \boldsymbol{p}_2(0) + D_{init}\hat{\boldsymbol{e}}_{\alpha + \frac{2\pi}{3}}
\end{align}
Assume that the goal of each robot is diametrically opposite to its initial position \textit{i.e.}
\begin{align}
\label{goal_positions_three_robots}
    \boldsymbol{p}_{d_i} &= \boldsymbol{p}_i(0) + \frac{\sqrt{3}D_G}{D_{init.}}(\boldsymbol{c}-\boldsymbol{p}_i(0))
\end{align}
for $i \in \{1,2,3\}$ where $\boldsymbol{c} = \frac{1}{3}\sum_{i=1}^3\boldsymbol{p}_i(0)$ is the centroid of the equilateral triangle formed by the initial positions and $D_G$ is the distance of each robot from its goal. This scenario is prone to deadlock, because of (1) the geometric symmetry in the initial positions of robots and goals and (2) identical controller gains.
\begin{assumption}
\label{ass3}
$\sqrt{3}D_{G}>D_{init.}$
\end{assumption}
This assumption will be needed in \textbf{Lemma \ref{lemma8}} to establish the inevitability of deadlock. We now formally state the deadlock incidence result as follows.
\begin{theorem}
\label{threerobottheorem}
Given the initial positions of robots as in  \eqref{initial_positions_three_robots} and desired goals as in  \eqref{goal_positions_three_robots}, the controls generated by \eqref{optimization_formulation_2} will cause the robots to fall in deadlock.
\end{theorem}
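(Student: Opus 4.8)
The plan is to transport the three-phase argument of Theorem~\ref{tworobottheorem}, after first exploiting the dihedral symmetry of the configuration~\eqref{initial_positions_three_robots}--\eqref{goal_positions_three_robots} to reduce the closed loop to a single scalar ODE. First I would record that, since the cost in~\eqref{optimization_formulation_2} is strictly convex and the constraints affine, each robot's QP has a \emph{unique} minimizer, so $\boldsymbol u_i^*$ is a well-defined function of the joint state; I would then check that this map is equivariant under the symmetry group $D_3$ of the initial configuration (the rotation $R$ by $2\pi/3$ about the centroid $\boldsymbol c=\tfrac13\sum_k\boldsymbol p_k(0)$ together with the reflections through the vertex--centroid axes). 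Indeed the nominal controls $\hat{\boldsymbol u}_i=-k_{p}(\boldsymbol p_i-\boldsymbol p_{d_i})$ are radial, because each $\boldsymbol p_{d_i}$ lies on the line through $\boldsymbol p_i(0)$ and $\boldsymbol c$ by~\eqref{goal_positions_three_robots}; the pair data $(\boldsymbol a_{ij},b_{ij})$ of~\eqref{abdefs} transform into those of the symmetry-mapped pair; and the initial state is $D_3$-invariant. Uniqueness then forces the solution to respect this symmetry for all $t$: the robots occupy the vertices of an equilateral triangle centered at $\boldsymbol c$ with common circumradius $r(t)$ and common pairwise distance $D(t)=\sqrt3\,r(t)$, robot $i$'s two multipliers coincide, $\mu_{ij}=\mu_{ik}=:\mu_i\ge0$, and since $\boldsymbol a_{ij}+\boldsymbol a_{ik}=-3(\boldsymbol p_i-\boldsymbol c)$ we get from~\eqref{kkt_general} that $\boldsymbol u_i^*=\hat{\boldsymbol u}_i+\tfrac32\mu_i(\boldsymbol p_i-\boldsymbol c)$ points along the radial line. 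Hence the triangle never rotates, only contracts, and the entire dynamics is captured by $D(t)$.

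Next I would run the phase analysis. Because all six flags $f_{ij}(\hat{\boldsymbol u}_i)$ are equal by symmetry, the constraints are either all inactive or all active, so there are only two phases, separated by an instant $t_1$ at which all six activate simultaneously. In Phase~1 ($t<t_1$, all inactive) $\hat{\boldsymbol u}_i$ is already feasible and has zero cost, so $\boldsymbol u_i^*=\hat{\boldsymbol u}_i$ and the robots move radially toward $\boldsymbol c$; mimicking Lemma~\ref{lemma1}--Lemma~\ref{lemma2}, solving $f_{ij}(\hat{\boldsymbol u}_i)=0$ with $\boldsymbol a_{ij}$ along a triangle edge and $\hat{\boldsymbol u}_i$ along the vertex bisector (a $\cos 30^\circ$ projection factor) yields a time-varying critical distance $\beta_+(t)$ of the same form as~\eqref{time_crit_robot_1}, and one shows the decreasing $D(t)$ meets $\beta_+(t)$ at a finite $t_1$ --- this is the step where Assumption~\ref{ass3} is invoked, guaranteeing that the nominal flow heads into an unavoidable collision so that $t_1<\infty$. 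For $t\ge t_1$ all constraints are active; imposing $\boldsymbol a_{ij}^T\boldsymbol u_i^*=b_{ij}$, solving for the common $\mu_i$, and using $\norm{\Delta\boldsymbol p_{ij}}=\sqrt3\,r$ with $\boldsymbol a_{ij}^T\boldsymbol a_{ik}=\tfrac12\norm{\Delta\boldsymbol p_{ij}}^2$, the $k_{p}$-dependent (nominal) terms cancel and the radial speed collapses to
\begin{align}
\dot D=-\gamma\,\frac{D^2-D_s^2}{2D},\nonumber
\end{align}
which is exactly the inter-robot distance equation implied by~\eqref{relativedynamicsphase3} in the two-robot case. Hence Lemma~\ref{lemma4} applies verbatim: $D(t)\to D_s$ and $\boldsymbol u_i^*\to\boldsymbol0$. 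In that limit each robot sits at distance $D_s/\sqrt3>0$ from $\boldsymbol c$ on the side from which it started, whereas by~\eqref{goal_positions_three_robots} its goal lies on the opposite side of $\boldsymbol c$ at distance $(\sqrt3 D_G-D_{init})/\sqrt3>0$ (Assumption~\ref{ass3}); therefore $\boldsymbol p_i\neq\boldsymbol p_{d_i}$, so $\hat{\boldsymbol u}_i\neq\boldsymbol0$ while $\boldsymbol u_i^*=\boldsymbol0$, i.e.\ the robots are in deadlock in the sense of Definition~\ref{def:deadlock_definition}.

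The main obstacle I anticipate is not the scalar ODE but certifying that the active set throughout Phase~2 is the one assumed, i.e.\ verifying dual feasibility $\mu_i\ge0$ for all $t\ge t_1$. The two edge gradients $\boldsymbol a_{ij},\boldsymbol a_{ik}$ are linearly independent on a nondegenerate triangle, so LICQ holds and $\mu_i$ is well defined; writing $\mu_i$ as a function of $D$ alone --- legitimate because radial motion makes the robot--goal distance an affine function of $D$ --- one checks $\mu_i=0$ at $D=\beta_+(t_1)$ and that $\mu_i$ is increasing as $D$ decreases, hence $\mu_i>0$ on $(D_s,\beta_+(t_1))$; this is exactly where $\sqrt3 D_G>D_{init}$ is needed a second time. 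One must also keep the triangle nondegenerate, which holds since $D(t)\ge D_s>0$ (so the symmetry reduction stays valid), and establish the finiteness of $t_1$ --- the analog of Lemma~\ref{lemma2} --- by the comparison argument used there.
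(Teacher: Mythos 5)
Your proposal is correct and follows the same overall architecture as the paper's proof: reduce to an all-inactive phase ending at a finite common activation time $t_1$ (the paper's Lemmas \ref{lemma5}--\ref{lemma7}), then an all-active phase in which the dynamics collapse to the scalar ODE $\dot D=-\gamma(D^2-D_s^2)/(2D)$ and $D\to D_s$ with the robots halting short of their goals (Lemma \ref{lemma8}). Where you differ is in how the symmetry preservation is certified: the paper's Lemma \ref{lemma6} integrates the Phase-1 flow explicitly and checks that pairwise distances and edge orientations are invariant, and then extends the equilateral ansatz \eqref{initialpositions_attina2} to Phase 2 only by analogy with the two-robot case; you instead invoke $D_3$-equivariance of the QP map together with uniqueness of the strictly convex minimizer, which handles both phases uniformly and makes the radial-motion claim ($\boldsymbol{u}_i^*=\hat{\boldsymbol{u}}_i+\tfrac{3}{2}\mu_i(\boldsymbol{p}_i-\boldsymbol{c})$) immediate. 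You also explicitly verify dual feasibility $\mu_i\ge 0$ on all of Phase 2 (noting $\mu_i=0$ exactly at activation and monotone growth as $D$ decreases) and LICQ via nondegeneracy of the triangle --- a step the paper takes for granted when it asserts the active set stays fixed for $t\ge t_1$. The equivariance route buys robustness (it would survive any $D_3$-symmetric nominal controller) at the cost of having to argue carefully that the QP solution map really commutes with the group action; the paper's explicit computation is more pedestrian but self-contained. Both arrive at the same terminal configuration and the same contradiction with $\boldsymbol{p}_i=\boldsymbol{p}_{d_i}$ under Assumption \ref{ass3}.
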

\begin{proof}
Our proof draws upon the ideas from deadlock in the two robot case. The overall sketch is as follows:
\begin{enumerate}
\item First, we analyze the controls returned by the QP at $t=0$. We show in \textbf{Lemma \ref{lemma5}} that if the initial distance between robots is greater than a certain critical distance, then at $t=0$, all collision avoidance constraints of all robots are inactive, therefore $\boldsymbol{u}^*_i(0) =\hat{\boldsymbol{u}}_i(0) \mbox{  }\forall i \in \{1,2,3\}$.
\item Next, we show that the future motion of robots can be broken into two successive phases. In phase 1, the collision avoidance constraints of all robots stay inactive, so they use $\hat{\boldsymbol{u}}_i(t) \mbox{  }\forall i \in \{1,2,3\}$. We prove in \textbf{Lemma \ref{lemma6}} that using $\hat{\boldsymbol{u}}_i(t)$, the robots move on the vertices of an equilateral triangle. Next, in \textbf{Lemma \ref{lemma7}}, we demonstrate that there exists a finite time when phase 1 culminates (see \cref{fig:phases3}). This time is identical for all robots because of the symmetry that follows from \textbf{Lemma \ref{lemma6}}.
\item Next, phase 2 begins during which all constraints become active.  We show in \textbf{Lemma \ref{lemma8}} that the distance between robots converges to $D_s$, their velocities converge to zero while the robots are still away from their goals, thus establishing \textit{deadlock}. \qed
\end{enumerate}
\end{proof}
\hspace{-0.35cm}Recall from  \eqref{kkt_general} that the control for robot $i$ is 
\begin{align}
\label{generaluthree}
\boldsymbol{u}^*_i = \hat{\boldsymbol{u}}_i - \frac{1}{2}\sum_{j\in \{1,2,3\}\backslash i}{\mu_{ij}\boldsymbol{a}_{ij}}
\end{align} 
\textcolor{black}{Here, the value of $\mu_{ij}$ depends on whether the constraint $\boldsymbol{a}_{ij}^T\boldsymbol{u}^*_i - b_{ij}\leq0 $ is active ($=0$) or inactive ($<0$). 
So like the two robot case, we check the value of the flag $f_{ij}(\tilde{\boldsymbol{u}}_i)=\boldsymbol{a}_{ij}^T\tilde{\boldsymbol{u}}_i -b_{ij}$ for the nominal controller \textit{i.e.} $\tilde{\boldsymbol{u}}_i =\hat{\boldsymbol{u}}_i$. We show that if the initial distance between \textit{any two} robots is greater than the critical distance $\beta_+$ \cref{crit3}, then at $t=0$, all robots have all their constraints inactive. Consequently, the optimal control returned by \cref{optimization_formulation_2} will be  $\hat{\boldsymbol{u}}_i(0)$. The critical distance is defined as (derived in \ref{lemma5_appendix}):}
\begin{align}
\label{crit3}
    \beta_+ =\frac{\sqrt{3}D_Gk_p}{\gamma} + \sqrt{\bigg(\frac{\sqrt{3}D_Gk_p}{\gamma}\bigg)^2 + D_s^2}
\end{align}
\begin{lemma}
\label{lemma5}
If  $D_{init.}>\beta_{+}$, then $\boldsymbol{u}^*_i(0) = \boldsymbol{\hat{u}}_i(0)$ $\forall i$
\end{lemma}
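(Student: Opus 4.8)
The plan is to follow the template of \textbf{Lemma \ref{lemma1}}: since the objective of the QP \eqref{optimization_formulation_2} is $\norm{\boldsymbol{u}_i-\hat{\boldsymbol{u}}_i}^2$, which attains its global minimum value $0$ at $\boldsymbol{u}_i=\hat{\boldsymbol{u}}_i$, it suffices to show that, under $D_{init.}>\beta_+$, the nominal control $\hat{\boldsymbol{u}}_i$ is \emph{feasible} for robot $i$'s QP at $t=0$, i.e. $f_{ij}(\hat{\boldsymbol{u}}_i(0))=\boldsymbol{a}_{ij}^{T}(0)\hat{\boldsymbol{u}}_i(0)-b_{ij}(0)<0$ for \emph{both} $j\in\{1,2,3\}\backslash i$. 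Once this is established, $\hat{\boldsymbol{u}}_i$ is the constrained optimum with zero cost, so $\boldsymbol{u}^*_i(0)=\hat{\boldsymbol{u}}_i(0)$, and by \eqref{cs3} all of robot $i$'s constraints are inactive at $t=0$.

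First I would express the flag at $t=0$ geometrically. Substituting $\boldsymbol{a}_{ij}=-\Delta\boldsymbol{p}_{ij}$ and $b_{ij}=\tfrac{\gamma}{4}(\norm{\Delta\boldsymbol{p}_{ij}}^2-D_s^2)$ from \eqref{abdefs}, together with $\hat{\boldsymbol{u}}_i=-k_p(\boldsymbol{p}_i-\boldsymbol{p}_{d_i})$, gives
\[
f_{ij}(\hat{\boldsymbol{u}}_i)=k_p\,\Delta\boldsymbol{p}_{ij}^{T}(\boldsymbol{p}_i-\boldsymbol{p}_{d_i})-\tfrac{\gamma}{4}\big(\norm{\Delta\boldsymbol{p}_{ij}}^2-D_s^2\big).
\]
The heart of the argument is evaluating this for the configuration \eqref{initial_positions_three_robots}--\eqref{goal_positions_three_robots} at $t=0$. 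From \eqref{initial_positions_three_robots} every pair is at distance $\norm{\Delta\boldsymbol{p}_{ij}(0)}=D_{init.}$; from \eqref{goal_positions_three_robots}, since the circumradius of an equilateral triangle of side $D_{init.}$ is $D_{init.}/\sqrt{3}$, one has $\boldsymbol{p}_i(0)-\boldsymbol{p}_{d_i}=-\tfrac{\sqrt{3}D_G}{D_{init.}}(\boldsymbol{c}-\boldsymbol{p}_i(0))$ with $\norm{\boldsymbol{p}_i(0)-\boldsymbol{p}_{d_i}}=D_G$. Writing the three vertices in coordinates (WLOG $\alpha=0$, harmless since every inner product below is rotation invariant) and computing directly, I expect to find $\Delta\boldsymbol{p}_{ij}(0)^{T}(\boldsymbol{p}_i(0)-\boldsymbol{p}_{d_i})=\tfrac{\sqrt{3}}{2}D_{init.}D_G$ for \emph{all} $i$ and all $j\ne i$; the fact that this value is common to every edge and every robot is forced by the $3$-fold rotational symmetry of the configuration about the centroid $\boldsymbol{c}$, which cyclically permutes the robots. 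This is the one genuine computation in the proof, and it is also where the $\sqrt{3}$ in \eqref{crit3} comes from: in the collinear two-robot case of \textbf{Lemma \ref{lemma1}} the analogous dot product is the full $D_{init.}D_{G_i}$, producing the factor $2$ in $\beta^i_+$ instead.

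Plugging in, for every $i$ and every $j\ne i$,
\[
f_{ij}(\hat{\boldsymbol{u}}_i(0))=\tfrac{\sqrt{3}}{2}k_p D_{init.}D_G-\tfrac{\gamma}{4}\big(D_{init.}^2-D_s^2\big)=-\tfrac{\gamma}{4}\Big(D_{init.}^2-\tfrac{2\sqrt{3}k_pD_G}{\gamma}D_{init.}-D_s^2\Big).
\]
The bracket is a quadratic in $D_{init.}$ whose roots are exactly $\beta_{\pm}=\tfrac{\sqrt{3}D_Gk_p}{\gamma}\pm\sqrt{(\tfrac{\sqrt{3}D_Gk_p}{\gamma})^2+D_s^2}$ (this is also the derivation of \eqref{crit3}); their product is $-D_s^2<0$, so $\beta_-<0<\beta_+$, and since the coefficient of $D_{init.}^2$ in $f_{ij}$ is $-\gamma/4<0$ we obtain $f_{ij}(\hat{\boldsymbol{u}}_i(0))<0\iff D_{init.}>\beta_+$, the root $\beta_-$ being irrelevant because $D_{init.}>0$. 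Hence $D_{init.}>\beta_+$ makes $\hat{\boldsymbol{u}}_i$ feasible for the QP of every robot $i$, and by the first paragraph $\boldsymbol{u}^*_i(0)=\hat{\boldsymbol{u}}_i(0)$ for all $i$. The only real obstacle is the bookkeeping of the triangle coordinates in the middle step; everything else is the same feasibility-implies-optimality observation used in the two-robot case.
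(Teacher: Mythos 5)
Your proposal is correct and follows essentially the same route as the paper's proof in the appendix: evaluate the flag $f_{ij}(\hat{\boldsymbol{u}}_i(0))$, observe by the equilateral symmetry that it equals the same downward-facing quadratic $g(D_{init.})=-\tfrac{\gamma}{4}D_{init.}^2+\tfrac{\sqrt{3}D_Gk_p}{2}D_{init.}+\tfrac{\gamma}{4}D_s^2$ for every pair, and conclude $g<0$ precisely when $D_{init.}>\beta_+$, so the nominal control is feasible and hence optimal. Your explicit computation of the dot product $\Delta\boldsymbol{p}_{ij}(0)^{T}(\boldsymbol{p}_i(0)-\boldsymbol{p}_{d_i})=\tfrac{\sqrt{3}}{2}D_{init.}D_G$ and the sign analysis via the product of roots are both right and merely fill in steps the paper leaves implicit.
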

\hspace{-0.4cm}\textbf{Intuition:} This result states that, if at $t=0$,  $i$ is away from $j$ by at-least $\beta_+$, then  $i$ can simply use  $\boldsymbol{\hat{u}}_i(0)$  and not ``worry" about a collision with $j$ at-least until the next  instant. This holds for all robots because $\beta^+$ is identical for them all due to the assumed symmetry and their identical controller gains.\textcolor{black}{
\begin{proof} See appendix \ref{lemma5_appendix}. \end{proof}}
\begin{figure}[t]
    \centering
    \includegraphics[trim={0cm 0.0cm 0cm 0cm},clip,width=1\linewidth]{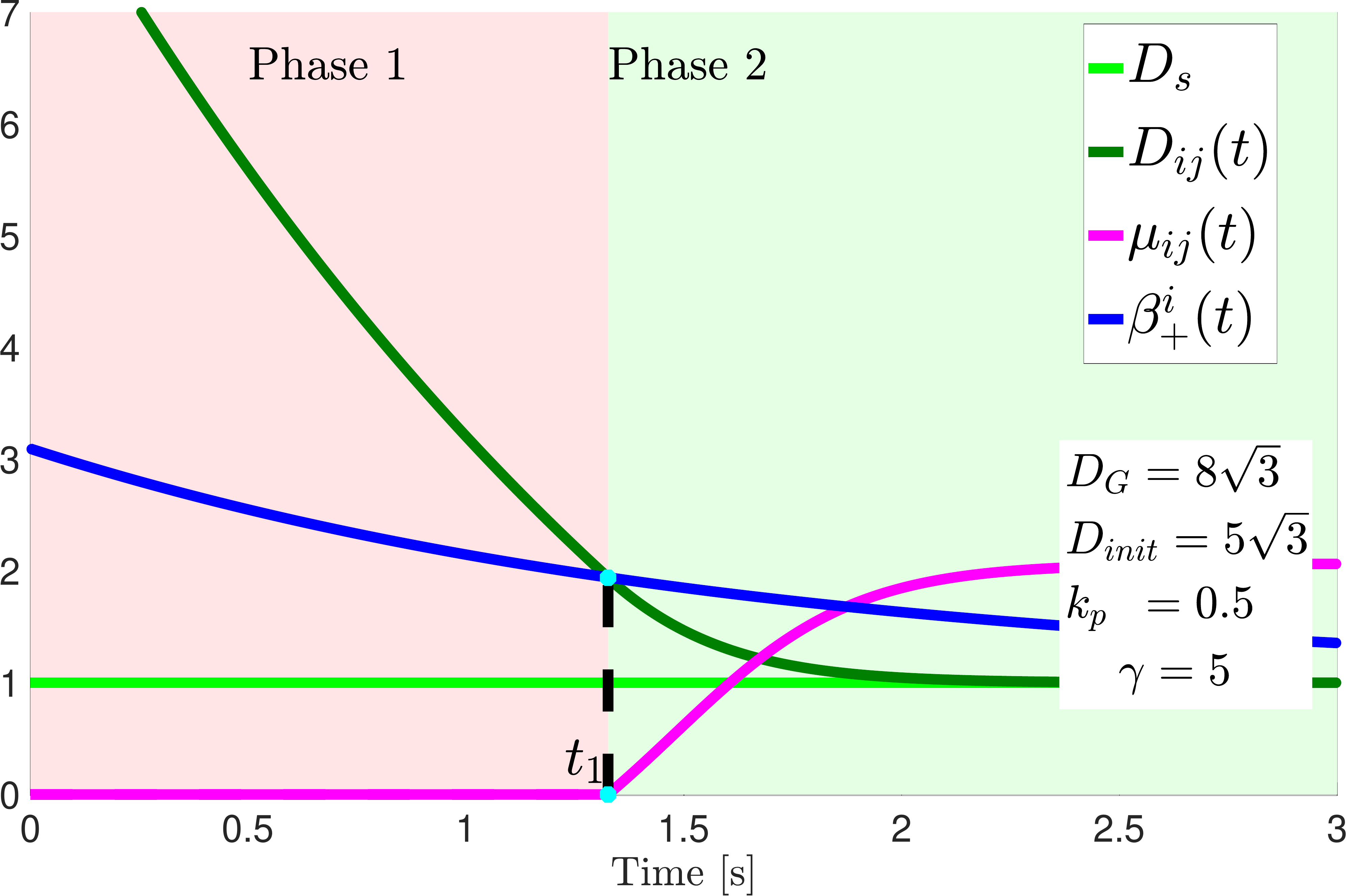}
    \caption{Simulation results for three-robot deadlock. Simulation parameters shown in bottom right panel. Note at $t=t_1$, $D_{ij}(t)=\beta^i_+(t)$ and $\mu_{ij}(t)$ switches on. Finally, $D(t) \longrightarrow D_s$.}
    \label{fig:phases3}
\end{figure}
\begin{assumption}
\label{ass4}
WLOG assume at $t=0$, $D_{init.}>\beta_+$.
\end{assumption}
From assumption \eqref{ass4} and \textbf{Lemma} \ref{lemma5}, it follows that at $t=0$, \textbf{all} collision avoidance constraints of \textbf{all} robots are inactive. Therefore, it is safe to assume that these constraints also remain inactive for a finite duration after $t=0$, which is precisely the duration of phase 1.
\subsubsection{\textbf{Phase 1:} } In this phase, the robots use their nominal controls. Thus, their dynamics are given by:
\begin{align}
\label{threerobotpositions}
\dot{\boldsymbol{p}}_i&=\boldsymbol{u}^*_i = \hat{\boldsymbol{u}}_i = -k_{p}(\boldsymbol{p}_i-\boldsymbol{p}_{d_i}) \nonumber \\
    \implies \boldsymbol{p}_i(t) &= e^{-k_pt}\boldsymbol{p}_i(0) + (1-e^{-k_pt})\boldsymbol{p}_{d_i} 
\end{align}
Next we show that given the choice of symmetric initial and goal positions as in \cref{initial_positions_three_robots} and \cref{goal_positions_three_robots} respectively and identical controller gains, the robots retain this symmetry and move along the vertices of an equilateral triangle.
\begin{lemma}
\label{lemma6}
All robots continue to move in an equilateral triangular configuration with centroid fixed at $\boldsymbol{c}(0)$
\end{lemma}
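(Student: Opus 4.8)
The plan is to leverage the explicit phase-1 trajectory \eqref{threerobotpositions} together with a structural observation about the goals \eqref{goal_positions_three_robots}: relative to the centroid $\boldsymbol{c}\coloneqq\boldsymbol{c}(0)=\tfrac13\sum_{i}\boldsymbol{p}_i(0)$, each goal is the \emph{same} scalar multiple of the corresponding initial vertex. Once this is in hand, the lemma reduces to the elementary fact that a common scaling about $\boldsymbol{c}$ carries an equilateral triangle centred at $\boldsymbol{c}$ to another equilateral triangle centred at $\boldsymbol{c}$.

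First I would extract two identities from \eqref{goal_positions_three_robots}. Summing over $i$ and using $\sum_i(\boldsymbol{c}-\boldsymbol{p}_i(0))=\boldsymbol{0}$ kills the correction term, so $\tfrac13\sum_i\boldsymbol{p}_{d_i}=\boldsymbol{c}$: the goals share the centroid of the initial configuration. Subtracting $\boldsymbol{c}$ from \eqref{goal_positions_three_robots} gives $\boldsymbol{p}_{d_i}-\boldsymbol{c}=\lambda\,(\boldsymbol{p}_i(0)-\boldsymbol{c})$ with $\lambda\coloneqq 1-\tfrac{\sqrt3 D_G}{D_{init}}$, the same constant for all $i$ (and $\lambda<0$ by Assumption \ref{ass3}, which is the precise sense of ``diametrically opposite'').

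Next I would substitute these into \eqref{threerobotpositions}. Averaging $\boldsymbol{p}_i(t)=e^{-k_pt}\boldsymbol{p}_i(0)+(1-e^{-k_pt})\boldsymbol{p}_{d_i}$ over $i$ and using $\tfrac13\sum_i\boldsymbol{p}_{d_i}=\boldsymbol{c}$ yields $\boldsymbol{c}(t)=e^{-k_pt}\boldsymbol{c}+(1-e^{-k_pt})\boldsymbol{c}=\boldsymbol{c}$, so the centroid is stationary. Subtracting $\boldsymbol{c}$ and inserting $\boldsymbol{p}_{d_i}-\boldsymbol{c}=\lambda(\boldsymbol{p}_i(0)-\boldsymbol{c})$ gives $\boldsymbol{p}_i(t)-\boldsymbol{c}=r(t)\,(\boldsymbol{p}_i(0)-\boldsymbol{c})$ with $r(t)\coloneqq e^{-k_pt}+(1-e^{-k_pt})\lambda$ the same scalar for all three robots.

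Finally I would close the geometric step. From \eqref{initial_positions_three_robots}, using $\hat{\boldsymbol{e}}_\alpha+\hat{\boldsymbol{e}}_{\alpha+2\pi/3}+\hat{\boldsymbol{e}}_{\alpha+4\pi/3}=\boldsymbol{0}$, the vectors $\{\boldsymbol{p}_i(0)-\boldsymbol{c}\}$ are readily seen to have common length $D_{init}/\sqrt3$ and successive angular spacing $2\pi/3$ — the vertices of an equilateral triangle centred at $\boldsymbol{c}$. Multiplying all three by the common scalar $r(t)$ scales every length by $|r(t)|$ and leaves the $2\pi/3$ angular gaps intact (a sign change of $r(t)$ merely reflects the triangle through $\boldsymbol{c}$), so the configuration stays equilateral with centroid $\boldsymbol{c}(0)$ for every $t$ in phase 1, which is the claim. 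Everything here is direct substitution; the only step needing a moment's care is the last geometric observation that a common scalar multiple of an equilateral configuration about its centre is again equilateral.
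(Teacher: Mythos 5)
Your proof is correct, and it takes a genuinely different route from the paper's. The paper verifies the claim pairwise and computationally: it fixes the centroid by the same averaging argument you use, but then establishes the equilateral property by expanding $\norm{\Delta \boldsymbol{p}_{ij}(t)}$ into three terms and checking each is identical across all pairs, and it proves direction-preservation separately via the cross product $\Delta \boldsymbol{p}_{ij}(t)\times\Delta \boldsymbol{p}_{ij}(0)=\boldsymbol{0}$ (using that $\Delta\boldsymbol{p}_{d_{ij}}$ is anti-parallel to $\Delta\boldsymbol{p}_{ij}(0)$). You instead identify the structural reason all of this holds: since $\boldsymbol{p}_{d_i}-\boldsymbol{c}=\lambda\,(\boldsymbol{p}_i(0)-\boldsymbol{c})$ with a common $\lambda$, the phase-1 flow is a time-varying homothety $\boldsymbol{p}_i(t)-\boldsymbol{c}=r(t)(\boldsymbol{p}_i(0)-\boldsymbol{c})$ about the fixed centroid, which delivers the equilateral shape, the fixed centroid, and the parallelism of $\Delta\boldsymbol{p}_{ij}(t)$ to $\Delta\boldsymbol{p}_{ij}(0)$ in a single step. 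Your argument is cleaner and slightly more general (it shows any initial configuration whose goals are a common scaling about the shared centroid stays similar to itself), whereas the paper's computation produces the explicit side length $\tilde{D}(t)$ as a by-product, which it then reuses in Lemma~\ref{lemma7}; if you wanted your version to feed directly into the rest of the argument you would note that $\tilde{D}(t)=|r(t)|\,D_{init}$, which matches \cref{Dtilde}. The only point worth a remark is that $r(t)$ passes through zero at one instant (since $\lambda<0$), where the triangle degenerates to a point; this is immaterial here because Lemma~\ref{lemma7} shows phase 1 terminates strictly before that time, but it is the same implicit caveat present in the paper's version.
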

\hspace{-0.4cm}\textbf{Intuition:} Since the initial positions and goals were symmetric, and since all robots have identical controller gains, they move by the same distance per time step. So the symmetry is preserved every time which is why they retain the equilateral triangular configuration. \url{https://bit.ly/2VnDXdb} conveys this idea.
\begin{proof} \textcolor{black}{See appendix \ref{lemma6_appendix}.} \end{proof}
\textcolor{black}{Next we show that eventually, there comes a time when it is no longer feasible to use the nominal control $\boldsymbol{\hat{u}}_i(t)$ $\forall i$ at which point all the constraints of all robots become active. This again follows from the inevitability of collisions should all three robots continue using $\boldsymbol{\hat{u}}_i$ indefinitely. }
\begin{lemma}
\label{lemma7}
There exists a time $t_{ij}$ when the collision avoidance constraint of robot $i$ with robot $j$ becomes active. Furthermore, $t_{ij}$ is identical $\forall i \in \{1,2,3\}$, $\forall j \in \{1,2,3\}\backslash i$
\end{lemma}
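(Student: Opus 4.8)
The plan is to collapse the vector activation flag $f_{ij}(\hat{\boldsymbol{u}}_i(t))=\boldsymbol{a}_{ij}^T\hat{\boldsymbol{u}}_i(t)-b_{ij}$ into a scalar function of a single geometric parameter, use the symmetry established in Lemma \ref{lemma6} to argue that this function is the \emph{same} for every ordered pair $(i,j)$, and then invoke the intermediate value theorem to locate the common activation time. The proof mirrors the phase-1 analysis of the two-robot case (Lemma \ref{lemma2}).

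First I would use Lemma \ref{lemma6} to write, throughout phase 1, $\boldsymbol{p}_i(t)-\boldsymbol{c}=s(t)\,\hat{\boldsymbol{n}}_i$ where $\hat{\boldsymbol{n}}_i:=(\boldsymbol{p}_i(0)-\boldsymbol{c})/\norm{\boldsymbol{p}_i(0)-\boldsymbol{c}}$ is a \emph{fixed} unit vector (the triangle neither rotates nor moves its centroid) and $s(t)$ is a scalar that does \emph{not} depend on $i$. Solving \eqref{threerobotpositions} with goals \eqref{goal_positions_three_robots} gives $s(t)=\tfrac{D_{init}}{\sqrt{3}}-D_G\big(1-e^{-k_pt}\big)$, remaining goal distance $\norm{\boldsymbol{p}_i(t)-\boldsymbol{p}_{d_i}}=D_Ge^{-k_pt}$, and — using the equilateral angle $\hat{\boldsymbol{n}}_i^T\hat{\boldsymbol{n}}_j=-\tfrac12$ for $i\neq j$ — the common pairwise distance $D(t):=\norm{\Delta\boldsymbol{p}_{ij}(t)}=\sqrt{3}\,s(t)$. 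Substituting $\boldsymbol{a}_{ij}=-\Delta\boldsymbol{p}_{ij}$, $\hat{\boldsymbol{u}}_i=-k_p(\boldsymbol{p}_i-\boldsymbol{p}_{d_i})$ and $b_{ij}=\tfrac{\gamma}{4}(D^2-D_s^2)$, the cross term reduces to $\boldsymbol{a}_{ij}^T\hat{\boldsymbol{u}}_i=k_p\,\Delta\boldsymbol{p}_{ij}^T(\boldsymbol{p}_i-\boldsymbol{p}_{d_i})=\tfrac{\sqrt{3}}{2}k_pD(t)\,D_Ge^{-k_pt}$, so $f_{ij}(\hat{\boldsymbol{u}}_i(t))$ is one and the same function of $t$ for every ordered pair $(i,j)$, and one checks $f_{ij}(\hat{\boldsymbol{u}}_i(t))\geq0\iff D(t)\leq\beta_+(t)$, where $\beta_+(t)$ is \eqref{crit3} with $D_G$ replaced by $D_Ge^{-k_pt}$. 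Since this criterion is pair-independent, the first instant the flag vanishes is identical for all $i$ and all $j\neq i$ — the second assertion of the lemma.

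It remains to show this common instant is finite. At $t=0$, Assumption \ref{ass4} and Lemma \ref{lemma5} give $D(0)=D_{init}>\beta_+=\beta_+(0)$, so $f_{ij}(\hat{\boldsymbol{u}}_i(0))<0$. On the other hand, Assumption \ref{ass3} ($\sqrt{3}D_G>D_{init}$) yields a finite time $t_\star=-\tfrac{1}{k_p}\ln\!\big(1-\tfrac{D_{init}}{\sqrt{3}D_G}\big)$ at which $s(t_\star)=0$, hence $D(t_\star)=0$; since $\beta_+(t)\geq D_s>0$ for every $t$, we get $D(t_\star)=0<\beta_+(t_\star)$, i.e. $f_{ij}(\hat{\boldsymbol{u}}_i(t_\star))>0$. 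By continuity of $t\mapsto D(t)-\beta_+(t)$ on $[0,t_\star]$, the intermediate value theorem produces a first crossing $t_1\in(0,t_\star)$ with $D(t_1)=\beta_+(t_1)$, i.e. $\boldsymbol{a}_{ij}^T\hat{\boldsymbol{u}}_i(t_1)=b_{ij}$. On $[0,t_1]$ all flags stay $\leq0$, so $\boldsymbol{u}_i^*=\hat{\boldsymbol{u}}_i$ there and the constraint of $i$ with $j$ is boundary-active at $t_1$; setting $t_{ij}:=t_1$, which by the previous paragraph is the same for all $i$ and $j\neq i$, proves the lemma. (Since $s(t_1)>0$, the triangle remains non-degenerate when phase 1 ends, so phase 2 starts from a valid configuration.)

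I expect the main obstacle to be bookkeeping rather than ideas: carefully verifying, via Lemma \ref{lemma6}, that the reduction to the single scalar $s(t)$ is exact — that $\hat{\boldsymbol{n}}_i$ is genuinely constant, that $\boldsymbol{p}_{d_i}-\boldsymbol{c}$ is anti-parallel to $\hat{\boldsymbol{n}}_i$ with magnitude $D_G-\tfrac{D_{init}}{\sqrt{3}}$, and that the cross term collapses to a function of $D(t)$ alone (this uses both the angle $-\tfrac12$ and the identity $s(t)+\big(D_G-\tfrac{D_{init}}{\sqrt{3}}\big)=D_Ge^{-k_pt}$). Once this reduction is in hand, finiteness of $t_1$ is the same intermediate-value argument as in Lemma \ref{lemma2}, with the ``all robots at the centroid'' configuration ($D=0<D_s$) playing the role that the impending head-on collision played for two robots.
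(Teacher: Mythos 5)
Your proposal is correct and follows essentially the same route as the paper's proof: reduce $f_{ij}(\hat{\boldsymbol{u}}_i(t))$ by symmetry (Lemma \ref{lemma6}) to a single scalar condition $D(t)\le\beta_+(t)$ with the same quadratic-in-$D$ flag and the same time-varying critical distance \eqref{beta3}, then show a crossing must occur. The only cosmetic difference is that you apply the intermediate value theorem on the finite interval $[0,t_\star]$ where the triangle would degenerate, whereas the paper reaches the same conclusion from monotonicity of $\tilde{D}(t)$ and $\beta_+(t)$ together with their limits $D_{init}-\sqrt{3}D_G<0$ and $D_s>0$; both hinge identically on Assumptions \ref{ass3} and \ref{ass4}.
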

\hspace{-0.4cm}\textbf{Intuition:} Eventually, the inter-robot distance (same for any pair) falls below the common critical distance (see \cref{fig:phases3} where the dark green curve (the inter-robot distance) intersects the blue curve (the critical distance)). Therefore, it is no longer feasible to use the nominal control. This occurs simultaneously for all robots due to the geometric symmetry and identical controller parameters. Let this common time be $t_1$ as shown in Fig \ref{fig:phases3}.
\begin{proof} \textcolor{black}{See appendix \ref{lemma7_appendix}.} \end{proof}
\subsubsection{\textbf{Phase 2:}}
For $t\geq t_1$, all the constraints of all robots have become active. Thus, the  dynamics of the robots are:
\begin{align}
\label{threecontrolopt}
\dot{\boldsymbol{p}}_i=\boldsymbol{u}^*_i = \hat{\boldsymbol{u}}_i - \frac{1}{2}\sum_{j\in \{1,2,3\}\backslash i}{\mu_{ij}\boldsymbol{a}_{ij}}
\end{align}
where the expressions for $\mu_{ij}\neq0$ can be derived by using the activeness of the constraints \textit{i.e.} $\boldsymbol{a}^T_{ij}\boldsymbol{u}^*_i=b_{ij}$. Following arguments in phase three for the two robot case, we can show that the positions of the robots can be written as:
\begin{align}
\label{initialpositions_attina2}
    \boldsymbol{p}_1(t) &= \eta(t)\boldsymbol{p}_1(0) + (1-\eta(t))\boldsymbol{p}_{d_1} \nonumber \\
    \boldsymbol{p}_2(t) &= \boldsymbol{p}_1(t) + D(t)\hat{\boldsymbol{e}}_{\alpha} \nonumber \\
    \boldsymbol{p}_3(t) &= \boldsymbol{p}_2(t) + D(t)\hat{\boldsymbol{e}}_{\alpha + \frac{2\pi}{3}}
\end{align}
for $t \geq t_{1}$ for some function $\eta(t)$ with $\eta(t_1)=e^{-k_pt_1}$. Here $D(t)$ is the inter-robot distance which is same for any two pairs. Using \cref{threecontrolopt}, \cref{initialpositions_attina2}, and the definition of $\boldsymbol{a}_{ij},b_{ij}$ from \cref{abdefs}, the dynamics of robot $i$ can be simplified to:
\begin{align}
\label{deadlock_12_rhs1}
    \dot{\boldsymbol{p}}_i  
    & = \sqrt{3}\gamma \frac{D^2(t)-D_s^2}{6D(t)} \hat{\boldsymbol{e}}_{\alpha + (4i-3)\frac{\pi}{6}}  \\
\implies
\label{deadlock_12_rhs2}
    \dot{\Delta \boldsymbol{p}}_{12} & = \gamma \frac{D^2(t)-D_s^2}{2D(t)} \hat{\boldsymbol{e}}_{\alpha} 
\end{align}
\textcolor{black}{Given this inter-robot dynamics, we show in \ref{lemma8_appendix} that the asymptotic solution of \cref{deadlock_12_rhs2} proves that $\norm{\Delta \boldsymbol{p}_{ij}(t) }$ \textit{i.e.} the inter-robot distance converges to $D_s$ and additionally, all three robots come to a halt while still being away from their goals. This establishes the incidence of deadlock.}
\begin{lemma}
\label{lemma8}
The distance between the robots converges to $D_s$ and the robots stop moving and fall in deadlock.
\end{lemma}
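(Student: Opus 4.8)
The plan is to imitate the phase-3 argument of the two-robot case (\cref{lemma4}): collapse the vector relation \eqref{deadlock_12_rhs2} to a scalar ODE for the inter-robot distance, integrate it in closed form, and then read off both the halting of the velocities and the separation from the goals. Write $D(t)=\norm{\Delta\boldsymbol{p}_{12}(t)}$; by \eqref{initialpositions_attina2} we have $\Delta\boldsymbol{p}_{12}(t)=\boldsymbol{p}_1(t)-\boldsymbol{p}_2(t)=-D(t)\hat{\boldsymbol{e}}_\alpha$, so dotting \eqref{deadlock_12_rhs2} with the constant vector $\hat{\boldsymbol{e}}_\alpha$ gives
\begin{align}
\dot D(t) = -\frac{\gamma}{2D(t)}\bigl(D^2(t)-D_s^2\bigr),\qquad t\ge t_1, \nonumber
\end{align}
with $D(t_1)>D_s$ (the critical distance at which phase~1 ends is at least $D_s$ because of the $\sqrt{\cdot}$ term in \eqref{crit3}). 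The substitution $w=D^2-D_s^2$ linearizes this to $\dot w=-\gamma w$, hence $D^2(t)=D_s^2+\bigl(D^2(t_1)-D_s^2\bigr)e^{-\gamma(t-t_1)}$, which is strictly decreasing, stays strictly above $D_s$ for all finite $t$, and tends to $D_s$ as $t\to\infty$. This proves the first assertion and simultaneously certifies that safety is never violated en route.

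Second, I would substitute this closed form back into the per-robot dynamics \eqref{deadlock_12_rhs1}: since
\begin{align}
\norm{\dot{\boldsymbol{p}}_i(t)} = \frac{\sqrt{3}\,\gamma}{6}\,\frac{D^2(t)-D_s^2}{D(t)} = \frac{\sqrt{3}\,\gamma}{6D(t)}\bigl(D^2(t_1)-D_s^2\bigr)e^{-\gamma(t-t_1)} \le \frac{\sqrt{3}\,\gamma}{6D_s}\bigl(D^2(t_1)-D_s^2\bigr)e^{-\gamma(t-t_1)}, \nonumber
\end{align}
each control input $\boldsymbol{u}_i^*=\dot{\boldsymbol{p}}_i$ decays exponentially to $\boldsymbol{0}$, so the robots come to a halt; integrability of $\norm{\dot{\boldsymbol{p}}_i}$ on $[t_1,\infty)$ also shows each $\boldsymbol{p}_i(t)$ converges to a finite limit $\boldsymbol{p}_i^\infty$, and by \cref{lemma6} together with \eqref{initialpositions_attina2} the limiting configuration is an equilateral triangle of side $D_s$ whose centroid is still $\boldsymbol{c}(0)$ (the centroid is stationary in phase~2 because $\sum_i\hat{\boldsymbol{e}}_{\alpha+(4i-3)\pi/6}=\boldsymbol{0}$, the three directions being mutually $120^\circ$ apart).

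Third — the step I expect to be the real obstacle — I would show $\boldsymbol{p}_i^\infty\neq\boldsymbol{p}_{d_i}$, so that \cref{def:deadlock_definition} applies. The cleanest route tracks each robot along the ray from the fixed centroid $\boldsymbol{c}(0)$ through $\boldsymbol{p}_i(0)$: from \eqref{threerobotpositions} in phase~1 and \eqref{initialpositions_attina2} in phase~2, $\boldsymbol{p}_i(t)-\boldsymbol{c}(0)$ is at all times a scalar multiple of $\boldsymbol{p}_i(0)-\boldsymbol{c}(0)$, and since the robots stay in an equilateral triangle of side $D(t)\ge D_s>0$ we have $\norm{\boldsymbol{p}_i(t)-\boldsymbol{c}(0)}=D(t)/\sqrt{3}\ge D_s/\sqrt{3}$, so that scalar never vanishes; being positive at $t=0$ it remains positive, i.e.\ the robot stays strictly on the same side of $\boldsymbol{c}(0)$ as its start (in particular $t_1$ necessarily precedes the degenerate instant at which the nominal flow would carry all three robots to the centroid). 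On the other hand \eqref{goal_positions_three_robots} gives $\boldsymbol{p}_{d_i}-\boldsymbol{c}(0)=\bigl(1-\tfrac{\sqrt{3}D_G}{D_{init}}\bigr)\bigl(\boldsymbol{p}_i(0)-\boldsymbol{c}(0)\bigr)$, and \cref{ass3} ($\sqrt{3}D_G>D_{init}$) makes this coefficient strictly negative, placing $\boldsymbol{p}_{d_i}$ strictly on the opposite side of the centroid. Hence $\boldsymbol{p}_i^\infty\neq\boldsymbol{p}_{d_i}$, and combined with $\boldsymbol{u}_i^*\to\boldsymbol{0}$ this is exactly deadlock in the sense of \cref{def:deadlock_definition}. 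The delicate bookkeeping is the constancy of the centroid through phase~2 and the fact that the phase-2 motion is radially inward but never overshoots the centroid — both of which become immediate once the closed form for $D(t)$ above is available.
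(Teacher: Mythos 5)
Your proposal is correct and follows essentially the same route as the paper's proof: reduce \eqref{deadlock_12_rhs2} to the scalar ODE $\dot D = -\gamma(D^2-D_s^2)/(2D)$, integrate it in closed form to get $D(t)\to D_s$, conclude from \eqref{deadlock_12_rhs1} that the velocities vanish, and invoke Assumption \ref{ass3} to certify the limiting positions differ from the goals. The only cosmetic difference is in that last step, where the paper simply compares $\lim_{t\to\infty}\Delta\boldsymbol{p}_{12}(t)=-D_s\hat{\boldsymbol{e}}_\alpha$ against $\Delta\boldsymbol{p}_{d_{12}}=(\sqrt{3}D_G-D_{init})\hat{\boldsymbol{e}}_\alpha$ directly, whereas you track the sign of the centroid-relative scalar; both hinge on the same assumption.
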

\begin{proof} \textcolor{black}{See appendix \ref{lemma8_appendix}.} \end{proof}
\textbf{Takeaway:} Deadlock happens here because each robot's velocity is always pointing towards its goal and not perpendicular to it. In phase 1, this velocity is along $\hat{\boldsymbol{u}}$ (which by definition is towards the goal). Likewise, we can show that the same holds true in phase 2. Additionally, all robots have identical speed at any given time as well. Hence, collisions would be inevitable at the centroid. Therefore, CBF-QPs cause the robots to stall which results in deadlock. 
\section{During Deadlock}
\label{in deadlock}
In the previous section, we analyzed cases of two and three robots and showed that geometric symmetry in the initial positions and goals results in deadlock. In this section, we assume that the system is already in deadlock and infer geometric properties of robot configurations that are valid in deadlock. The analysis in this section is general for $N$ robots, and not restricted to two or three robots. Recall that we defined deadlock as 
\begin{mydef}
\label{deadlock_definition2}
Robot $i$ is in deadlock if $\boldsymbol{u}^*_i=0$ and $ \boldsymbol{p}_i \neq \boldsymbol{p}_{d_i}$
\end{mydef}
We reformulate the KKT conditions \cref{stationarity1}-\cref{complimentarty slackness1} using this definition to derive geometric properties of the system in deadlock. The aim is to show that these properties are indeed consistent with what one would expect to intuitively hold true when the robots are in deadlock. 
\subsection{Reformulating KKT Conditions for Deadlock and their Mechanics Interpretation:}
\subsubsection{Primal Feasibility: }  Recall from \cref{primal_feasibility1} that $\boldsymbol{a}^T_{ij}\boldsymbol{u}^*_i \leq b_{ij} \mbox{  }\forall j \in \{1,2,\cdots,N\}\backslash i$. Since in deadlock, $\boldsymbol{u}_i^*=\boldsymbol{0}$, this means
\begin{align}
\label{primal_feasibility}
0 \leq b_{ij}   \mbox{   } \forall j \in [\mathbb{N}]\backslash i
\end{align}
From \cref{abdefs}, $b_{ij} = \frac{\gamma}{4}h_{ij} =  \frac{\gamma}{4} \big(\norm{\Delta \boldsymbol{p}_{ij}}^2 - D^2_s \big)$ which using \cref{primal_feasibility} implies
\begin{align}
\label{primal_feasibility2}
0 \leq h_{ij} \iff D_s \leq \norm{\Delta \boldsymbol{p}_{ij}}   \mbox{   } \forall j \in [\mathbb{N}]\backslash i
\end{align}
\textcolor{black}{This means that robot $i$, when in deadlock, is at-least $D_s$ distance away from every other robot and therefore safe.}
\subsubsection{Dual Feasibility: } From \cref{dual_feasibility1} recall that,
\begin{align}
\label{dual_feasibility}
{\mu^*_{ij}} \geq 0 \mbox{  }	 \forall j \in [\mathbb{N}]\backslash i
\end{align}
\subsubsection{Complementary Slackness: } From \cref{complimentarty slackness1}, recall that $\mu^*_{ij} \cdot (\boldsymbol{a}^T_{ij}\boldsymbol{u}^*_i -b_{ij}) = 0 
	 \mbox{   }\forall j \in [\mathbb{N}]\backslash i$. Substituting $\boldsymbol{u}_i^*=\boldsymbol{0}$ and $b_{ij} = \frac{\gamma}{4} h_{ij}=\frac{\gamma}{4} \big(\norm{\Delta \boldsymbol{p}_{ij}}^2 - D^2_s\big)$, we get
	 \begin{align}
	\label{complimentarty slackness}
	\mu^*_{ij} \cdot \big(\norm{\Delta \boldsymbol{p}_{ij}}^2 - D^2_s \big) =0 
	 \mbox{   }\forall j \in [\mathbb{N}]\backslash i
\end{align}
Thus, using \cref{dual_feasibility}, \cref{complimentarty slackness} and \cref{activeinactive}, the constraints that are active and inactive for robot $i$ in deadlock are
\begin{align}
\label{activeinactive1}
	\mathcal{A}(\boldsymbol{0}) &= \{j \in [\mathbb{N}]\backslash i \mid \norm{\Delta \boldsymbol{p}_{ij}}=D_s \iff  h_{ij}=0 \} \nonumber \\
	\mathcal{IA}(\boldsymbol{0})&= \{j \in [\mathbb{N}]\backslash i\mid \norm{\Delta \boldsymbol{p}_{ij}}>D_s \}
\end{align}
\textcolor{black}{Thus, active constraints refer to those robots that are exactly $D_s$ away from $i$. Alternatively, these are the robots $j$ whose positions are such that $h_{ij} =0$ \textit{i.e.} robot $i$'s safety is at the verge of being compromised thanks to these robots. In the next lemma, we show that if $i$ is in deadlock, then it can never be the case that there are no active robots.}
\begin{lemma}
\label{activesetisnonempty}
    $\mathcal{A}(\boldsymbol{0})$ is non-empty.
\end{lemma}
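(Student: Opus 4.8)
The plan is a short argument by contradiction built directly on the stationarity KKT condition. Recall from \cref{kkt_general} that, after the inactive multipliers have been eliminated through complementary slackness (\cref{cs3}), the optimal control of robot $i$ obeys $\boldsymbol{u}^*_i = \hat{\boldsymbol{u}}_i - \tfrac{1}{2}\sum_{j \in \mathcal{A}(\boldsymbol{u}^*_i)}\mu^*_{ij}\boldsymbol{a}_{ij}$. Since we are assuming robot $i$ is in deadlock, \cref{deadlock_definition2} gives $\boldsymbol{u}^*_i = \boldsymbol{0}$, so $\mathcal{A}(\boldsymbol{u}^*_i) = \mathcal{A}(\boldsymbol{0})$ and this identity rearranges into
\begin{align}
\hat{\boldsymbol{u}}_i = \frac{1}{2}\sum_{j \in \mathcal{A}(\boldsymbol{0})}\mu^*_{ij}\boldsymbol{a}_{ij}. \nonumber
\end{align}

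Now suppose, toward a contradiction, that $\mathcal{A}(\boldsymbol{0}) = \emptyset$. Then the right-hand side is an empty sum, forcing $\hat{\boldsymbol{u}}_i = \boldsymbol{0}$. But the nominal controller is $\hat{\boldsymbol{u}}_i = -k_{p_i}(\boldsymbol{p}_i - \boldsymbol{p}_{d_i})$ with gain $k_{p_i} > 0$, and the defining condition of deadlock in \cref{deadlock_definition2} includes $\boldsymbol{p}_i \neq \boldsymbol{p}_{d_i}$; hence $\hat{\boldsymbol{u}}_i \neq \boldsymbol{0}$, a contradiction. Therefore $\mathcal{A}(\boldsymbol{0})$ must be non-empty.

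There is no genuine analytical obstacle here — the only point that needs care is the legitimacy of the reduced stationarity relation \cref{kkt_general}, which holds because the KKT conditions are necessary and sufficient for this QP's global optimum and \cref{cs3} justifies discarding the multipliers of inactive constraints. The real content of the lemma is interpretive: a deadlocked robot cannot be stalled "on its own", since the nominal pull toward the goal must be exactly cancelled by a nonzero resultant of the collision-avoidance terms $\boldsymbol{a}_{ij} = -\Delta\boldsymbol{p}_{ij}$; consequently at least one neighbour $j$ must lie exactly on the safety boundary, $\norm{\Delta \boldsymbol{p}_{ij}} = D_s$, i.e.\ $h_{ij}=0$. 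This is precisely the force-equilibrium / brink-of-safety picture the section is developing, and it sets up the later claims that the deadlock set is non-empty and located on $\partial\mathcal{C}_{ij}$.
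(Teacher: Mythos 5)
Your proof is correct and follows essentially the same route as the paper's: both assume $\mathcal{A}(\boldsymbol{0})=\emptyset$, use stationarity with $\boldsymbol{u}^*_i=\boldsymbol{0}$ to force $\hat{\boldsymbol{u}}_i=\boldsymbol{0}$, and contradict this with $\boldsymbol{p}_i\neq\boldsymbol{p}_{d_i}\iff\hat{\boldsymbol{u}}_i\neq\boldsymbol{0}$ from the deadlock definition. No gaps.
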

\hspace{-0.4cm}\textbf{Intuition:} When $i$ is in deadlock, $\mathcal{A}(\boldsymbol{0})$ is the set of robots preventing it from reaching its goal. Thus, this set cannot be empty as other wise the robot would be able to move freely to reach its goal and hence not be in deadlock.
\begin{proof}
    We prove this by contradiction. Suppose that $\mathcal{A}(\boldsymbol{0})$ is empty   or said another way $\mathcal{IA}(\boldsymbol{0})=[\mathbb{N}]\backslash i$. This means that $\mu^*_{ij}=0 \mbox{ }\forall j \in [\mathbb{N}]\backslash i$ and hence $\hat{\boldsymbol{u}}_i= \boldsymbol{u}^*_i$ from stationarity \cref{stationarity1}.  
    Furthermore, since $i$ is in deadlock, from Def. \eqref{deadlock_definition2} we have that $\boldsymbol{p}_i \neq \boldsymbol{p}_{d_i} \iff \hat{\boldsymbol{u}}_i \neq \boldsymbol{0}$ since $\hat{\boldsymbol{u}}_i = -k_{p_i}(\boldsymbol{p}_i - \boldsymbol{p}_{d_i})$ However, $\boldsymbol{u}^*_i = \boldsymbol{0}$ in deadlock. This means that $\boldsymbol{u}^*_i \neq \hat{\boldsymbol{u}}_i$ in deadlock, giving rise to the contradiction. \qed 
\end{proof}
Furthermore, we can interpret the multipliers $\mu_{ij}^*$ as spring-constants of springs connecting $i$ to $\mathcal{A}(\boldsymbol{0})$. This gives rise to the force-equilibrium interpretation of stationarity: 
\subsubsection{Stationarity:} Using \cref{kkt_general} and $\boldsymbol{u}^*_i= \boldsymbol{0}$, we get
\begin{align}
\label{stationarity}
	\hat{\boldsymbol{u}}_i - \frac{1}{2}\sum_{j \in \mathcal{A}(\boldsymbol{0})}\mu^*_{ij}\boldsymbol{a}_{ij} &= \boldsymbol{0} \nonumber \\
	\iff -k_{p_i}(\boldsymbol{p}-\boldsymbol{p}_{d_i}) +\frac{1}{2}\sum_{j \in \mathcal{A}(\boldsymbol{0})}\mu^*_{ij}\Delta \boldsymbol{p}_{ij}&= \boldsymbol{0}
\end{align}
In this equation, the first term $\hat{\boldsymbol{u}}_i = -k_{p_i}(\boldsymbol{p}-\boldsymbol{p}_{d_i})$ represents an attractive force pulling the robot $i$ towards its goal $\boldsymbol{p}_{d_i}$. The second term $+ \frac{1}{2}\sum_{j \in \mathcal{A}(\boldsymbol{u}^*_i)}\mu^*_{ij}\Delta \boldsymbol{p}_{ij}$ represents the resultant repulsive force from active robots \textit{i.e.} ones that are $D_s$ away from $i$. Thus, deadlock occurs when the attractive force to the goal is balanced by the net repulsion from active robots.
\begin{figure}[t]
    \centering
    \includegraphics[width=0.8\linewidth]{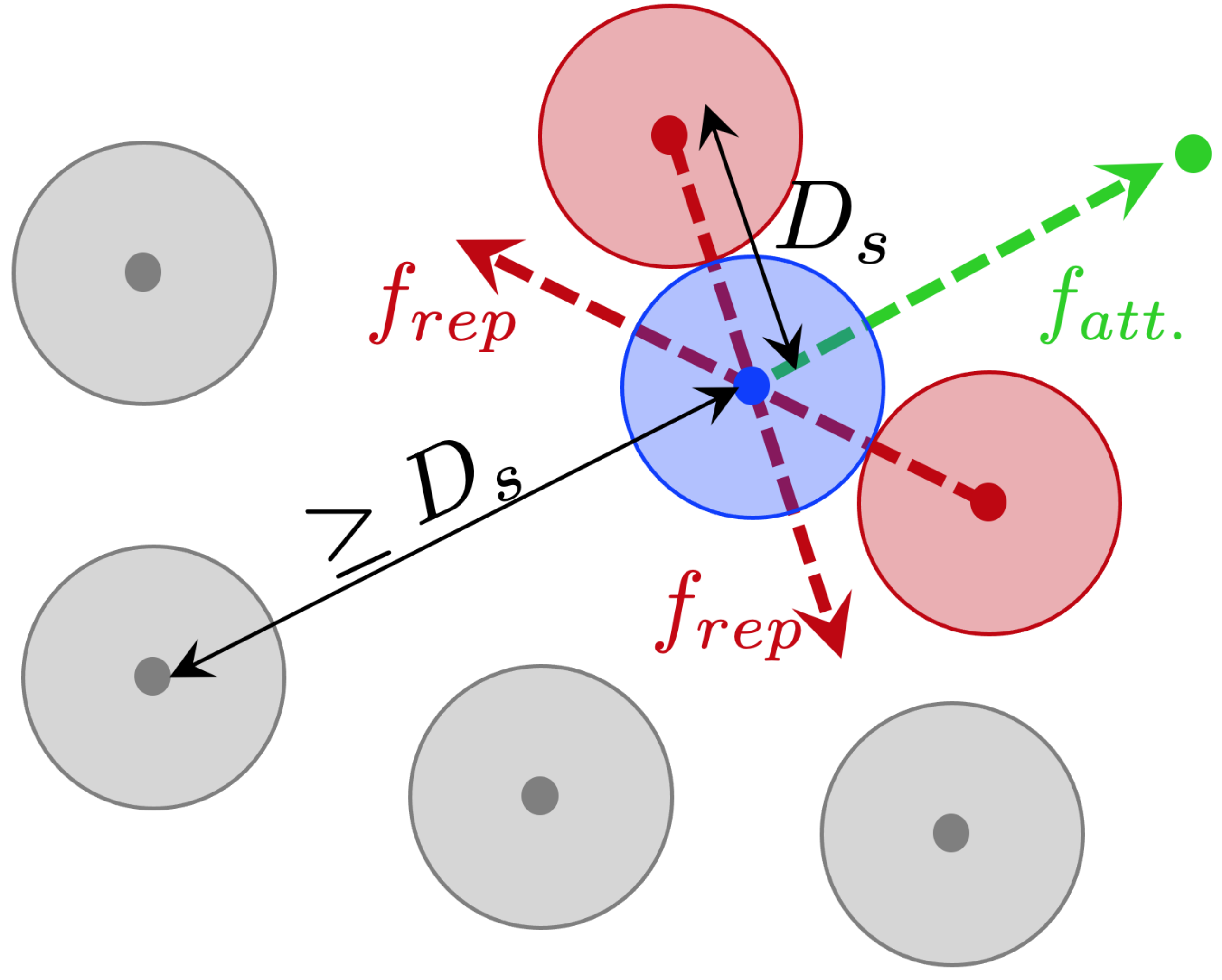}
    \captionof{figure}{An illustration of the mechanics interpretation of KKT conditions. The ego robot is shown in blue and its desired goal location in green. The green arrow represents the force attracting the ego robot towards its goal. The red robots are the ones in the active set since they are separated by exactly $D_s$, therefore, they repel the ego robot. The robots shown in gray represent the inactive constraints. }
    \label{fig:dcartoon}
\end{figure}

Based on these conditions, we motivate a set-theoretic interpretation of deadlock. Let the state of the ego robot be $\boldsymbol{p}_i \in \mathbb{R}^2$  and let the states of the remaining $N-1$ robots be denoted as $\boldsymbol{p}_{-i}= \{\boldsymbol{p}_j\}$ $\forall j \in [\mathbb{N}]\backslash i$. Define $\mathcal{D}_i$ as:
\begin{align}
\label{deadlock_def3}
	\mathcal{D}_i=\{\boldsymbol{p}_i \in \mathbb{R}^2 \mid \boldsymbol{u}^*_i(\boldsymbol{p}_i,\boldsymbol{p}_{-i}) = \boldsymbol{0}, \boldsymbol{p}_i \neq \boldsymbol{p}_{d_i}\}
\end{align}
The set $\mathcal{D}_i$ is defined as those states of robot $i$ which satisfy the criteria of $i$ being in deadlock.
We restate the $\boldsymbol{u}^*_i(\boldsymbol{p}_i,\boldsymbol{p}_{-i}) = \boldsymbol{0}$ criterion in this definition with the reformulated KKT conditions described above:
\begin{align}
\label{deadlock_def4}
	\mathcal{D}_i=&\bigg\{\boldsymbol{p}_i \in \mathbb{R}^2 \bigg| \norm{\Delta \boldsymbol{p}_{ij}} \geq D_s,\mu^*_{ij}\geq 0 \mbox{   } \forall j \in [\mathbb{N}]\backslash i, \nonumber\\ &\hat{\boldsymbol{u}}_i(\boldsymbol{p}_i) = \frac{1}{2}\sum_{j \in \mathcal{A}(\boldsymbol{0})}\mu^*_{ij} \boldsymbol{a}_{ij},\boldsymbol{p}_i \neq \boldsymbol{p}_{d_i}\bigg \}
\end{align}
The motivation behind stating this definition is to interpret deadlock as a bonafide set in the state space of the ego robot and derive a control strategy that makes the robot evade/exit this set. Building on this definition of robot $i$'s deadlock set, we motivate \textit{system deadlock} to be the  joint states of all robots for which all of them are in deadlock: 
\begin{align}
\label{sysdeadlockdefN}
\mathcal{D}_{system}
=\bigg\{(&\boldsymbol{p}_1,\cdots,\boldsymbol{p}_N)\in \mathbb{R}^{2N}\bigg| \boldsymbol{p}_i \in \mathcal{D}_i \mbox{  }, i \in \nonumber \\ & \{1,\cdots,N\} \bigg\} = \prod_{i=1}^N\mathcal{D}_i  
\end{align}
In the rest of the paper, we will focus on system deadlock. This is because the case where only a subset of robots are in deadlock can be decomposed into subproblems where a subset is in \textit{system deadlock} and the remaining robots are free to move. The next section focuses on complexity analysis of \textit{system deadlock}. 

\subsection{Graph Enumeration based Complexity Analysis}
\label{complexity}
The Lagrange multipliers $\mu^*_{ij}$  are in general, a nonlinear function of the states of the robots. Their values depend on which constraints are active/inactive. An active constraint will in-turn determine the set of possible geometric configurations that the robots can take when they are in deadlock and this in turn will guide the design of our deadlock resolution algorithm.  Therefore, we are interested in deriving all possible combinations of active/inactive constraints that the robots can assume once in deadlock. But first we derive upper and lower bounds for the number of valid configurations in \textit{system deadlock}. 

We can interpret an active constraint between robots $i$ and $j$ as an undirected edge between vertices $i$ and $j$ in a graph formed by $N$ labeled vertices, where each vertex represents a robot. The following property (which follows from symmetry) allows the edges to be undirected.
\begin{lemma}
\label{symm}
    If robot $i$ and $j$ are both in deadlock and  $i's$  constraint with $j$ is active (inactive), then  $j's$  constraint with $i$ is also active (inactive).
\end{lemma}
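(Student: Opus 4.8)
\textbf{Proof proposal for Lemma \ref{symm}.}

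The plan is to exploit the structural symmetry between the constraint that robot $i$ imposes on itself (its constraint with $j$) and the one robot $j$ imposes on itself (its constraint with $i$), using the fact that \emph{both} robots are simultaneously in deadlock, so $\boldsymbol{u}^*_i = \boldsymbol{u}^*_j = \boldsymbol{0}$. First I would write out what ``active'' and ``inactive'' mean in deadlock for each robot using the reformulation in \cref{activeinactive1}: robot $i$'s constraint with $j$ is active iff $\norm{\Delta \boldsymbol{p}_{ij}} = D_s$ (equivalently $h_{ij}=0$), and inactive iff $\norm{\Delta \boldsymbol{p}_{ij}} > D_s$. Symmetrically, robot $j$'s constraint with $i$ is active iff $\norm{\Delta \boldsymbol{p}_{ji}} = D_s$ and inactive iff $\norm{\Delta \boldsymbol{p}_{ji}} > D_s$. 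The key observation is simply that $\norm{\Delta \boldsymbol{p}_{ij}} = \norm{\boldsymbol{p}_i - \boldsymbol{p}_j} = \norm{\boldsymbol{p}_j - \boldsymbol{p}_i} = \norm{\Delta \boldsymbol{p}_{ji}}$, i.e. the pairwise safety index is symmetric: $h_{ij} = h_{ji}$ by the definition \eqref{hdef}. Hence the condition $\norm{\Delta \boldsymbol{p}_{ij}} = D_s$ that characterizes activeness of $i$'s constraint with $j$ is literally the same numerical condition as the one characterizing activeness of $j$'s constraint with $i$, and likewise for the strict inequality characterizing inactiveness.

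Concretely the argument is: assume $i$ and $j$ are both in deadlock. If $j \in \mathcal{A}(\boldsymbol{0})$ for robot $i$, then by \cref{activeinactive1} we have $\norm{\Delta \boldsymbol{p}_{ij}} = D_s$; since $\norm{\Delta \boldsymbol{p}_{ji}} = \norm{\Delta \boldsymbol{p}_{ij}} = D_s$, the defining condition for $i \in \mathcal{A}(\boldsymbol{0})$ relative to robot $j$ holds, so $j$'s constraint with $i$ is active. The inactive case is identical, replacing $=D_s$ by $>D_s$. Since for each ordered pair exactly one of ``active'' or ``inactive'' holds (the two sets in \cref{activeinactive1} partition $[\mathbb{N}]\backslash i$), this shows the activeness status is symmetric, which is exactly the claim. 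This also justifies, as the sentence preceding the lemma anticipates, modeling an active constraint as an \emph{undirected} edge in the graph on the $N$ robot-vertices.

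I expect no real obstacle here: the lemma is essentially a bookkeeping consequence of the symmetry $h_{ij}=h_{ji}$ together with the deadlock reformulation of complementary slackness already derived in \cref{activeinactive1}. The only point requiring a little care is that the characterization of active/inactive via the distance $\norm{\Delta \boldsymbol{p}_{ij}}$ (rather than via the dual variable $\mu^*_{ij}$ directly) relies on $\boldsymbol{u}^*_i = \boldsymbol{0}$ and $\boldsymbol{u}^*_j = \boldsymbol{0}$, which is why the hypothesis that \emph{both} robots are in deadlock is needed --- without it one cannot collapse $\boldsymbol{a}^T_{ij}\boldsymbol{u}^*_i = b_{ij}$ to the purely geometric condition $\norm{\Delta \boldsymbol{p}_{ij}} = D_s$. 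I would state this dependence explicitly so the reader sees where the deadlock assumption is used.
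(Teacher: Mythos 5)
Your proposal is correct and follows essentially the same route as the paper: both arguments use the fact that deadlock ($\boldsymbol{u}^*_i=\boldsymbol{u}^*_j=\boldsymbol{0}$) collapses the activity condition to the symmetric geometric statement $b_{ij}=b_{ji}=0$ (equivalently $\norm{\Delta\boldsymbol{p}_{ij}}=D_s$), with the inactive case following from the partition. Your version is slightly more explicit about where the deadlock hypothesis for \emph{each} robot is used, which is a fair point the paper's terser proof leaves implicit.
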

\hspace{-0.4cm}\textbf{Intuition:} If $i$ and $j$ are in deadlock and $i$ perceives that $j$ is at the verge of violating $i's$ safety, then $j$ also perceives that $i$ is at the verge of violating its safety.
\begin{proof}
Since $i$ is in deadlock and its constraint with $j$ is active, we have $\boldsymbol{a}^T_{ij}\boldsymbol{u}^*_i=b_{ij}=0 \iff b_{ji}=0$, since from, \cref{abdefs} we have that $b_{ij}=b_{ji}$. Since $j$ is in deadlock, $\boldsymbol{a}^T_{ji}\boldsymbol{u}^*_j=0 \implies b_{ji}=\boldsymbol{a}^T_{ji}\boldsymbol{u}^*_j$. This means $j's$ constraint with $i$ is also active. \qed
\end{proof}
\subsubsection{Upper Bound}
Given $N$ vertices, there are ${}^{N}C_{2}$ distinct pairs of edges possible. The overall system can have any subset of those edges. Since a set with ${}^{N}C_{2}$ members has $2^{{}^{N}C_{2}}$ subsets, we conclude that there are $2^{{}^{N}C_{2}}$ possible graphs. In other words, given $N$ robots, the number of configurations that are admissible in deadlock is $2^{{}^{N}C_{2}}$. However, this number is an upper bound because it includes cases where a given vertex can be disconnected from all other vertices, which is not valid in \textit{system deadlock} because $\mathcal{A}(\boldsymbol{0})$ cannot be empty (\textbf{Lemma} \ref{activesetisnonempty}).

\subsubsection{Lower Bound}
Since $\mathcal{A}(\boldsymbol{0})$ is non-empty,  each vertex in the graph will have at-least one edge \textit{i.e.} each robot will have at-least one constraint active with some other robot. From this observation, it follows that the set of graphs valid in \textit{system deadlock} is a superset of connected simple graphs that can be formed by $N$ labeled vertices. It is also possible that some simply connected graphs may not be feasible due to restrictions imposed by Euclidean geometry.

Graphs that are (a) simple and connected, (b) have $N$ labelled vertices (since each robot has an ID), (c) are embedded in $\mathbb{R}^2$ (environment is planar), (d) have Euclidean distance between connected vertices equal to $D_s$, (e) that between unconnected vertices greater than $D_s$, and (f) have one or two edges per vertex (possibly more), necessarily represent admissible geometric configurations of robots in \textit{system deadlock}. \textcolor{black}{This follows because every graph satisfying qualifiers (a)-(f) can be mapped to a geometric arrangement of $N$ robots in deadlock.}  A lower bound for graphs satisfying all qualifiers (a)-(f) can be shown to be $0.5(N+1)(N-1)!$ as follows ($N\geq 3$). 

Consider a cyclic graph whose each node is the vertex of an $N$ regular polygon with side $D_s$. Such a graph necessarily satisfies (a)-(f). Re-arrangements of its vertices gives rise to $0.5(N-1)!$ graphs. Likewise, a graph with nodes along an open chain also satisfies (a)-(f), and gives $0.5N!$ rearrangements. Thus, the total is $0.5(N-1)!+0.5N!=0.5(N+1)(N-1)!$. It is well known that factorial overtakes exponential, thus highlighting the increase in the number of geometric configurations. Our MATLAB simulations show that the exact number of configurations for $N=\{1,\mbox{ }2,\mbox{ }3,\mbox{ }4\}$ are $\{1,\mbox{ }1,\mbox{ }4,\mbox{ }18\}$ whereas our bound gives $\{1,\mbox{ }1,\mbox{ }4,\mbox{ }15\}$. This simulation demonstrates the explosion in possible geometric configurations that are admissible in \textit{system deadlock} with increasing number of robots. Therefore for further analysis, we will restrict to the case of two and three robots. 
\subsection{Characteristics of Two-Robot Deadlock}
\label{two_robot_analysis}
In this section, we give some specific properties of a two-robot system in deadlock which we will use later to resolve deadlocks in two robots. We show that these properties hold true regardless of the choice of initial conditions.

In the two-robot system, an individual robot by itself cannot be in deadlock \textit{i.e.} either both robots are in deadlock or neither. This  is because the sole collision avoidance constraint is symmetric due to \textbf{Lemma} \ref{symm}. From \textbf{Lemma} \ref{activesetisnonempty}, each robot perceives the other robot as active. Define the \textit{system deadlock} set $\mathcal{D}_{system}$ using \cref{deadlock_def4,sysdeadlockdefN}:
\begin{align}
\label{sysdeadlockdef}
\mathcal{D}_{system}
=&\bigg\{(\boldsymbol{p}_1,\boldsymbol{p}_2)\in \mathbb{R}^4 \bigg|\norm{\Delta \boldsymbol{p}_{12}} \geq D_s,\mu^*_{12},\mu^*_{21}\geq 0, \nonumber \\ 
&\hat{\boldsymbol{u}}_1(\boldsymbol{p}_1) = \frac{1}{2}\mu^*_{12} \boldsymbol{a}_{12},\hat{\boldsymbol{u}}_2(\boldsymbol{p}_2) = \frac{1}{2}\mu^*_{21} \boldsymbol{a}_{21}, \nonumber \\ 
&\boldsymbol{p}_1 \neq \boldsymbol{p}_{d_1},\boldsymbol{p}_2 \neq \boldsymbol{p}_{d_2}
\bigg \}.
\end{align}
where $\boldsymbol{a}_{12} =-\boldsymbol{a}_{21} = -(\boldsymbol{p}_1-\boldsymbol{p}_2)$. From \eqref{stationarity}, we have that $\hat{\boldsymbol{u}}_i= \frac{1}{2}\mu^*_{ij} \boldsymbol{a}_{ij}$. This gives
\begin{align}
\label{Lagrange Multipliers}
	\mu^*_{ij} = 2\frac{\boldsymbol{a}_{ij}^T\hat{\boldsymbol{u}}_i}{\norm{ \boldsymbol{a}_{ij}}^2}
\end{align}
\begin{theorem}\textbf{Safety Margin Apart.}
\label{touching}
In deadlock, the two robots are separated by the safety distance and the robots are on the verge of violating safety (Fig. \ref{fig:D4.png})
\end{theorem}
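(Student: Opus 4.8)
The plan is to obtain this as a direct specialization of the reformulated KKT conditions of \cref{in deadlock} together with Lemma~\ref{activesetisnonempty}, taking $N=2$. First I would note that, as argued just above the statement, in a two-robot system either both robots are in deadlock or neither; so it suffices to assume $\boldsymbol{p}_1 \in \mathcal{D}_1$ and $\boldsymbol{p}_2 \in \mathcal{D}_2$ simultaneously, and to show that then $\norm{\Delta\boldsymbol{p}_{12}} = D_s$ and $(\boldsymbol{p}_1,\boldsymbol{p}_2) \in \partial\mathcal{C}_{12}$.

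The core step is to show that the sole pairwise constraint is active. For robot $i$, the active set $\mathcal{A}(\boldsymbol{0})$ is contained in $[\mathbb{N}]\backslash i$, which for $N=2$ is the singleton consisting of the other robot $j$; since Lemma~\ref{activesetisnonempty} guarantees $\mathcal{A}(\boldsymbol{0})$ is non-empty, we must have $\mathcal{A}(\boldsymbol{0}) = \{j\}$. By the deadlock characterization of active constraints in \cref{activeinactive1}, $j \in \mathcal{A}(\boldsymbol{0})$ is equivalent to $\norm{\Delta\boldsymbol{p}_{ij}} = D_s$, i.e. $h_{ij}=0$, which is the first claim.

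I would also run the argument through stationarity, because it makes the ``verge of violating safety'' statement transparent and reuses \eqref{Lagrange Multipliers}. Stationarity in deadlock reads $\hat{\boldsymbol{u}}_i = \tfrac{1}{2}\mu^*_{ij}\boldsymbol{a}_{ij}$ with $\boldsymbol{a}_{ij} = -\Delta\boldsymbol{p}_{ij} \neq \boldsymbol{0}$; since $\boldsymbol{p}_i \neq \boldsymbol{p}_{d_i}$ in deadlock (Definition~\ref{deadlock_definition2}) we have $\hat{\boldsymbol{u}}_i = -k_{p_i}(\boldsymbol{p}_i - \boldsymbol{p}_{d_i}) \neq \boldsymbol{0}$, which forces the multiplier to be strictly positive, $\mu^*_{ij} > 0$ (not merely the dual-feasibility inequality $\mu^*_{ij}\geq 0$). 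Complementary slackness in deadlock, $\mu^*_{ij}\big(\norm{\Delta\boldsymbol{p}_{ij}}^2 - D_s^2\big) = 0$, then gives $\norm{\Delta\boldsymbol{p}_{ij}} = D_s$. Finally, by \cref{safetysetdef}, $h_{12}=0$ is exactly the condition that the joint state lies on $\partial\mathcal{C}_{12}$, the boundary of the pairwise safe set, so the robots graze each other at separation exactly $D_s$, which is the precise meaning of being on the verge of violating safety. One may also phrase this mechanically: the force balance \cref{stationarity} with a single active neighbor reduces to two equal and opposite repulsion forces directed along $\Delta\boldsymbol{p}_{12}$ exactly cancelling the goal attractions.

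I do not expect a genuine obstacle: the theorem merely says that the primal-feasibility inequality $h_{12}\geq 0$ is tight in deadlock, and the only delicate point is upgrading $\mu^*_{12}\geq 0$ to $\mu^*_{12}>0$, which is precisely where the deadlock hypothesis $\boldsymbol{p}_i \neq \boldsymbol{p}_{d_i}$ enters (equivalently, this is the content of Lemma~\ref{activesetisnonempty}). Independence from the initial conditions is automatic, since nothing in the argument refers to anything but the KKT conditions evaluated at the deadlock configuration; I would remark on this explicitly to match the sentence preceding the theorem.
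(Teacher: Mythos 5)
Your proposal is correct and its core step — invoking Lemma~\ref{activesetisnonempty} to conclude that for $N=2$ the single non-ego robot must lie in $\mathcal{A}(\boldsymbol{0})$, then reading off $\norm{\Delta\boldsymbol{p}_{12}}=D_s$ from \cref{activeinactive1} — is exactly the paper's proof. The additional pass through stationarity and complementary slackness is a sound but redundant unfolding of what Lemma~\ref{activesetisnonempty} already encapsulates.
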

\begin{proof}
Since both robots are in deadlock, each perceives the other as active \textit{i.e.} $\mathcal{A}_i(\boldsymbol{0}) = \{1,2\}\backslash i$ for $i=\{1,2\}$. Then, straightforward application of  \cref{activeinactive1} gives this result. \qedsymbol
\end{proof}
% \begin{proof}
% Since both robots are in deadlock, we know that both of their collision avoidance constraints are active \textit{i.e.} $
% \boldsymbol{a}^T_{12}\boldsymbol{u}_1^*= {b}_{12} \mbox{  ,  } \boldsymbol{a}^T_{21}\boldsymbol{u}_2^*= {b}_{21} \nonumber 
% $ and $\boldsymbol{u}_1^*=\boldsymbol{0}$ and  $\boldsymbol{u}_2^*=\boldsymbol{0}$. This implies ${b}_{12}={b}_{21}=0$. Using \eqref{abdefs} we get 
% \begin{align}
% \label{actuallyneededinboundary}
% &{b}_{12}=\frac{\gamma}{4}h_{12}=0 \implies h_{12}=0 \nonumber \\
% &\implies  \norm{\Delta \boldsymbol{p}_{12}}^2-D^2_{s} = 0 \mbox{ } \mbox{ from \eqref{hdef}}\nonumber \\ 
% &\implies \norm{\Delta \boldsymbol{p}_{12}}=D_{s}
% \end{align}
% Additionally, recalling the definition from $\partial \mathcal{C}$ from \cref{safetysetdef} we deduce that, in deadlock,  $(\boldsymbol{p}_1,\boldsymbol{p}_2) \in \partial \mathcal{C}$ \textit{i.e.} $\mathcal{D}_{system} \subset \partial \mathcal{C}$. Alternatively, we could have invoked  \cref{activeinactive1} where  we showed that active robots are separated by $D_s$.  \qed
% \end{proof}
\begin{figure}[t]
\centering      
\subfigure[Two Robot Equilibrium]{\label{fig:D4.png}\includegraphics[trim={0.0cm 15.8cm 11.94cm 2.7cm},clip,width=.49\columnwidth]{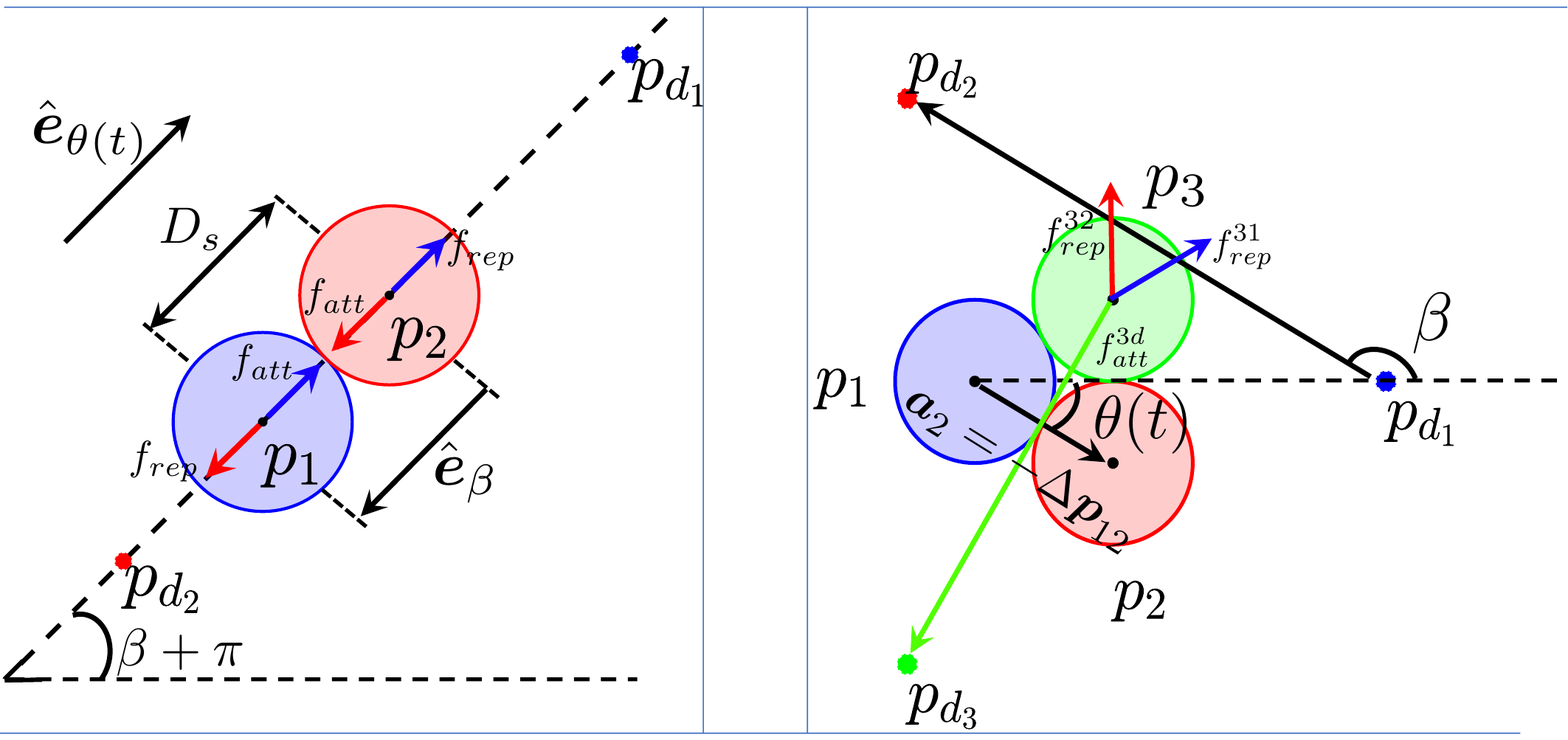}}
\subfigure[Three Robot Equilibrium]{\label{fig:D4.jpeg}\includegraphics[trim={11.2cm 15.4cm 0.0cm 2.7cm},clip,width=.49\columnwidth]{A.pdf}}
\caption{Force Equilibrium in Deadlock}
\label{equib._deadlock_two_three_robots}
\end{figure}
\begin{theorem}\textbf{$\mathcal{D}_{system}$ is Non-Empty.}
\label{nonempty}
$\forall\mbox{ } k_p,D_s>0,\exists$ a family of states $(\boldsymbol{p}^*_1,\boldsymbol{p}^*_2) \in \mathcal{D}_{system}$. These states are such that the robots and their goals are all collinear.
\end{theorem}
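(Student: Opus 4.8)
The plan is to exhibit the family explicitly and check membership in $\mathcal{D}_{system}$ against the reformulated KKT description in \eqref{sysdeadlockdef}, using \textbf{Theorem \ref{touching}} to justify that the two robots must sit exactly $D_s$ apart. First I would fix a unit vector $\hat{\boldsymbol{e}}$ pointing along $\boldsymbol{p}_{d_1}-\boldsymbol{p}_{d_2}$ (any unit vector when the goals coincide) and write $\boldsymbol{p}_{d_1}=\boldsymbol{p}_{d_2}+L\hat{\boldsymbol{e}}$, $L=\norm{\boldsymbol{p}_{d_1}-\boldsymbol{p}_{d_2}}\ge 0$. Introducing the scalar coordinate $s$ along this line with $\boldsymbol{p}_{d_2}$ at $s=0$ and $\boldsymbol{p}_{d_1}$ at $s=L$, I would propose the one-parameter family $\boldsymbol{p}_1^*(t)=\boldsymbol{p}_{d_2}+t\,\hat{\boldsymbol{e}}$ and $\boldsymbol{p}_2^*(t)=\boldsymbol{p}_{d_2}+(t+D_s)\hat{\boldsymbol{e}}$ for $t\in(-D_s,L)$. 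All four points $\boldsymbol{p}_1^*,\boldsymbol{p}_2^*,\boldsymbol{p}_{d_1},\boldsymbol{p}_{d_2}$ then lie on one line by construction, which settles the collinearity claim, and the interval $(-D_s,L)$ is non-empty for every $L\ge 0$, so this is a genuine (indeed infinite) family.

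Next I would verify each clause of \eqref{sysdeadlockdef} on this family. Safety is immediate: $\norm{\Delta\boldsymbol{p}_{12}}=D_s$. For the multipliers, substitute $\boldsymbol{a}_{12}=\boldsymbol{p}_2^*-\boldsymbol{p}_1^*=D_s\hat{\boldsymbol{e}}$ and $\hat{\boldsymbol{u}}_1=-k_{p_1}(\boldsymbol{p}_1^*-\boldsymbol{p}_{d_1})=k_{p_1}(L-t)\hat{\boldsymbol{e}}$ into \eqref{Lagrange Multipliers} to obtain $\mu_{12}^*=2k_{p_1}(L-t)/D_s$, which is positive because $t<L$; the symmetric computation gives $\mu_{21}^*=2k_{p_2}(t+D_s)/D_s>0$ because $t>-D_s$. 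With these values the stationarity identity $\hat{\boldsymbol{u}}_i=\tfrac12\mu_{ij}^*\boldsymbol{a}_{ij}$ holds automatically: $\hat{\boldsymbol{u}}_1$ and $\boldsymbol{a}_{12}$ are parallel, and $\mu_{12}^*$ is by definition the coefficient that makes $\tfrac12\mu_{12}^*\boldsymbol{a}_{12}$ the orthogonal projection of $\hat{\boldsymbol{u}}_1$ onto $\boldsymbol{a}_{12}$, hence equal to $\hat{\boldsymbol{u}}_1$ itself. Finally $\boldsymbol{p}_1^*\neq\boldsymbol{p}_{d_1}$ since $t\neq L$ and $\boldsymbol{p}_2^*\neq\boldsymbol{p}_{d_2}$ since $t\neq -D_s$.

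To be fully rigorous about the claim ``$\boldsymbol{u}^*_i=\boldsymbol{0}$'', I would additionally confirm that $(\boldsymbol{u}_i^*,\mu_{ij}^*)=(\boldsymbol{0},\mu_{ij}^*)$ with the $\mu_{ij}^*$ above satisfies all four KKT conditions \eqref{stationarity1}--\eqref{complimentarty slackness1} of the QP \eqref{optimization_formulation_2}: stationarity has just been arranged; primal feasibility reads $\boldsymbol{a}_{ij}^T\boldsymbol{0}=0\le b_{ij}$ with $b_{ij}=\tfrac{\gamma}{4}(D_s^2-D_s^2)=0$; dual feasibility is the positivity above; and complementary slackness is $\mu_{ij}^*(0-0)=0$. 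Since the QP is strictly convex these conditions are necessary and sufficient for its global optimum, so $\boldsymbol{u}_i^*=\boldsymbol{0}$ is exactly the control returned by \eqref{optimization_formulation_2} for $i=1,2$, and hence $(\boldsymbol{p}_1^*,\boldsymbol{p}_2^*)\in\mathcal{D}_{system}$.

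I expect the only genuine difficulty to be getting the geometry right: the robots must be placed on the line through the goals with each robot $j$ on the goal side of the other robot $i$, so that the repulsion $\tfrac12\mu_{ij}^*\Delta\boldsymbol{p}_{ij}$ points opposite to the goal-attraction $\hat{\boldsymbol{u}}_i$ — this is precisely what forces $\mu_{ij}^*\ge 0$ and what carves out the admissible range $t\in(-D_s,L)$. Everything after that placement is bookkeeping. The one edge case worth a sentence is $\boldsymbol{p}_{d_1}=\boldsymbol{p}_{d_2}$ ($L=0$), but the same parametrization still produces the non-empty family $t\in(-D_s,0)$, so no separate treatment is needed.
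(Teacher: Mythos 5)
Your proof is correct and follows essentially the same route as the paper's: place the two robots $D_s$ apart on the line through the goals, compute $\mu^*_{12},\mu^*_{21}$ from \eqref{Lagrange Multipliers}, observe they are positive on the admissible parameter range, and note that stationarity holds by construction since $\hat{\boldsymbol{u}}_i$ is parallel to $\boldsymbol{a}_{ij}$. The only differences are that you verify the full KKT system of \eqref{optimization_formulation_2} (primal feasibility and complementary slackness included) to certify $\boldsymbol{u}^*_i=\boldsymbol{0}$, exhibit a slightly larger family ($t\in(-D_s,L)$ versus the paper's strict interior of the goal segment), and cover the coincident-goal edge case that the paper's construction implicitly excludes — all harmless strengthenings.
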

\begin{proof}
To prove this theorem, we propose a set of candidate states $(\boldsymbol{p}^*_1,\boldsymbol{p}^*_2)$ and show that they satisfy the definition of deadlock \cref{sysdeadlockdef}. See Fig. \ref{fig:D4.png} for an illustration of geometric quantities referred to in this proof. 

Let $\boldsymbol{p}^*_1=\alpha \boldsymbol{p}_{d_1} + (1-\alpha) \boldsymbol{p}_{d_2}$ and $\boldsymbol{p}^*_2= \boldsymbol{p}^*_1 - D_s\hat{e}_{\beta}$ where $\beta = \mbox{tan}^{-1}(\frac{y_{d_2}-y_{d_1}}{x_{d_2}-x_{d_1}})$ and $\alpha \in (0,1)$. Note that $\boldsymbol{p}^*_1,\boldsymbol{p}^*_2, \boldsymbol{p}_{d_1}, \boldsymbol{p}_{d_2}$ are collinear by construction.  Then we will show that $(\boldsymbol{p}^*_1,\boldsymbol{p}^*_2) \in \mathcal{D}_{system}$. 
Note that
\begin{align}
\centering
\label{cont}
\boldsymbol{a}_{12}&=-(\boldsymbol{p}^*_1-\boldsymbol{p}^*_2) =-D_{s}\hat{e}_{\beta} \nonumber \\
\hat{\boldsymbol{u}}_1&=-k_{p_1}(\boldsymbol{p}^*_1-\boldsymbol{p}_{d_1}) 
\end{align}
From definition, $\hat{e}_{\beta} 
    =\frac{1}{D_G}(x_{d_2}-x_{d_1},y_{d_2}-y_{d_1})$ where $D_G = \norm{\boldsymbol{p}_{d_2}-\boldsymbol{p}_{d_1}}$ is the distance between the goals. Therefore, we have 
\begin{align}
\label{etheta}
\boldsymbol{p}^*_1-\boldsymbol{p}_{d_1} &= 
% \alpha \boldsymbol{p}_{d_1} + (1-\alpha) \boldsymbol{p}_{d_2} - \boldsymbol{p}_{d_1} \nonumber \\
-(1-\alpha) \boldsymbol{p}_{d_1} + (1-\alpha) \boldsymbol{p}_{d_2} \nonumber \\
% &= (1-\alpha)(x_{d_2}-x_{d_1},y_{d_2}-y_{d_1}) \nonumber \\
% &= (1-\alpha)D_{G}\bigg(\frac{x_{d_2}-x_{d_1}}{D_G},\frac{y_{d_2}-y_{d_1}}{D_G}\bigg) \nonumber \\
&= (1-\alpha)D_{G}\hat{e}_{\beta}
\end{align}
Substituting \cref{etheta} in \cref{cont} gives 
\begin{align}
\label{lhs}
\hat{\boldsymbol{u}}_1=-k_{p_1}(1-\alpha)D_{G}\hat{e}_{\beta}
\end{align}
From \cref{Lagrange Multipliers}, \cref{cont} and \cref{lhs}, we deduce that  $\mu^*_{12} $
\begin{align}
\label{lmult}
 &\mu^*_{12} = 2\frac{\boldsymbol{a}^T_{12}\hat{\boldsymbol{u}}_1}{\norm{ \boldsymbol{a}_{12}}^2_2} = 2k_{p_1}(1-\alpha)\frac{D_G}{D_s} >0\mbox{  }\forall \alpha \in (0,1) 
\end{align}
 In \cref{lmult} we have shown that the Lagrange multiplier $\mu_{12}^*$ is positive, which is one condition in \cref{sysdeadlockdef}. We can similarly show that $\mu^*_{21}>0$. Further, $\hat{\boldsymbol{u}}_1=\frac{1}{2}\mu^*_{12} \boldsymbol{a}_{12}$ is trivially satisfied because of the way we computed the multiplier in \eqref{Lagrange Multipliers}. Likewise, $\hat{\boldsymbol{u}}_2=\frac{1}{2}\mu^*_{21} \boldsymbol{a}_{21}$ will be satisfied.  Restricting  $\alpha \in (0,1)$  ensures that $\boldsymbol{p}^*_i\neq\boldsymbol{p}_{d_i}$. Hence, the proposed states are always in deadlock as all conditions in \cref{sysdeadlockdef} are met. \qed
\end{proof}
\begin{theorem}\textbf{$\mathcal{D}_{system}$ is zero-measure.}
The \textit{system deadlock} set is measure zero.
\end{theorem}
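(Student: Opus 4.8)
The plan is to squeeze $\mathcal{D}_{system}$ inside a codimension-one subset of the joint configuration space $\mathbb{R}^{4}$ and then invoke the elementary fact that such subsets are Lebesgue-null. First I would appeal to Theorem~\ref{touching}: every $(\boldsymbol{p}_1,\boldsymbol{p}_2)\in\mathcal{D}_{system}$ satisfies $\norm{\Delta\boldsymbol{p}_{12}}=D_s$, since both robots lie in each other's active set. Hence $\mathcal{D}_{system}\subseteq\partial\mathcal{C}_{12}=\{(\boldsymbol{p}_1,\boldsymbol{p}_2)\in\mathbb{R}^{4}\mid \norm{\boldsymbol{p}_1-\boldsymbol{p}_2}^2-D_s^2=0\}$. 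Because any subset of a Lebesgue-null set is itself null, it suffices to prove that $\partial\mathcal{C}_{12}$ has $4$-dimensional Lebesgue measure zero.

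For the latter I would argue in one of two equivalent ways. The quick route: $g(\boldsymbol{p}_1,\boldsymbol{p}_2):=\norm{\boldsymbol{p}_1-\boldsymbol{p}_2}^2-D_s^2$ is a polynomial, hence real-analytic on the connected domain $\mathbb{R}^4$, and it is not identically zero (it is positive for far-apart robots); the zero locus of such a function is a proper algebraic variety and therefore has Lebesgue measure zero. The hands-on route, if a self-contained statement is preferred: on $\partial\mathcal{C}_{12}$ one has $\nabla g=2(\boldsymbol{p}_1-\boldsymbol{p}_2,\ \boldsymbol{p}_2-\boldsymbol{p}_1)\neq\boldsymbol{0}$, because $\boldsymbol{p}_1\neq\boldsymbol{p}_2$ whenever $D_s>0$, so $0$ is a regular value of $g$ and, by the regular value theorem, $\partial\mathcal{C}_{12}$ is a smooth embedded $3$-manifold in $\mathbb{R}^4$; covering it by countably many coordinate charts, in each of which it is the graph of a smooth map $\mathbb{R}^3\to\mathbb{R}$, exhibits it as a countable union of null sets. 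Either way $|\partial\mathcal{C}_{12}|=0$, hence $|\mathcal{D}_{system}|=0$. (The same reasoning extends verbatim to $N$ robots: by Lemma~\ref{activesetisnonempty} some pair is exactly $D_s$ apart, so $\mathcal{D}_{system}\subseteq\bigcup_{i<j}\{\norm{\Delta\boldsymbol{p}_{ij}}=D_s\}$, a finite union of null subsets of $\mathbb{R}^{2N}$.)

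Optionally, to sharpen the containment and make the result visibly consistent with Theorem~\ref{nonempty}, I would add a remark: the stationarity conditions $\hat{\boldsymbol{u}}_i=\tfrac12\mu^*_{ij}\boldsymbol{a}_{ij}$ in \eqref{sysdeadlockdef} force $\boldsymbol{p}_i-\boldsymbol{p}_{d_i}$ to be collinear with $\boldsymbol{p}_i-\boldsymbol{p}_j$ for each $i$, so (using $\boldsymbol{p}_1\neq\boldsymbol{p}_2$) all four points $\boldsymbol{p}_1,\boldsymbol{p}_2,\boldsymbol{p}_{d_1},\boldsymbol{p}_{d_2}$ are collinear; together with $\norm{\boldsymbol{p}_1-\boldsymbol{p}_2}=D_s$ this pins $\mathcal{D}_{system}$ down to a finite union of line segments lying on the (fixed) line through the goals, i.e.\ a one-dimensional set, a fortiori null — and matching the one-parameter family $\alpha\in(0,1)$ constructed in Theorem~\ref{nonempty}.

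I do not anticipate a genuine obstacle. The only points requiring care are (i) stating explicitly that ``measure'' here means the $2N$-dimensional (here $4$-dimensional) Lebesgue measure on the \emph{joint} state space, rather than a lower-dimensional Hausdorff measure restricted to $\partial\mathcal{C}_{12}$ — relative to which $\mathcal{D}_{system}$ need not be small; and (ii) citing the standard lemma that a $C^1$ submanifold of $\mathbb{R}^n$ of dimension strictly less than $n$ is Lebesgue-null (equivalently, that the zero set of a nonzero real-analytic function is null). Both are routine, so the ``hard part'' is essentially bookkeeping about the ambient space rather than any substantive estimate.
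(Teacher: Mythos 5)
Your proposal is correct, and it reaches the conclusion by a route that is related to but cleaner than the paper's. The paper's own proof exhibits a single scalar equation satisfied by every deadlock state, namely $\norm{\boldsymbol{p}_1-\boldsymbol{p}_{d_1}}+\norm{\boldsymbol{p}_2-\boldsymbol{p}_{d_2}}=D_s+D_G$ (an ellipse-type constraint with foci at the goals, which packages together both the collinearity from \cref{nonempty} and the $D_s$-separation from \cref{touching}), and then simply asserts that this ``can be verified by straightforward substitution,'' leaving implicit both the derivation and the measure-theoretic step from ``one nontrivial scalar constraint'' to ``Lebesgue-null in $\mathbb{R}^4$.'' You instead contain $\mathcal{D}_{system}$ in the level set $\{\norm{\Delta\boldsymbol{p}_{12}}=D_s\}$, which follows immediately from \cref{touching}, and you supply exactly the missing justification (zero locus of a nonzero polynomial, or the regular value theorem plus the fact that a proper $C^1$ submanifold is null). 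What your approach buys is rigor and generality: the argument extends verbatim to $N$ robots via Lemma~\ref{activesetisnonempty} and a finite union of null sets, whereas the paper's ellipse identity is specific to the two-robot geometry. What the paper's constraint buys --- and what your optional collinearity remark recovers --- is a sharper picture of the set: using stationarity to force $\boldsymbol{p}_1,\boldsymbol{p}_2,\boldsymbol{p}_{d_1},\boldsymbol{p}_{d_2}$ collinear pins $\mathcal{D}_{system}$ down to a one-parameter family (matching the $\alpha\in(0,1)$ construction of \cref{nonempty}), which is considerably stronger than mere codimension one. Your cautionary point (i), that ``measure zero'' must refer to the ambient $2N$-dimensional Lebesgue measure, is well taken and is a distinction the paper does not make explicit.
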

\begin{proof}
Following the definition of $\boldsymbol{p}^*_{1}$ and $\boldsymbol{p}^*_{2}$  from \cref{touching} and \cref{nonempty}, we can show that when two robots are in deadlock, their positions satisfy
\begin{align}
\norm{\big(\boldsymbol{p}_{1} -\boldsymbol{p}_{d_1}\big)} +  \norm{\big(\boldsymbol{p}_{2} -\boldsymbol{p}_{d_2}\big)} = D_s+ D_G \nonumber
\end{align}
This can be verified by straightforward substitution. \qed
\end{proof}
\subsection{Characteristics of Three Robot Deadlock}
\label{three_robot_analysis}
Following the results shown in two robot deadlock, we now describe the three robot case. We will demonstrate that properties such as robots being on the verge of safety violation (\cref{touching3}) and non-emptiness (\cref{nonempty3}) are retained in this case as well. We are interested in analyzing \textit{system deadlock}, which occurs when $\boldsymbol{u}^*_{i} =  \boldsymbol{0}$ and $\hat{\boldsymbol{u}}_i \neq \boldsymbol{0}$ $\forall i \in \{1,2,3\}$. Since we are studying \textit{system deadlock}, each robot has at-least one active constraint  (each robot has two constraints in total). The \textit{system deadlock} set $\mathcal{D}_{system}$ for three robots is defined analogously to \cref{sysdeadlockdef}.
\begin{theorem} \textbf{Safety Margin Apart}
\label{touching3}
In system deadlock, either all three robots or  exactly two pairs of robots are separated by the safety margin.
\end{theorem}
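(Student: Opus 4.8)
The plan is to recast the statement in the graph language of \cref{complexity} and then enumerate. In \emph{system deadlock} every robot $i\in\{1,2,3\}$ satisfies $\boldsymbol{u}^*_i=\boldsymbol{0}$ with $\hat{\boldsymbol{u}}_i\neq\boldsymbol{0}$, so \cref{activesetisnonempty} applies to each of them and each active set $\mathcal{A}_i(\boldsymbol{0})$ is non-empty. By \cref{symm} the relation ``the constraint between $i$ and $j$ is active'' is symmetric, so it is faithfully encoded by an undirected graph $G$ on the three labelled vertices $\{1,2,3\}$, with an edge $\{i,j\}$ exactly when $j\in\mathcal{A}_i(\boldsymbol{0})$. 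From \eqref{activeinactive1}, an edge $\{i,j\}$ is present iff $\norm{\Delta\boldsymbol{p}_{ij}}=D_s$ and absent iff $\norm{\Delta\boldsymbol{p}_{ij}}>D_s$ (these two cases partition the pairs, so ``distance $=D_s$'' and ``edge present'' are equivalent). Non-emptiness of every $\mathcal{A}_i(\boldsymbol{0})$ says precisely that $G$ has no isolated vertex, i.e. minimum degree at least one.

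Next I would enumerate the graphs on three labelled vertices with minimum degree at least one. The empty graph has three isolated vertices and the one-edge graph leaves the third vertex isolated, so both are ruled out. Any two of the three possible edges share a common vertex (on three vertices any two distinct edges do), so a two-edge graph is necessarily the path $P_3$, with degree sequence $(1,2,1)$ and no isolated vertex; and the three-edge graph is the triangle $K_3$. Hence $G$ is either $P_3$ or $K_3$, and no other incidence pattern is possible.

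Reading this back in terms of inter-robot distances: if $G=K_3$ then $\norm{\Delta\boldsymbol{p}_{ij}}=D_s$ for all three pairs, i.e. all three robots are separated by the safety margin; if $G=P_3$ with center vertex $k$, then exactly the two pairs incident to $k$ are at distance $D_s$ while the remaining pair is strictly farther than $D_s$ (since it is an inactive constraint), which is the ``exactly two pairs'' case. This establishes the dichotomy.

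The argument is short, and its only delicate point is the bookkeeping: confirming that no further incidence pattern slips through, and, conversely, that we are \emph{not} tacitly using the stationarity/force-balance conditions \eqref{stationarity} to obtain the dichotomy. Those conditions do constrain a deadlock configuration --- for a degree-one robot they force $\hat{\boldsymbol{u}}_i$ to point away from its unique contact neighbor, and for the degree-two robot they force $\hat{\boldsymbol{u}}_k$ into the cone spanned by the two contact directions --- but they restrict the admissible goal locations, not which graphs $G$ can occur, so they neither add nor remove cases above. Whether each of $P_3$ and $K_3$ is actually attainable is the separate content of \cref{nonempty3}, not of this theorem.
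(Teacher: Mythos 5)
Your proof is correct and follows essentially the same route as the paper: the paper's Categories A and B are exactly your $K_3$ and $P_3$, obtained by the same case analysis on how many constraints each robot has active, with Lemma \ref{activesetisnonempty} ruling out isolated vertices and Lemma \ref{symm} making the edges undirected. Your write-up is in fact more explicit than the paper's (which merely asserts the two categories), and your closing remark correctly separates admissibility of the two patterns from their attainability, which the paper defers to Theorem \ref{nonempty3}.
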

\begin{proof}
The proof is kept brief because it is similar to the proof of \cref{touching}. Based on the number of constraints that are allowed to be active per robot, all geometric configurations can be clubbed in two categories :  \\
\textbf{Category A -}  This arises when all  constraints of each robot are active \textit{i.e.} $\boldsymbol{a}^T_{ij}\boldsymbol{u}^*_i=\hat{b}_{ij}=0 \iff \norm{\Delta \boldsymbol{p}_{ij}}=D_s \mbox{  }\forall j\in \{1,2,3\}\backslash i$ $\forall i \in \{1,2,3\} $. As a result, each robot is separated by $D_s$ from every other robot (\cref{fig:equilateral}).  
\\
\textbf{Category B -} This arises when there is exactly one robot with both its constraints active (robot $i$ in \cref{fig:nonequilateral}), and the remaining robots ($j$ and $k$) have exactly one constraint active each. Hence, robot $i$ is separated by $D_s$ from the others. \qedsymbol
\end{proof}
\begin{theorem} \textbf{Non-emptiness}
\label{nonempty3}
$\forall\mbox{ } k_p,D_s,R$ $>0$ and $\boldsymbol{p}_{d_i}=R\hat{\boldsymbol{e}}_{2\pi (i-1)/3}$ where $i=\{1,2,3\}$, $\exists$ $(\boldsymbol{p}^*_1,\boldsymbol{p}^*_2,\boldsymbol{p}^*_3) \in  \mathcal{D}_{system}$ where $\boldsymbol{p}^*_i$ are   \\
Category A: $\boldsymbol{p}^*_i=\frac{D_s}{\sqrt{3}}\hat{\boldsymbol{e}}_{\frac{2\pi (i-1)}{3} + \pi}  $ where $i=\{1,2,3\}$ \\
Category B: $\boldsymbol{p}^*_1 = D_s\hat{\boldsymbol{e}}_{\pi}$,
    $\boldsymbol{p}^*_2=\boldsymbol{0}$,   $\boldsymbol{p}^*_3=D_s\hat{\boldsymbol{e}}_{\frac{\pi}{3}}$ if robot $2$ has both constraints active.
\end{theorem}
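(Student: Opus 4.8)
The plan is to prove non-emptiness \emph{constructively}, exactly as in the proof of Theorem~\ref{nonempty}: exhibit the candidate configurations listed in the statement and verify that each one satisfies every clause of the three-robot analogue of \cref{sysdeadlockdef} --- primal feasibility $\norm{\Delta\boldsymbol{p}_{ij}}\geq D_s$, dual feasibility $\mu^*_{ij}\geq 0$, the stationarity/force-balance identity $\hat{\boldsymbol{u}}_i=\tfrac12\sum_{j\in\mathcal{A}(\boldsymbol{0})}\mu^*_{ij}\boldsymbol{a}_{ij}$, and $\boldsymbol{p}^*_i\neq\boldsymbol{p}_{d_i}$. Since the QP \eqref{optimization_formulation_2} is convex, producing a primal-dual pair $(\boldsymbol{u}^*_i=\boldsymbol{0},\,\boldsymbol{\mu}^*_i\geq\boldsymbol{0})$ that meets all four KKT conditions is sufficient to certify $\boldsymbol{u}^*_i=\boldsymbol{0}$, hence $\boldsymbol{p}^*_i\in\mathcal{D}_i$; taking the product over $i$ gives membership in $\mathcal{D}_{system}$.

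For Category~A, with $\boldsymbol{p}^*_i=\tfrac{D_s}{\sqrt3}\hat{\boldsymbol{e}}_{2\pi(i-1)/3+\pi}$ the three points are equally spaced on a circle of radius $D_s/\sqrt3$, so every pair is exactly $2\cdot\tfrac{D_s}{\sqrt3}\sin 60^\circ=D_s$ apart; thus all constraints are active and primal feasibility holds with equality. By the symmetry of this configuration and of the goals $\boldsymbol{p}_{d_i}=R\hat{\boldsymbol{e}}_{2\pi(i-1)/3}$, I look for equal multipliers $\mu^*_{ij}=\mu$. Using $\boldsymbol{a}_{ij}=\boldsymbol{p}^*_j-\boldsymbol{p}^*_i$ and $\sum_k\boldsymbol{p}^*_k=\boldsymbol{0}$, the net avoidance term for robot $i$ collapses to $-\tfrac32\mu\,\boldsymbol{p}^*_i$, which is radial, while $\hat{\boldsymbol{u}}_i=-k_p(\boldsymbol{p}^*_i-\boldsymbol{p}_{d_i})$ is also radial (pointing outward); matching magnitudes yields a single value $\mu=\tfrac{2k_p(R+D_s/\sqrt3)}{\sqrt3\,D_s}>0$, establishing stationarity and dual feasibility simultaneously. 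Finally $\boldsymbol{p}^*_i$ and $\boldsymbol{p}_{d_i}$ lie on opposite rays from the origin, so $\boldsymbol{p}^*_i\neq\boldsymbol{p}_{d_i}$.

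For Category~B (say robot $2$ has both constraints active), with $\boldsymbol{p}^*_1=D_s\hat{\boldsymbol{e}}_{\pi}$, $\boldsymbol{p}^*_2=\boldsymbol{0}$, $\boldsymbol{p}^*_3=D_s\hat{\boldsymbol{e}}_{\pi/3}$ one checks $\norm{\Delta\boldsymbol{p}_{12}}=\norm{\Delta\boldsymbol{p}_{23}}=D_s$ and $\norm{\Delta\boldsymbol{p}_{13}}=\sqrt3\,D_s>D_s$, so $\mathcal{A}_2(\boldsymbol{0})=\{1,3\}$ and $\mathcal{A}_1(\boldsymbol{0})=\{2\}=\mathcal{A}_3(\boldsymbol{0})$, which matches the Category~B pattern and gives primal feasibility (the $13$ constraint being slack forces $\mu^*_{13}=0$ by complementary slackness, consistent with restricting the stationarity sum to the active set). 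The stationarity equation is then a small linear system per robot: for robots $1$ and $3$ it is one scalar equation, yielding $\mu^*_{12}=\mu^*_{32}=2k_pR/D_s>0$; for robot $2$ it is a $2\times2$ system whose solution is $\mu^*_{21}=\mu^*_{23}=2k_pR/D_s>0$. All multipliers are nonnegative, so dual feasibility holds, and since the goals sit at radius $R$ on rays distinct from those of the $\boldsymbol{p}^*_i$ we again get $\boldsymbol{p}^*_i\neq\boldsymbol{p}_{d_i}$; permuting labels covers the remaining Category~B cases.

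The computations are elementary, so the only place that needs care is the bookkeeping in Category~B: correctly reading off which of the three pairwise constraints is active from the chosen coordinates (the $\sqrt3\,D_s$ diagonal is the crux), and then checking that the per-robot stationarity systems are consistent and return strictly positive multipliers rather than negative ones --- a sign slip there would falsely break dual feasibility. A secondary subtlety worth stating explicitly is that exhibiting a single KKT-feasible primal-dual point suffices precisely because the QP is convex, so the KKT conditions are sufficient (not merely necessary), which is what licenses concluding $\boldsymbol{u}^*_i=\boldsymbol{0}$ from the construction.
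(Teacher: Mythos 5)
Your proposal is correct and takes essentially the same route as the paper: the paper's own proof of \cref{nonempty3} is simply deferred to the construction used for \cref{nonempty} (exhibit the candidate states and verify each clause of the deadlock-set definition through the KKT conditions), which is exactly what you carry out in full for both categories. The only blemish is an arithmetic slip in Category B --- for robots $1$ and $3$ the stationarity equations give $\mu^*_{12}=\mu^*_{32}=2k_p(R+D_s)/D_s$ rather than $2k_pR/D_s$ --- but these values are still strictly positive, so dual feasibility and the conclusion are unaffected.
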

\begin{proof}
This proof is similar to  \cref{nonempty} so it is skipped. \qedsymbol
\end{proof}
Note that in the statement of this theorem, we have predefined the desired goal positions unlike the statement of \cref{nonempty}. The candidate positions of the robots that we propose are in $\mathcal{D}_{system}$ are valid with respect to these given goals. Additionally, for category B, we proposed one set of positions that is valid in deadlock, however there is continuous family of positions that can be valid in category B which can be found in the appendix of \cite{grover2019deadlock}.  
 \begin{figure}[t]
 	 \setlength{\belowcaptionskip}{0pt}
	\centering      
	\subfigure[Category A ]{\label{fig:equilateral}\includegraphics[width=0.40\linewidth]{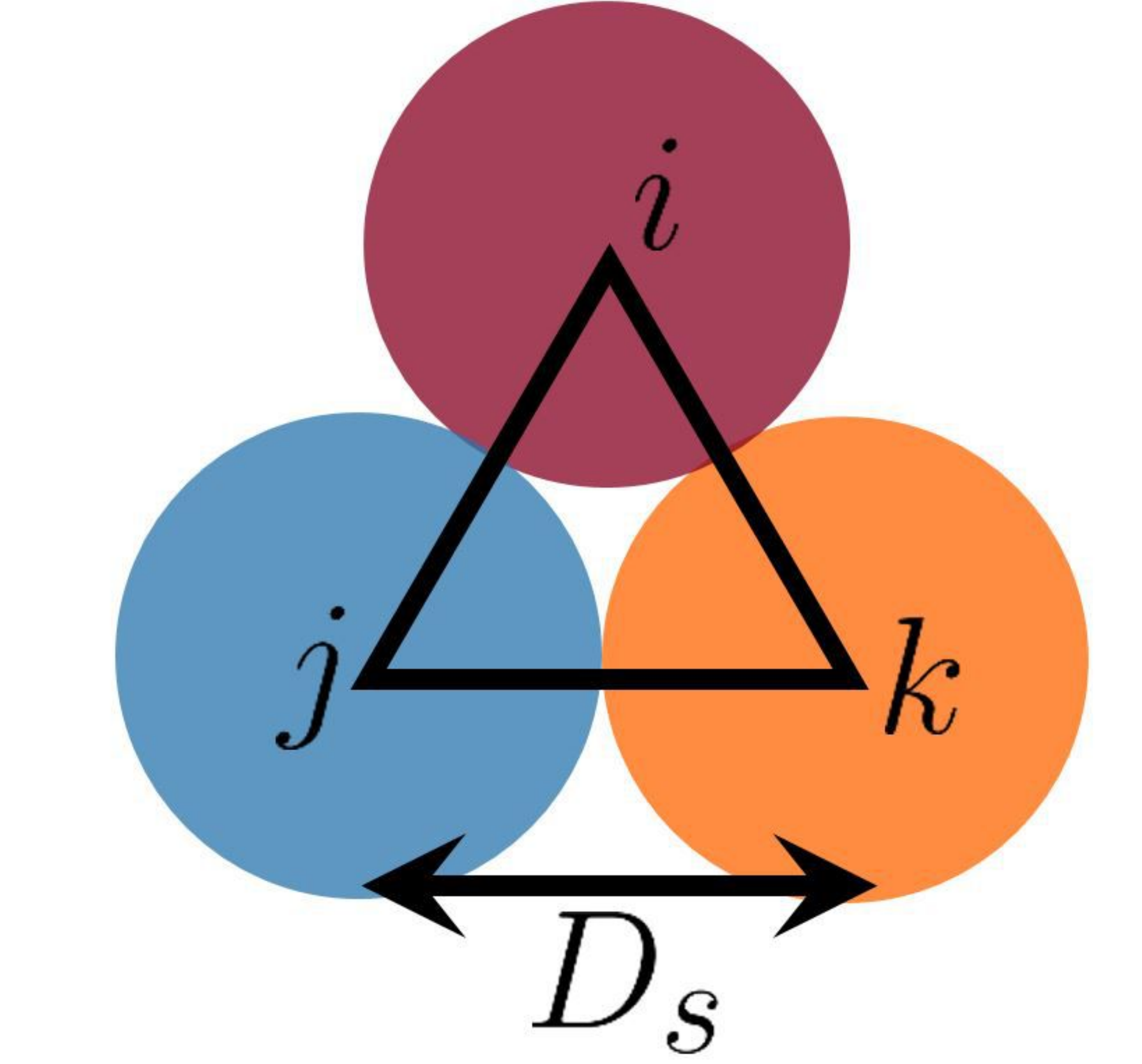}}
	\subfigure[Category B]{\label{fig:nonequilateral}\includegraphics[width=0.40\linewidth]{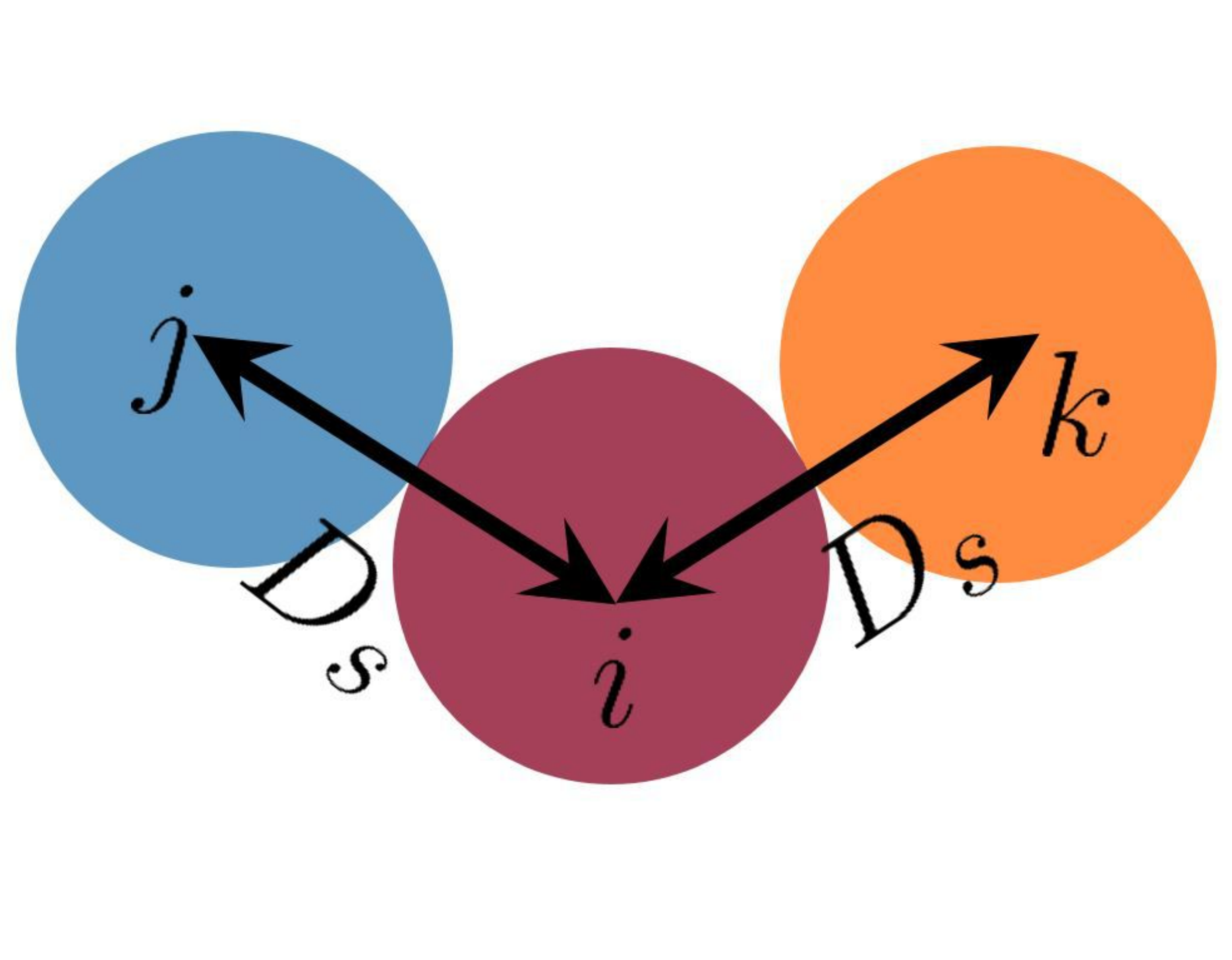}}
	\caption{Geometric configurations in three robot deadlock}
	\label{deadlock_three_robots}
\end{figure}
\section{After Deadlock: Deadlock Resolution}
\label{deadlock resolution}
We now use the properties of geometric configurations derived in \cref{two_robot_analysis} to synthesize a strategy that (1) gets the robots out of deadlock, (2) ensures their safety and (3) makes them converge to their goals.  One approach to achieve these objectives is to detect the incidence of deadlock while the  CBF-QP controller is running on the robots and once detected, any small non-zero perturbation to the control will instantaneously give a non-zero velocity to the robots. Thereafter, CBF-QPs can take charge again and we can hope that using this controller the system state will come out of deadlock at-least for a short time. However, there is no guarantee that deadlock will not relapse because it was the CBF-QP controller that led to deadlock in the first place. Secondly, perturbations can violate safety and even lead to degraded performance. Therefore, given these limitations, we propose a controller which ensures that goal stabilization, safety and deadlock resolution are met with guarantees. We demonstrate this algorithm for the two and three robot cases. Refer to  Fig. \ref{deadlock_res_two_three_robots} for a schematic of our approach. This algorithm is described here:
\begin{figure*}
\centering      
 \setlength{\belowcaptionskip}{0pt}
\subfigure[Deadlock resolution for two robots]{\label{fig:drestwo}\includegraphics[trim={0.0cm 20cm 0.0cm 3cm},clip,width=1.4\columnwidth]{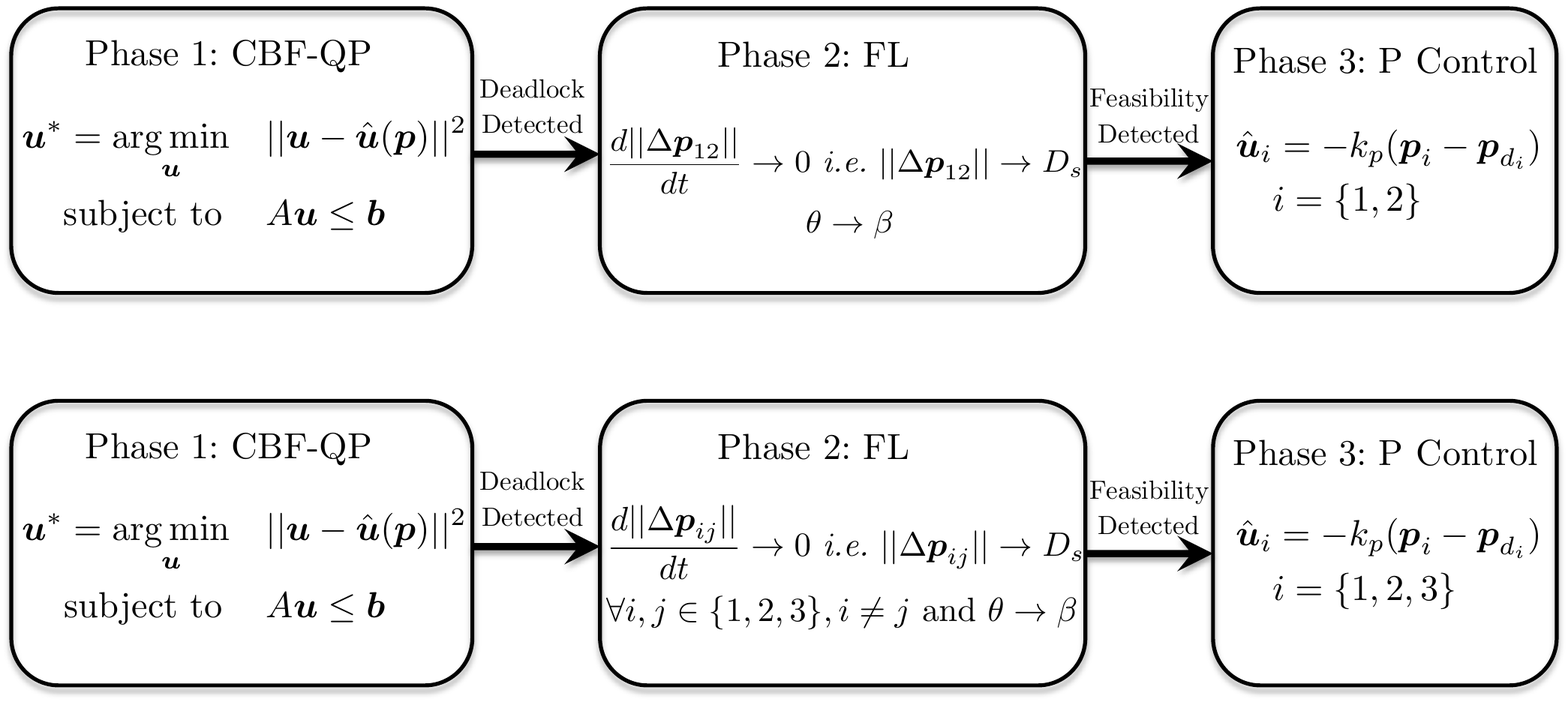}}
\subfigure[Deadlock resolution for three robots]{\label{fig:dresthree}\includegraphics[trim={0.0cm 14.2cm 0.0cm 9cm},clip,width=1.4\columnwidth]{Schematic.pdf}}
\caption{Deadlock Resolution Algorithm Schematic}
\label{deadlock_res_two_three_robots}
\end{figure*}
\begin{enumerate}
    \item The algorithm starts by executing controls derived from CBF-QP in Phase 1. This ensures movement of robots to the goals and safety by construction. To detect the incidence of deadlock, we continuously compare $\norm{\boldsymbol{u}^*}, \norm{\boldsymbol{p}-\boldsymbol{p}_{d}}$ against small thresholds. Once these thresholds are small enough, we declare deadlock detected and switch control to Phase 2.
    \item In this phase, we rotate the robots around each other to swap positions while maintaining the safe distance. Refer to the appendix of \cite{grover2019deadlock} for the controller derivation.
    \begin{enumerate}
        \item For the two-robot case, we calculate $\boldsymbol{u}^1_{fl}(t)$ and $\boldsymbol{u}^2_{fl}(t)$ using feedback linearization to ensure that $\norm{\Delta \boldsymbol{p}_{12}}=D_s$ and rotation ($\dot{\theta}=-k_p(\theta-\beta) \implies \Delta \boldsymbol{p}^T_{12}\Delta \boldsymbol{v}_{12}=0$) (See Fig. \ref{fig:D4.png} for $\theta,\beta$). This rotation and distance invariance guarantees safety \textit{i.e.} $\ h_{12}=0$. Adding an extra constraint $\boldsymbol{u}^1_{fl}+\boldsymbol{u}^2_{fl}=\boldsymbol{0}$ still ensures that the problem is well posed and additionally makes the centroid static. 
        \item For the three-robot case, we calculate $\boldsymbol{u}^1_{fl}(t),\boldsymbol{u}^3_{fl}(t),\boldsymbol{u}^3_{fl}(t)$ to ensure that $\norm{\Delta \boldsymbol{p}_{12}}=\norm{\Delta \boldsymbol{p}_{23}}=\norm{\Delta \boldsymbol{p}_{31}}=D_s$ and rotation ($\dot{\theta}=-k_p(\theta-\beta) \implies \Delta \boldsymbol{p}^T_{12}\Delta \boldsymbol{v}_{12}=0$) (See Fig. \ref{fig:D4.jpeg} for $\theta,\beta$). This guarantees safety \textit{i.e.} $ h_{12}=h_{23}=h_{31}=0$. Similarly, we impose $\boldsymbol{u}^1_{fl}+\boldsymbol{u}^2_{fl}+\boldsymbol{u}^3_{fl}=\boldsymbol{0}$ to make the centroid static. Note that both in simulations and experiments, we  observed incidence of only Category A deadlock so our resolution algorithm is specific for this category. 
    \end{enumerate}
    \item Once the robots swap their positions, their new positions will ensure that prescribed proportional controllers will be feasible in the future. Thus, after convergence of Phase 2, control switches to Phase 3, which simply uses the prescribed controllers. This phase guarantees that the distance between robots is non-decreasing and safety is maintained as we prove in \cref{finalthm}.
\end{enumerate}
\begin{figure}[t]
	\centering
	\subfigure[Robot 1 Position]{\label{fig:cc}\includegraphics[trim={24.0cm 1.0cm 25cm 3.6cm},clip, width=.49\linewidth]{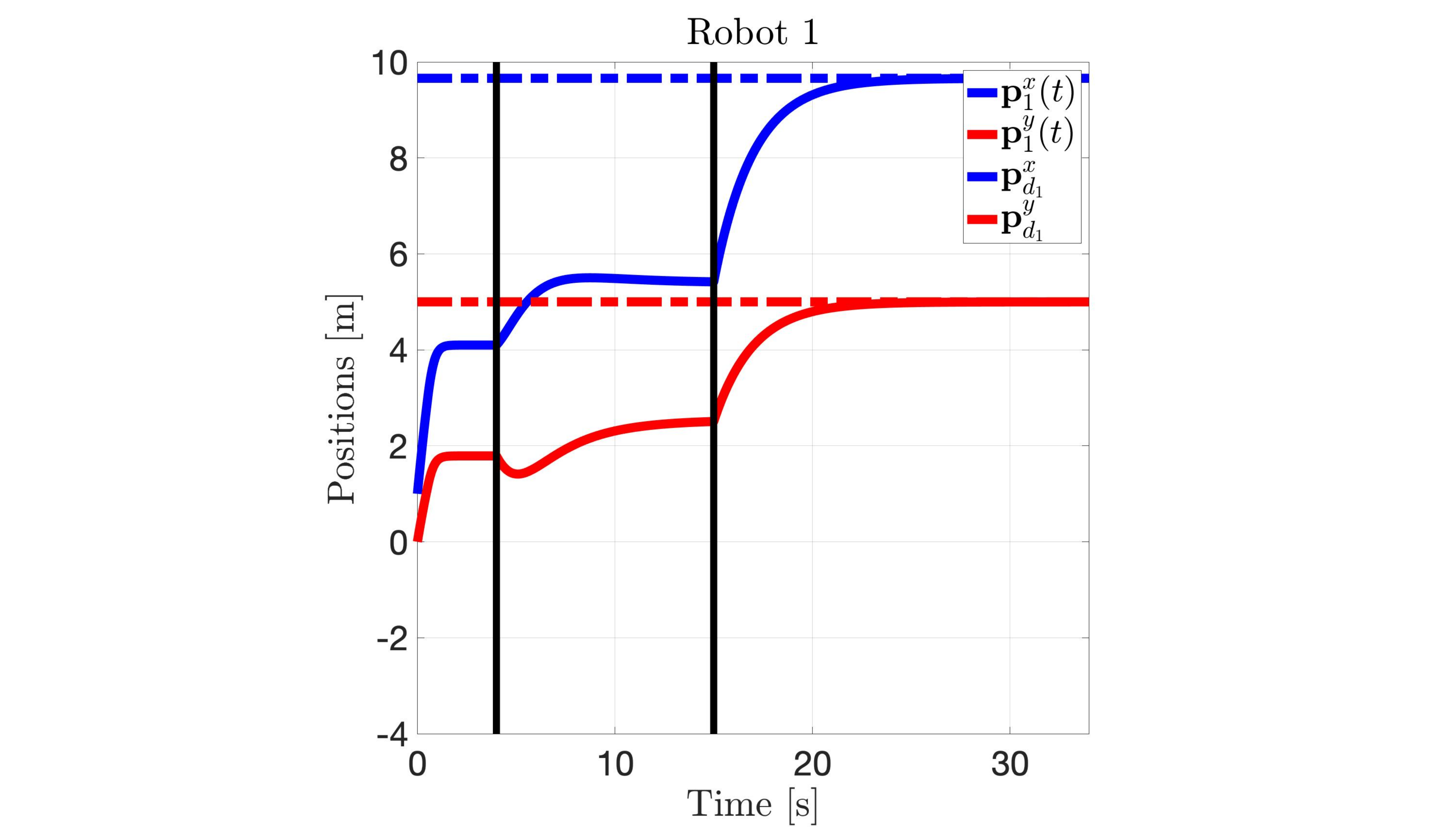}}
	\subfigure[Robot 2 Position]{\label{fig:dd}\includegraphics[trim={24.0cm 1.0cm 25cm 3.6cm},clip, width=.49\linewidth]{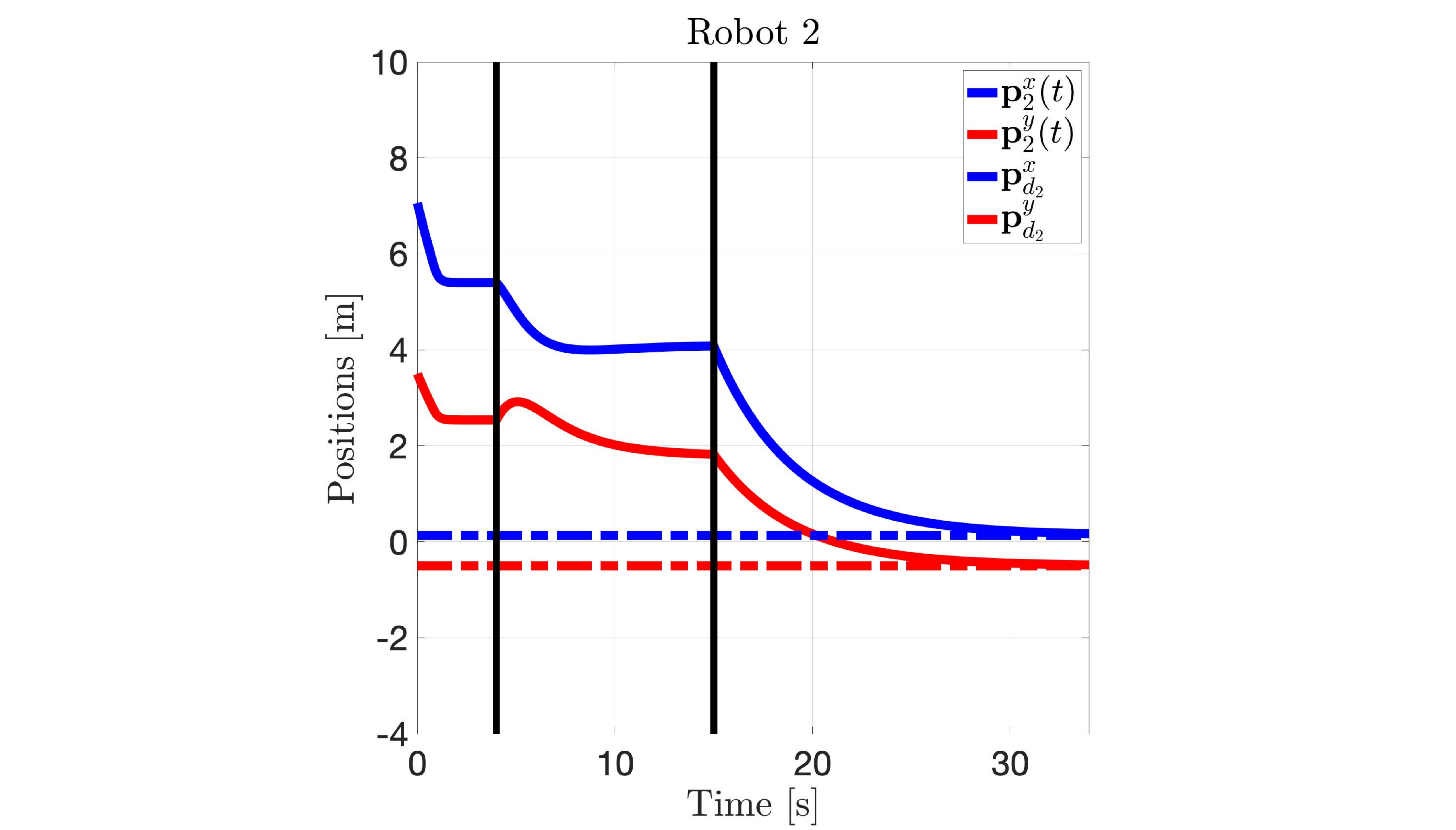}}
	\setlength{\belowcaptionskip}{0pt}
	\caption{Positions of robots from deadlock resolution algorithm for two robots. The vertical bars separate the three phases.  In all figures, final positions converge to desired positions in Phase 3. Simulation video at \url{https://youtu.be/nrWXdn_3nI4} and experimental at \url{https://bit.ly/3hBJIew} }
	\label{fig:simexpres2}
\end{figure}
\begin{figure}[t]
	\centering
	\includegraphics[width=\columnwidth]{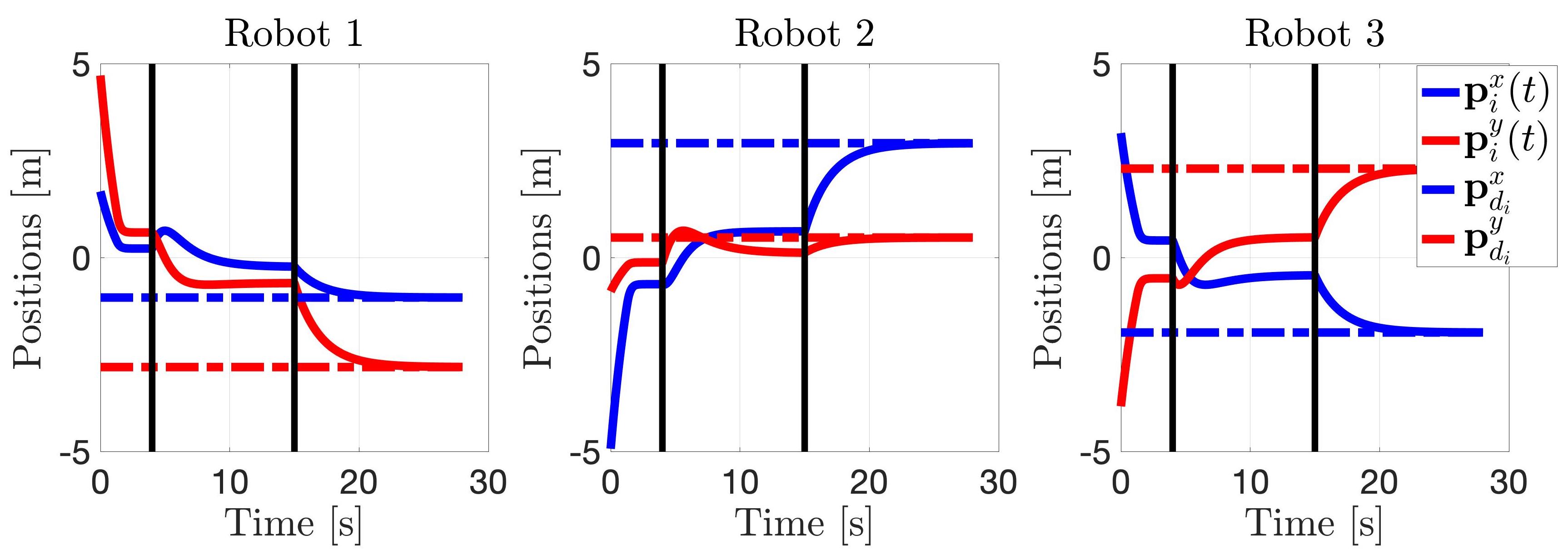}
	\setlength{\belowcaptionskip}{0pt}
	\caption{Positions of robots from deadlock resolution algorithm for three robots. The vertical bars separate the three phases. Final positions converge to desired positions in Phase 3. Simulation video at \url{https://youtu.be/5xbx4mk27xc} and experimental validation at \url{https://youtu.be/IqYmkQUjrBI}}
	\label{fig:simexpres3}
\end{figure}
Fig. \ref{fig:simexpres2} shows simulation  results from running this strategy on two robots. We also conducted an experimental validation of this algorithm using Khepera 4 nonholonomic robots. The videos for experimental results can be found at \url{https://bit.ly/3hBJIew}.  Note that for nonholnomic robots, we noticed from experiments and simulations (in two robot case) that deadlock only occurs if the body frames of both robots are perfectly aligned with one another at $t=0$. Since this alignment is difficult to establish in experiments due to sensor noise, we simulated a virtual deadlock at $t=0$ \textit{i.e.} assumed that the initial position of robots are in deadlock.

\begin{figure*}
	\centering
	\subfigure[$t=0s$, Phase 1]{\label{fig:rr1}\includegraphics[trim={32.0cm 7.0cm 29cm 7.6cm},clip, width=.24\linewidth]{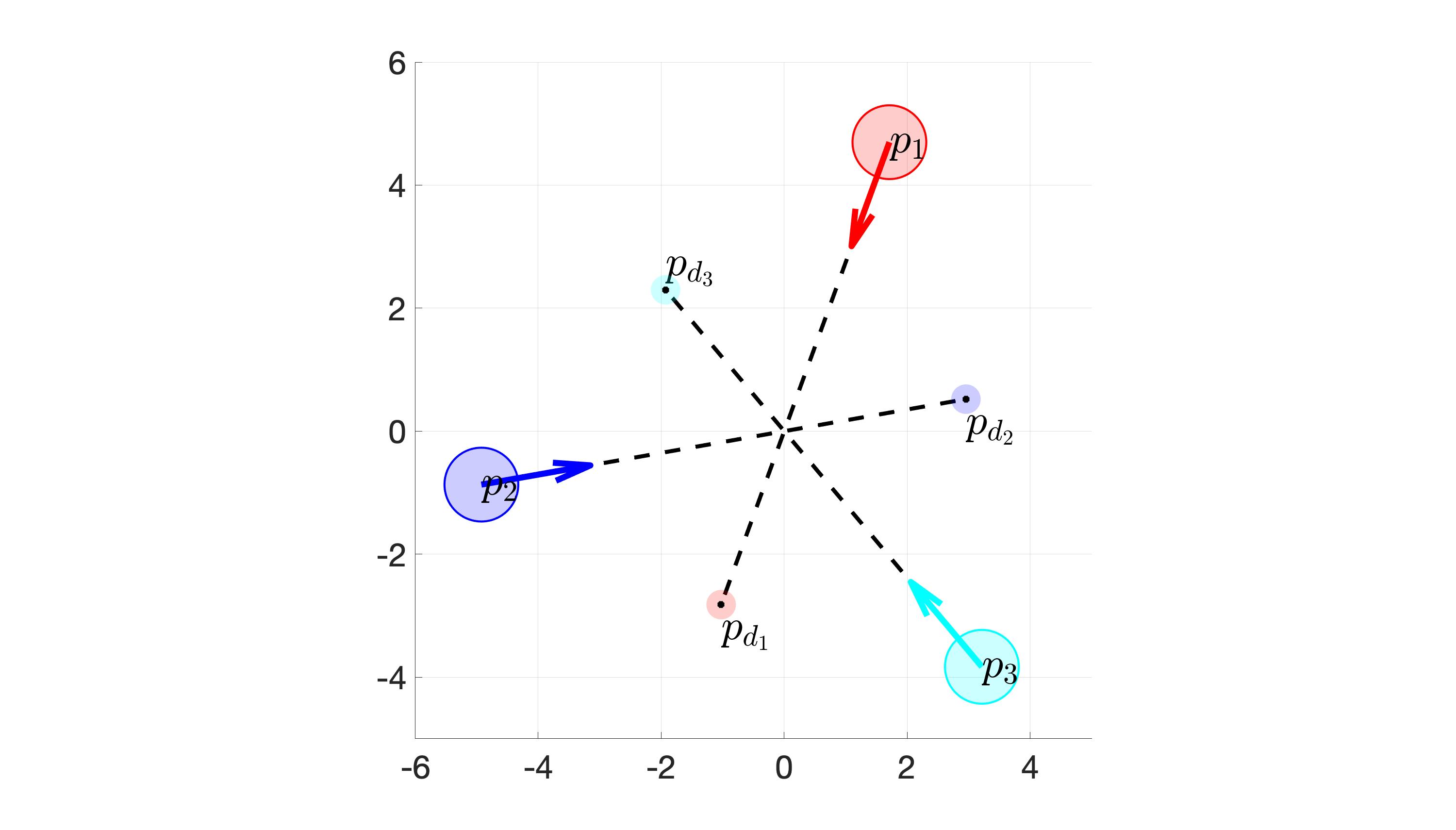}}
	\subfigure[$t=0.5s$, Phase 1]{\label{fig:rr2}\includegraphics[trim={32.0cm 7.0cm 29cm 7.6cm},clip, width=.24\linewidth]{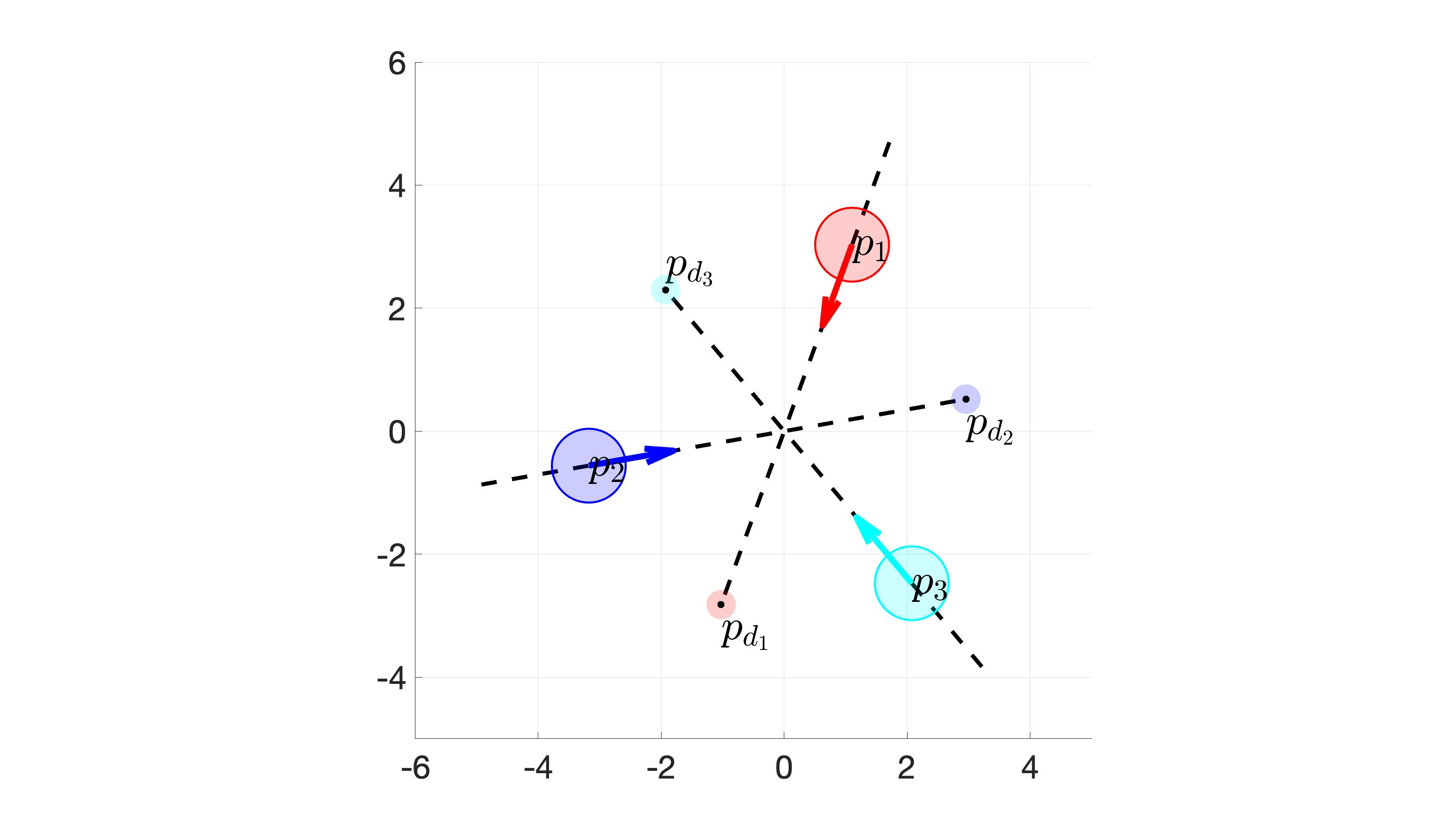}}
	\subfigure[$t=1.0s$, Phase 1]{\label{fig:rr3}\includegraphics[trim={32.0cm 7.0cm 29cm 7.6cm},clip, width=.24\linewidth]{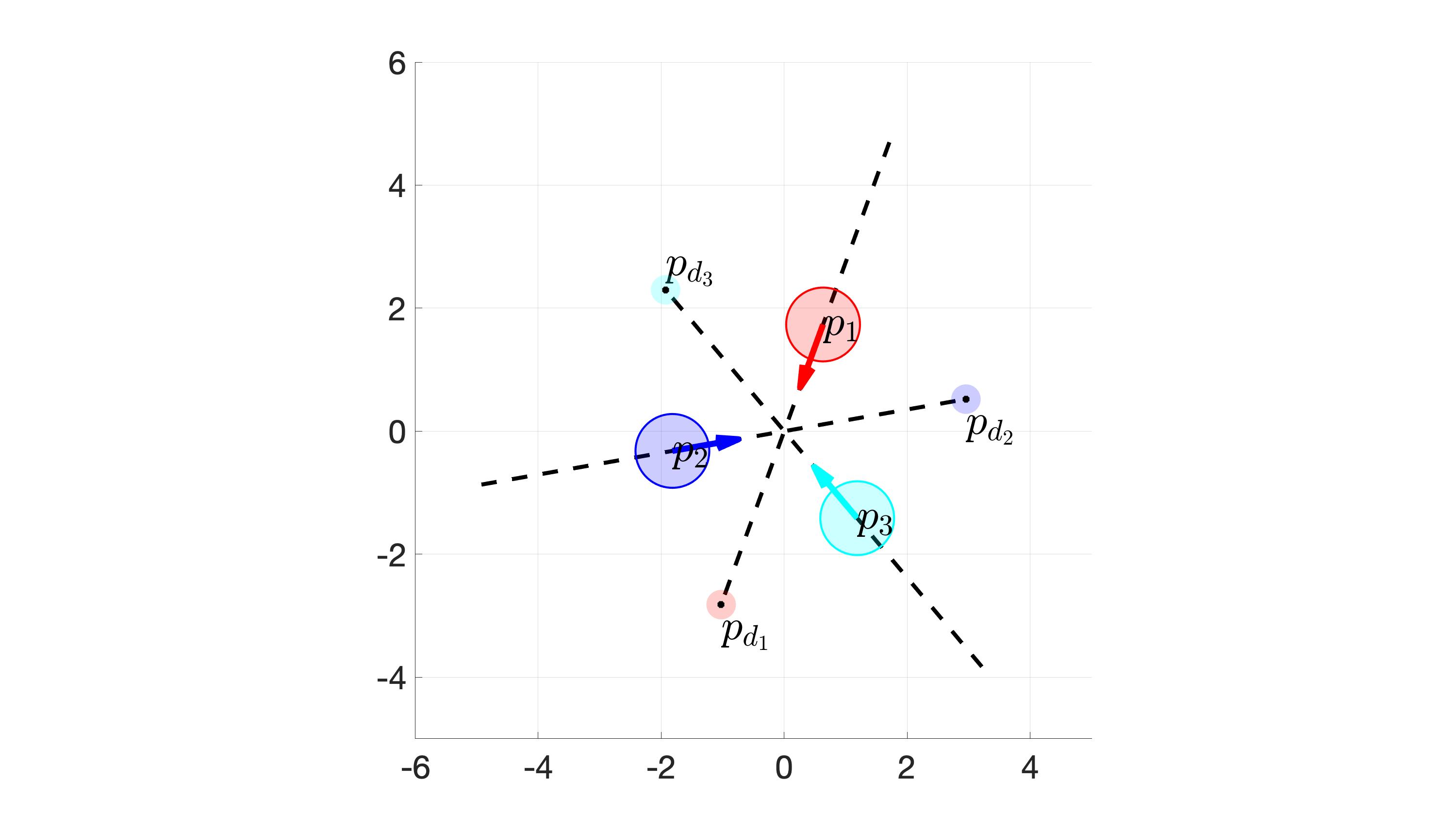}}
	\subfigure[$t=2.5s$, Phase 1]{\label{fig:rr4}\includegraphics[trim={32.0cm 7.0cm 29cm 7.6cm},clip, width=.24\linewidth]{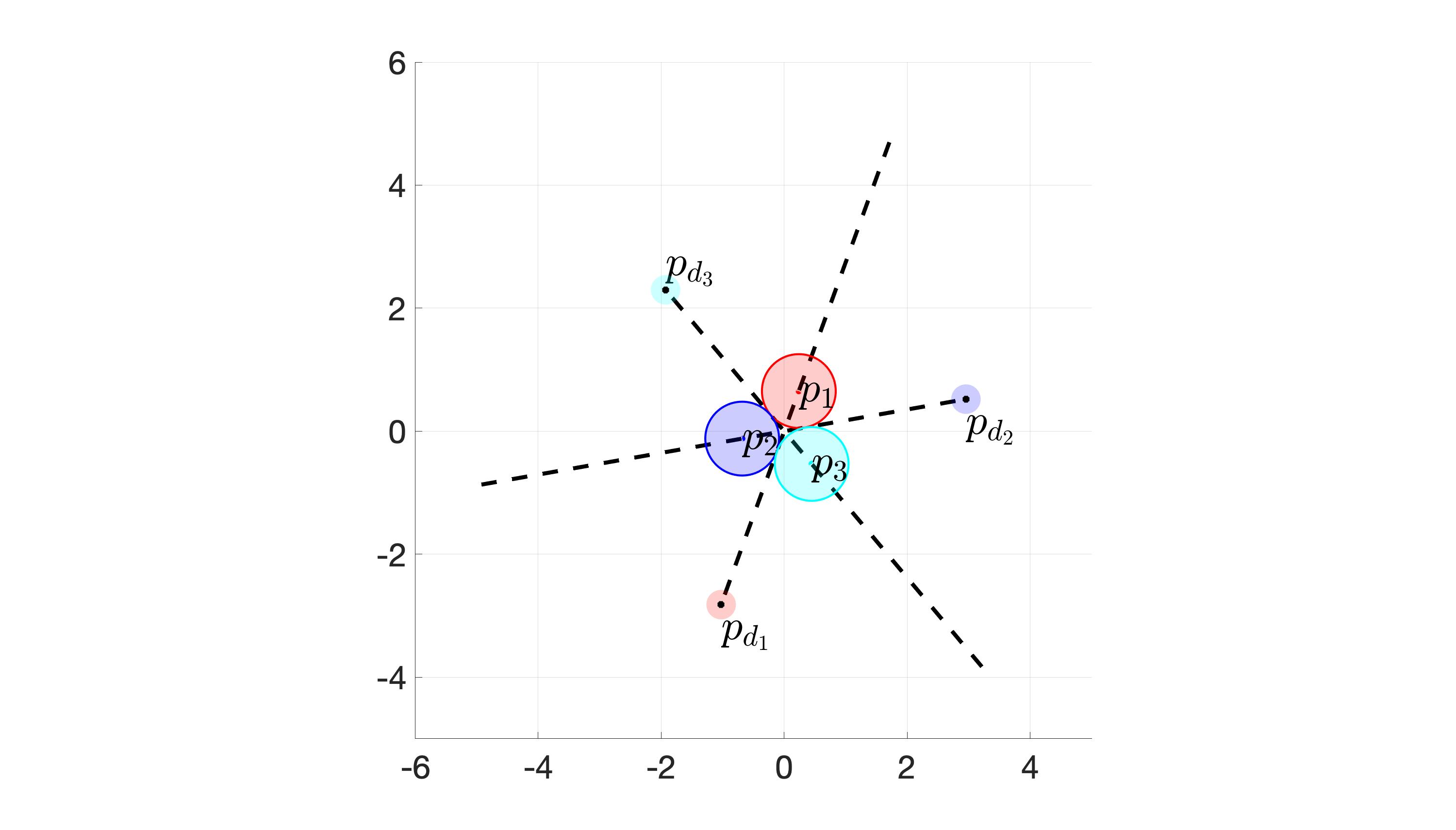}}

	\subfigure[$t=4.0s$, Phase 2]{\label{fig:rr5}\includegraphics[trim={32.0cm 7.0cm 29cm 7.6cm},clip, width=.24\linewidth]{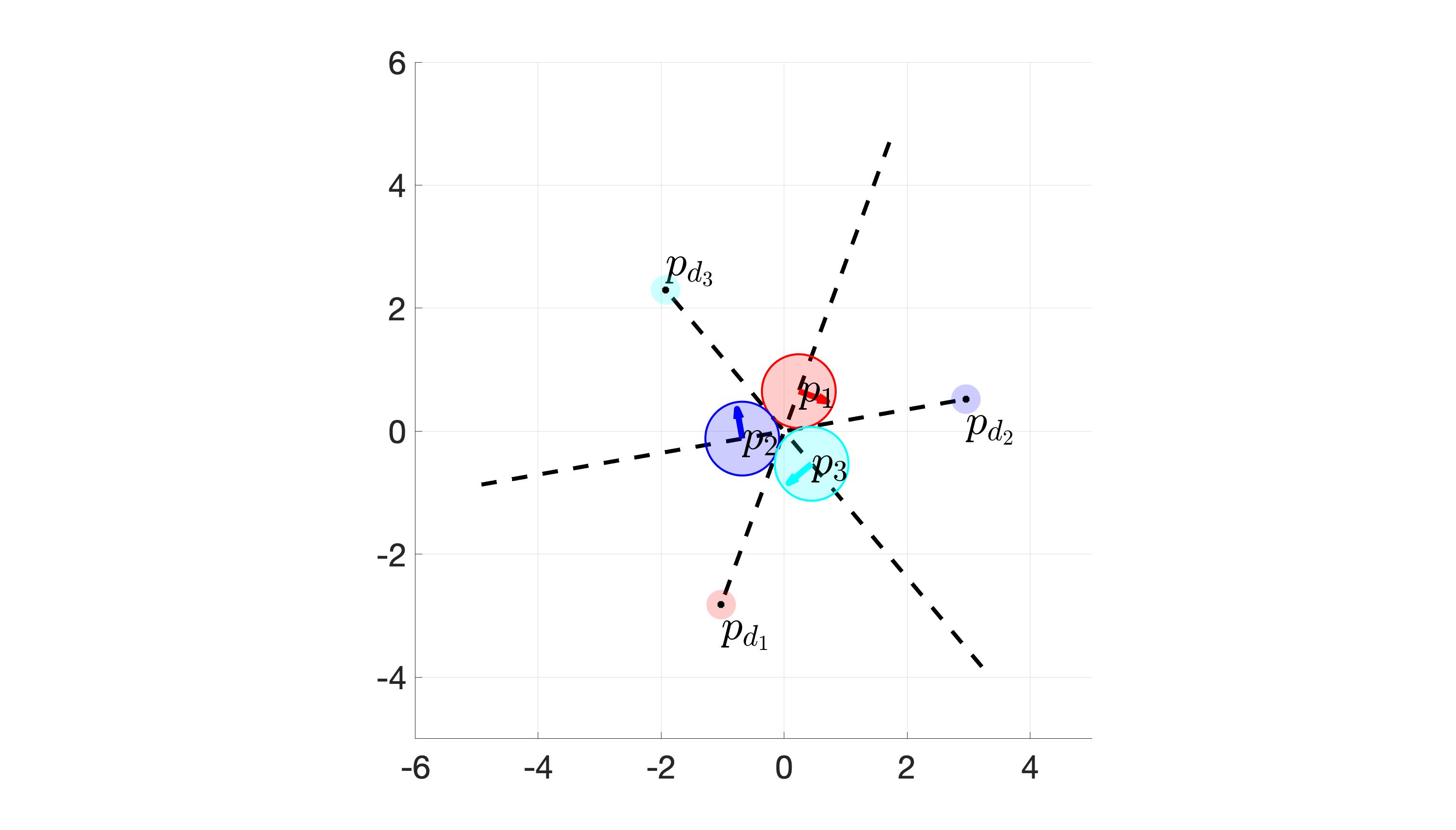}}
	\subfigure[$t=5.0s$, Phase 2]{\label{fig:rr6}\includegraphics[trim={32.0cm 7.0cm 29cm 7.6cm},clip, width=.24\linewidth]{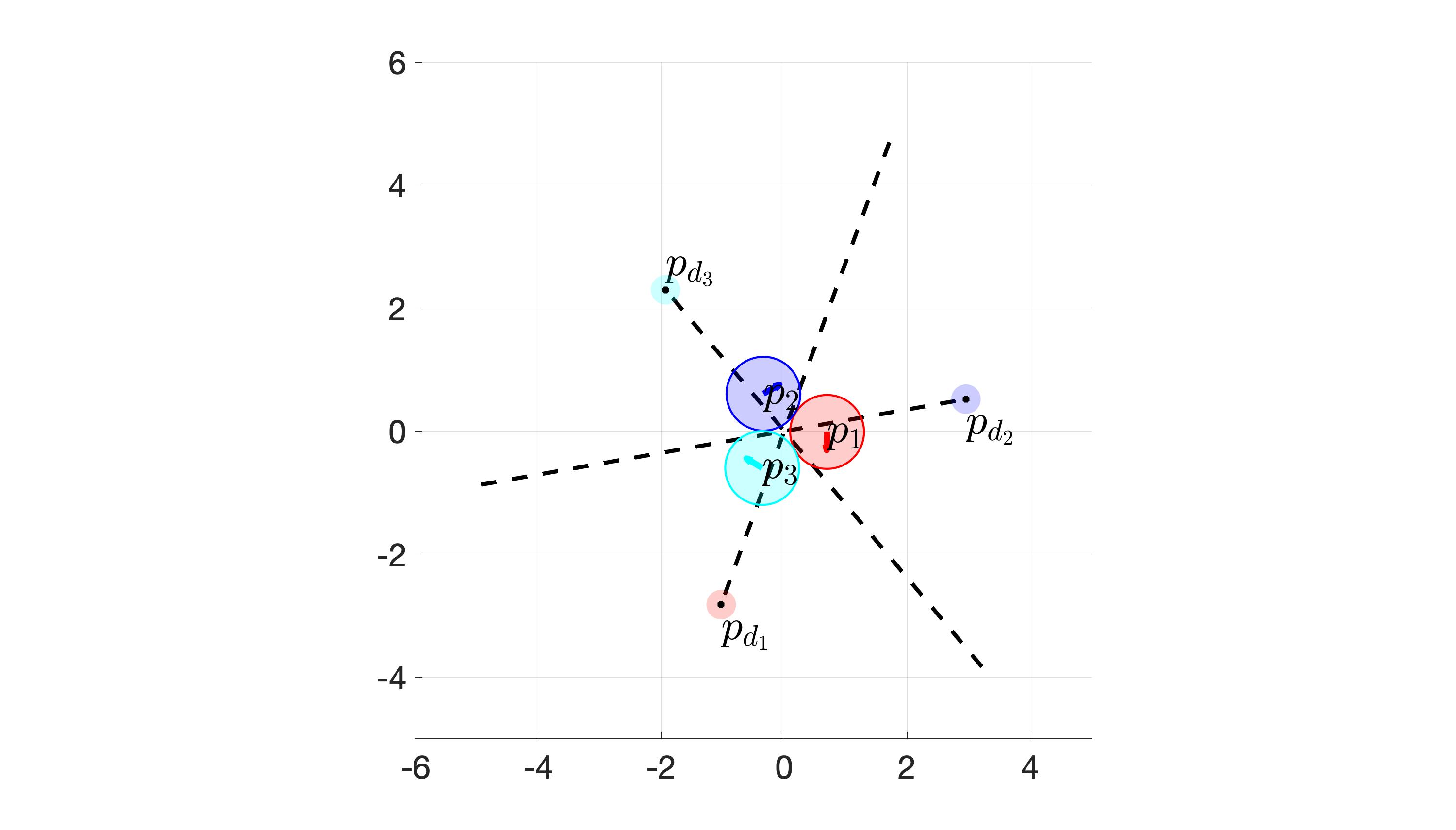}}
	\subfigure[$t=6.0s$, Phase 2]{\label{fig:rr7}\includegraphics[trim={32.0cm 7.0cm 29cm 7.6cm},clip, width=.24\linewidth]{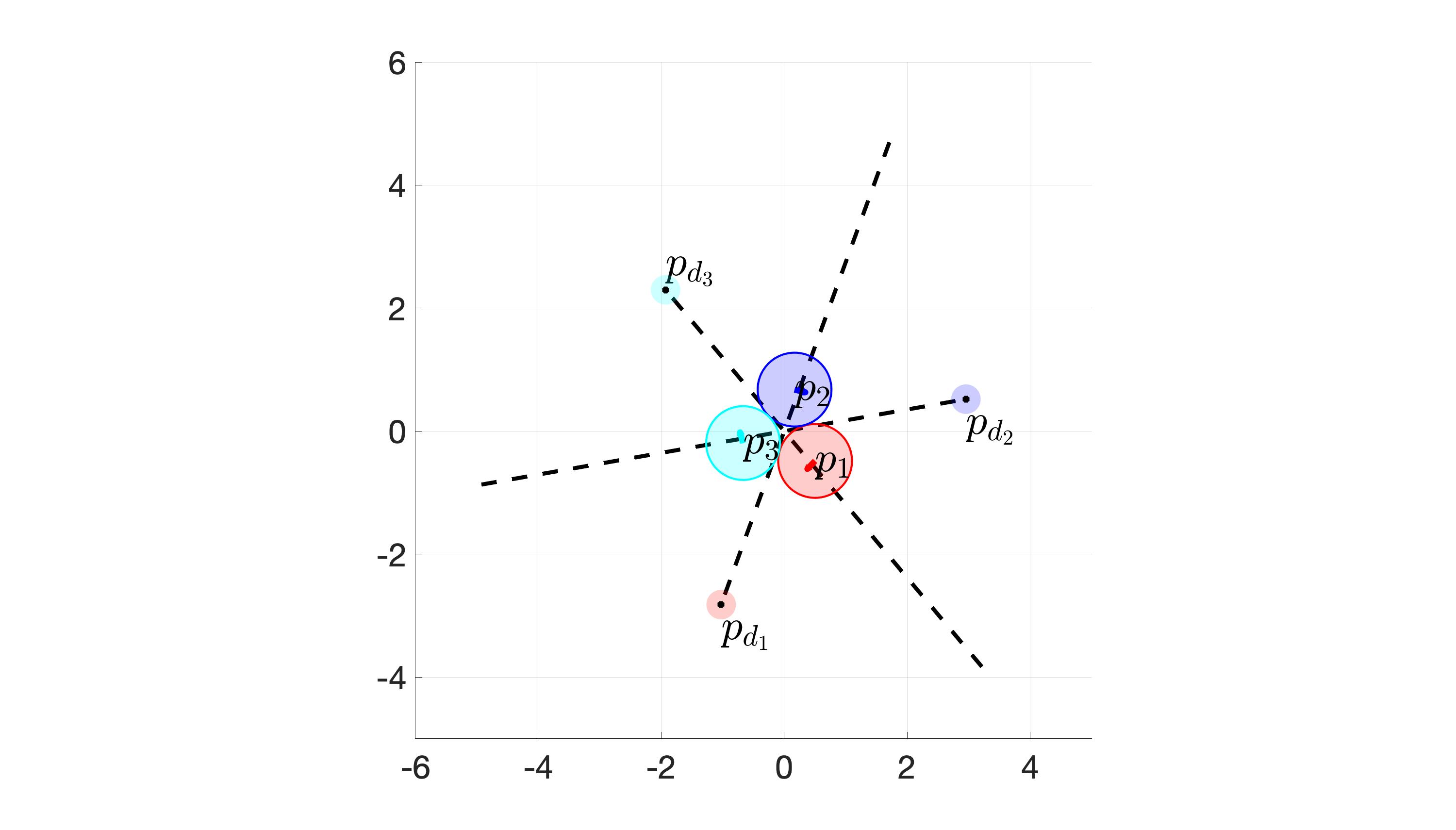}}
	\subfigure[$t=7.5s$, Phase 2]{\label{fig:rr8}\includegraphics[trim={32.0cm 7.0cm 29cm 7.6cm},clip, width=.24\linewidth]{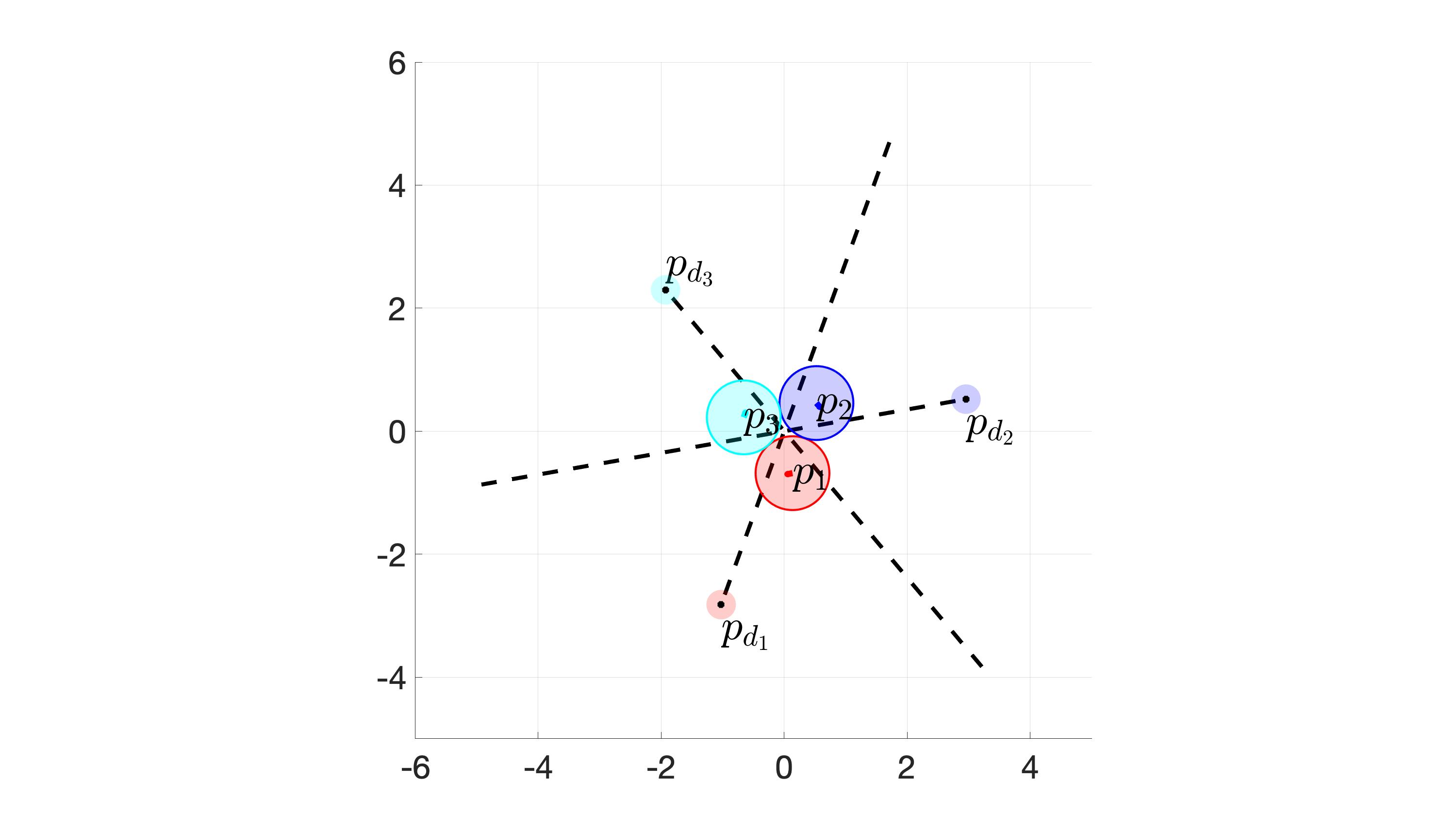}}

	\subfigure[$t=15s$, Phase 3]{\label{fig:rr9}\includegraphics[trim={32.0cm 7.0cm 29cm 7.6cm},clip, width=.24\linewidth]{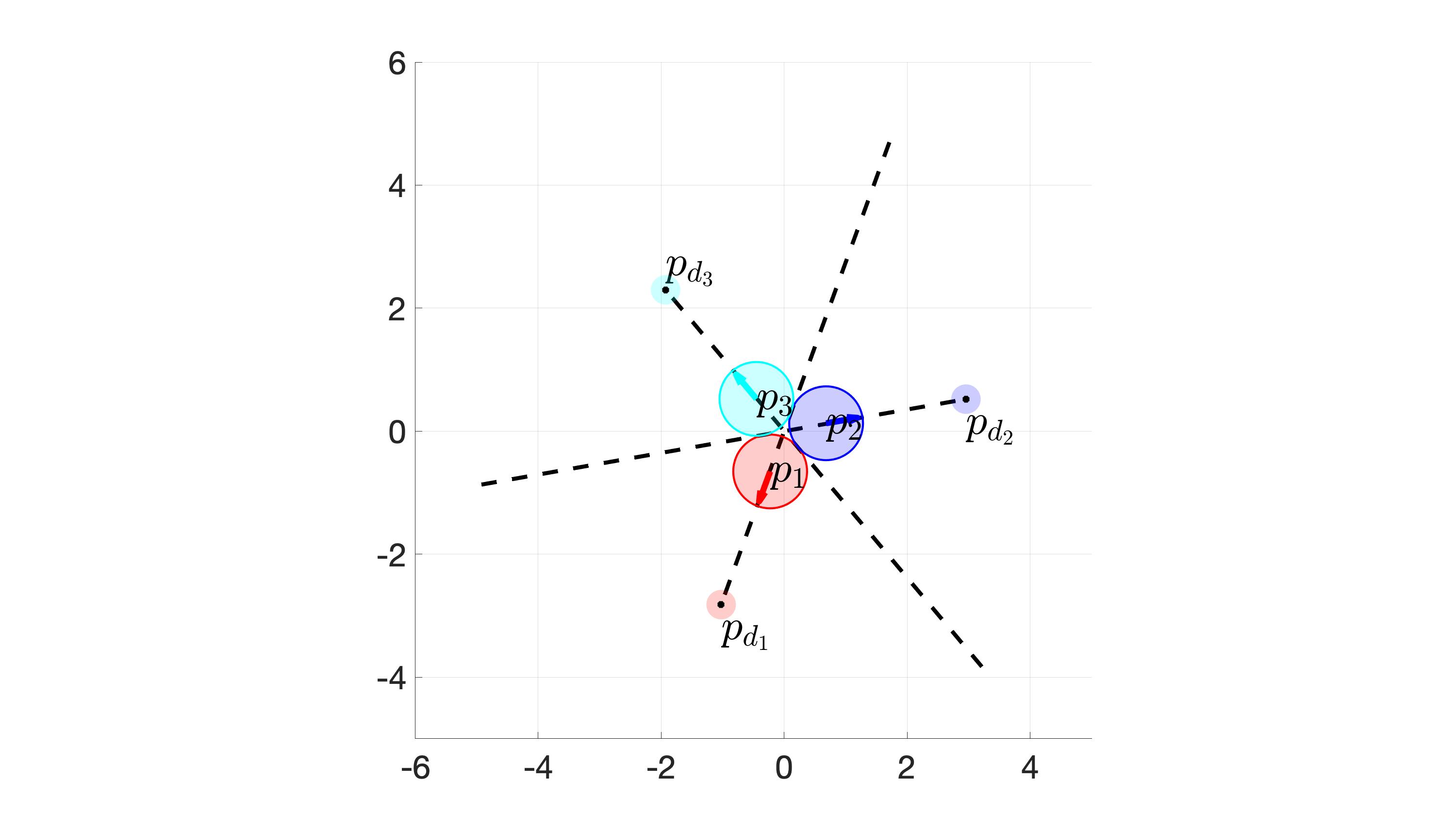}}
	\subfigure[$t=16s$, Phase 3]{\label{fig:rr10}\includegraphics[trim={32.0cm 7.0cm 29cm 7.6cm},clip, width=.24\linewidth]{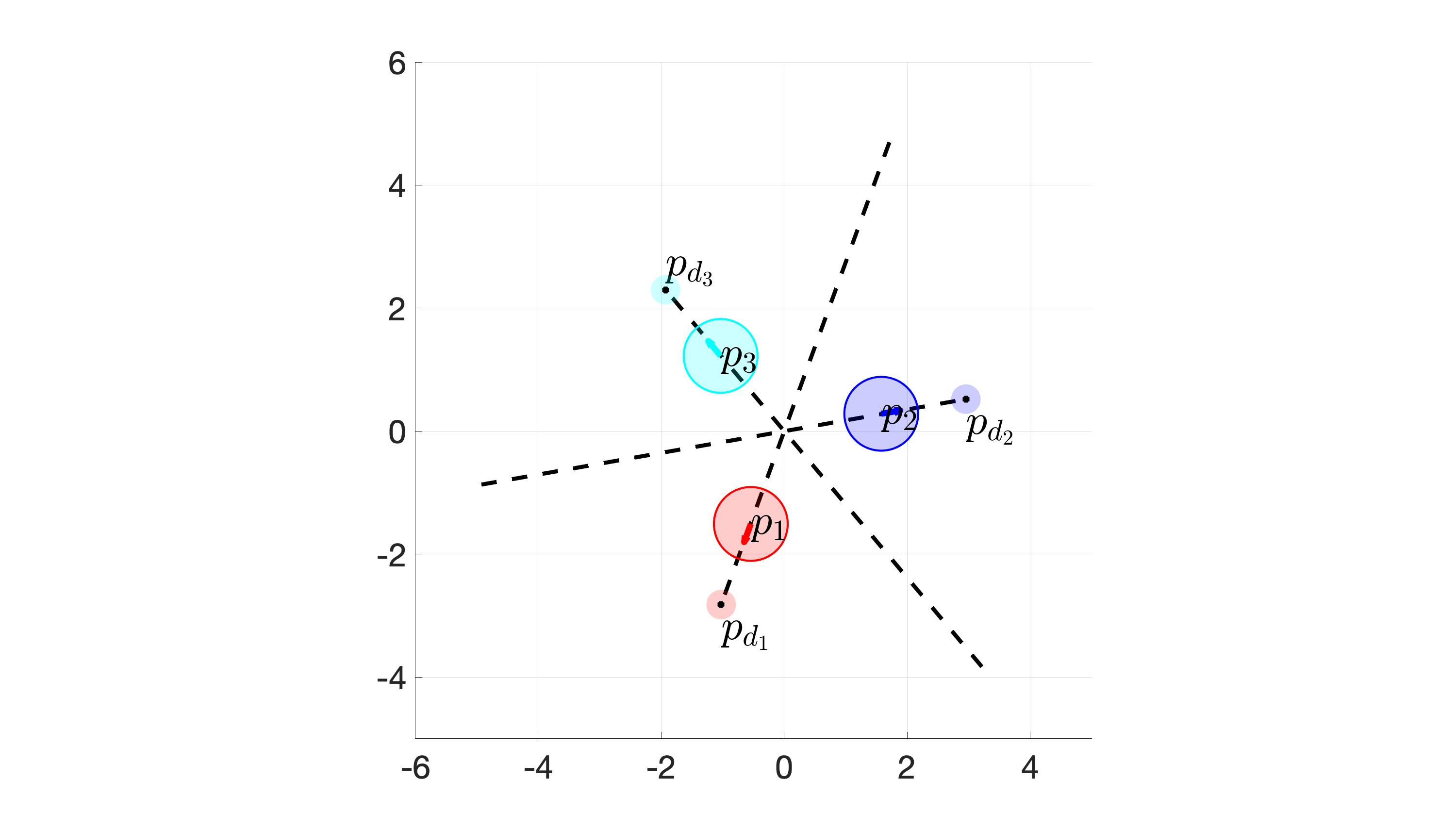}}
	\subfigure[$t=18s$, Phase 3]{\label{fig:rr11}\includegraphics[trim={32.0cm 7.0cm 29cm 7.6cm},clip, width=.24\linewidth]{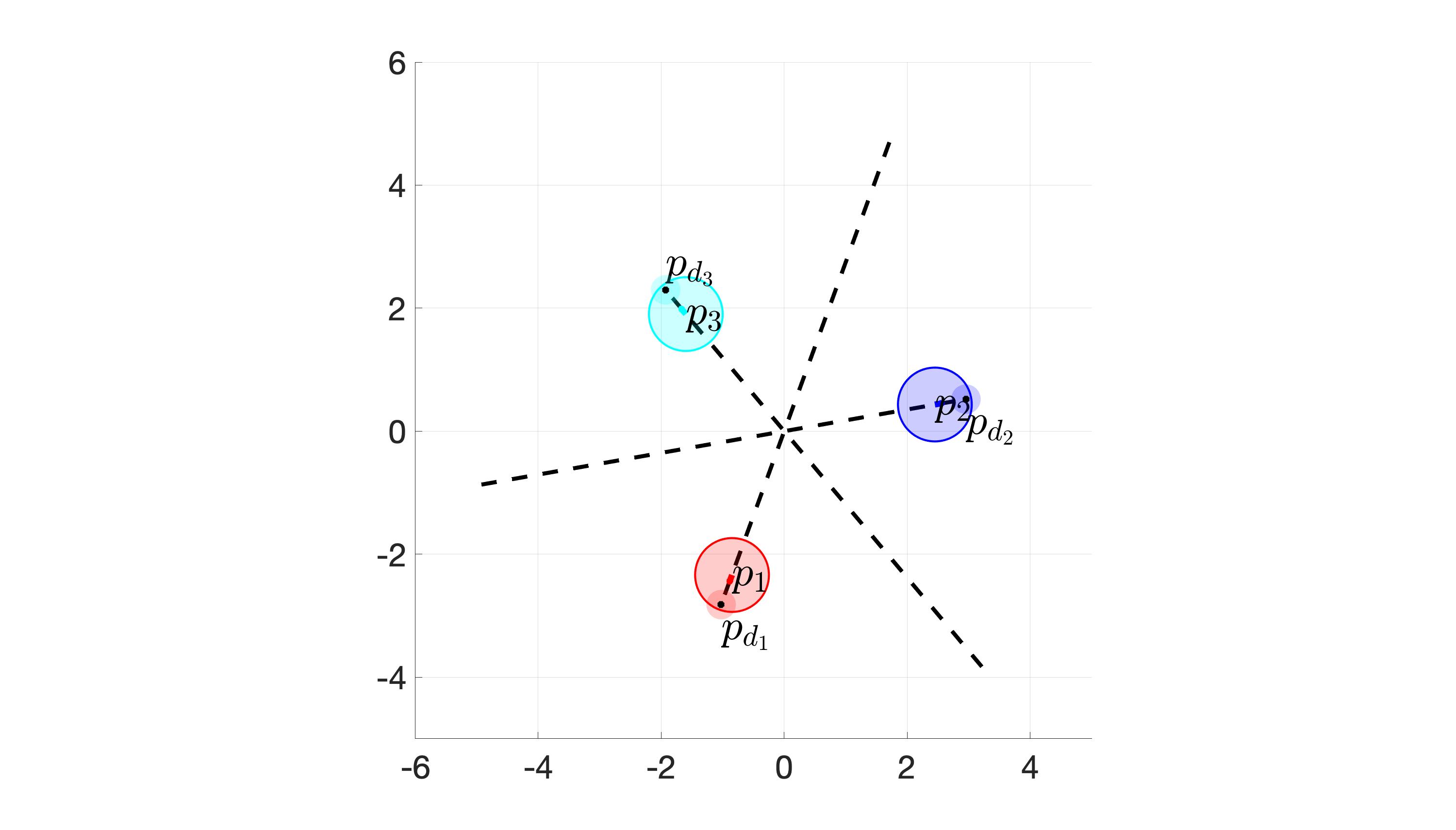}}
	\subfigure[$t=19.5s$, Phase 3]{\label{fig:rr12}\includegraphics[trim={32.0cm 7.0cm 29cm 7.6cm},clip, width=.24\linewidth]{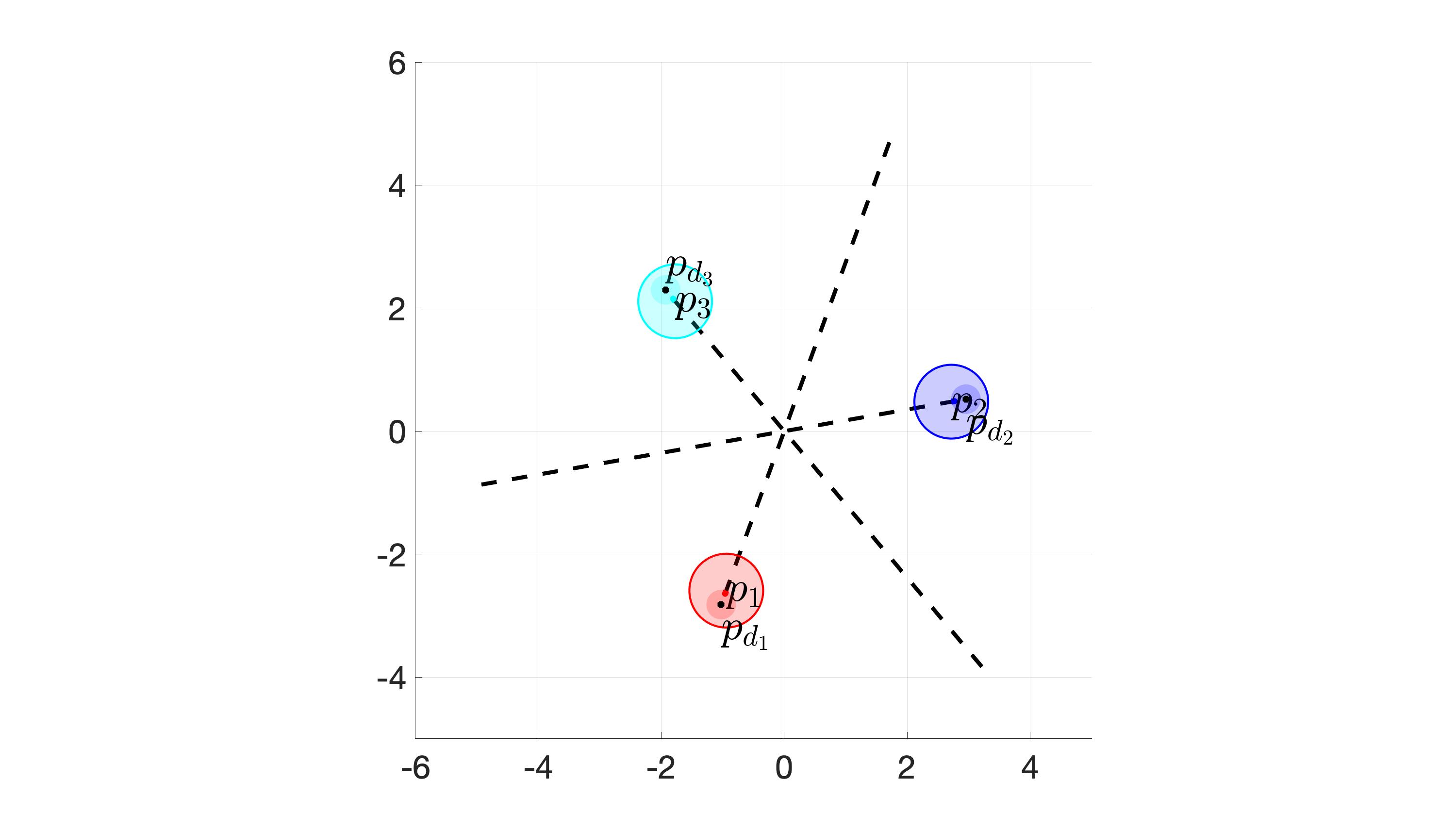}}
	\setlength{\belowcaptionskip}{0pt}
	\caption{Snapshots from running the deadlock resolution algorithm on three robots. The top row corresponds to Phase 1 where we use CBF-QPs. The middle row corresponds to Phase 2 that uses feedback linearization for rotating the robots and the last row corresponds to Phase 3 that uses proportional controls. The simulation video can be watched at \url{https://youtu.be/5xbx4mk27xc}}
	\label{fig:threeressnapshots}
\end{figure*}

Figs. \ref{fig:simexpres3} show the simulation  results from running this strategy on three robots. \textcolor{black}{Fig. \ref{fig:threeressnapshots} shows snapshots of this video during different phases of the execution of the deadlock resolution algorithm.} In three robots, we \textbf{did} observe deadlock experimentally as can be seen here \url{https://youtu.be/e6eOeuh7Uec}. We only observed incidence of Category A deadlock in both simulations and experiments so our resolution algorithm is designed for this category. The videos for experimental results for validation of the resolution algorithm can be found at \url{https://youtu.be/IqYmkQUjrBI}. The video demonstrates that our resolution algorithm was able to safely deliver the robots to their goals. 
We next prove that this strategy ensures resolution of deadlock and convergence of robots to their goals by exploiting  the properties we derived in \cref{touching} and \cref{nonempty}. We prove this theorem for $N=2$ since  $N=3$ is a trivial extension of $N=2$.
\begin{theorem}
\label{finalthm}
The three phase strategy ensures that the robots will not fall back in deadlock and will converge to their goals.
\end{theorem}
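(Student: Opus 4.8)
The plan is to follow the system state through the three phases and establish, in order: (i) that safety $h_{12}(t)\ge 0$ is never violated; (ii) that Phase~2 terminates with the robots in a configuration from which the prescribed controllers are feasible; and (iii) that Phase~3 then drives both robots to their goals without the separation ever dropping below $D_s$, so that no robot re-enters deadlock. We only argue $N=2$; the $N=3$ case is identical once \cref{touching3,nonempty3} replace \cref{touching,nonempty}. The first ingredient is a precise description of the state at the instant deadlock is detected: it lies in $\mathcal{D}_{system}$, so by \cref{touching,nonempty} together with the force-balance equation \eqref{stationarity} — whose multipliers satisfy $\mu^*_{ij}>0$, since otherwise \eqref{stationarity} would force $\hat{\boldsymbol{u}}_i=\boldsymbol{0}$, contradicting deadlock — the two robots and the two goals are collinear on a line $\mathcal{L}$, $\norm{\Delta\boldsymbol{p}_{12}}=D_s$, and each robot lies between the other robot and \emph{its own} goal, i.e.\ the order along $\mathcal{L}$ is $\boldsymbol{p}_{d_2},\boldsymbol{p}_1,\boldsymbol{p}_2,\boldsymbol{p}_{d_1}$ with each goal at least $D_s$ beyond the opposite robot. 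This is exactly the deadlock the CBF-QP produces from the symmetric initial conditions of \cref{tworobottheorem} (for which $\norm{\boldsymbol{p}_{d_1}-\boldsymbol{p}_{d_2}}>D_{init}>D_s$ by \cref{ass2}), and the one drawn in Fig.~\ref{fig:D4.png}.

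\textbf{Phases 1 and 2.} By construction of the CBF-QP — inequality \eqref{constraint} with $\gamma>0$ — $h_{12}(t)\ge 0$ throughout Phase~1, so safety holds there. In Phase~2, the feedback-linearized inputs are designed (see \cite{grover2019deadlock}) to enforce $\Delta\boldsymbol{p}^T_{12}\Delta\boldsymbol{v}_{12}=0$, hence $\tfrac{d}{dt}\norm{\Delta\boldsymbol{p}_{12}}^2=0$ and $\norm{\Delta\boldsymbol{p}_{12}}\equiv D_s$, so $h_{12}\equiv 0\ge 0$ and safety is maintained; the extra constraint $\boldsymbol{u}^1_{fl}+\boldsymbol{u}^2_{fl}=\boldsymbol{0}$ holds the centroid $\boldsymbol{c}=\tfrac12(\boldsymbol{p}_1+\boldsymbol{p}_2)$ fixed, while $\dot\theta=-k_p(\theta-\beta)$ rotates the pair rigidly by $\pi$ about $\boldsymbol{c}$. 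Since $\boldsymbol{c}$ already lies on $\mathcal{L}$, at the end of Phase~2 the robots are again on $\mathcal{L}$, still $D_s$ apart, but with positions exchanged: the order becomes $\boldsymbol{p}_{d_2},\boldsymbol{p}_2,\boldsymbol{p}_1,\boldsymbol{p}_{d_1}$, so the segment from $\boldsymbol{p}_i$ to $\boldsymbol{p}_{d_i}$ no longer contains the other robot.

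\textbf{Phase 3.} Here $\dot{\boldsymbol{p}}_i=\hat{\boldsymbol{u}}_i=-k_{p_i}(\boldsymbol{p}_i-\boldsymbol{p}_{d_i})$, so $\boldsymbol{p}_i(t)=e^{-k_{p_i}(t-t_3)}\boldsymbol{p}_i(t_3)+(1-e^{-k_{p_i}(t-t_3)})\boldsymbol{p}_{d_i}$, which stays on $\mathcal{L}$ and converges to $\boldsymbol{p}_{d_i}$. Parametrizing $\mathcal{L}$ by $s$ with $\boldsymbol{p}_{d_2}\mapsto 0$ and $\boldsymbol{p}_{d_1}\mapsto D_G:=\norm{\boldsymbol{p}_{d_1}-\boldsymbol{p}_{d_2}}$, the post-swap state has $s_2(t_3)=d_2>0$, $s_1(t_3)=d_2+D_s$, and $d_1:=D_G-s_1(t_3)>0$ (both because each goal was $\ge D_s$ beyond the opposite robot in the deadlock state, so $D_G=d_1+D_s+d_2$). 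Then $\dot s_1=k_{p_1}(D_G-s_1)\ge 0$, $\dot s_2=-k_{p_2}s_2\le 0$, so $\tfrac{d}{dt}(s_1-s_2)=k_{p_1}(D_G-s_1)+k_{p_2}s_2\ge 0$, strictly positive unless both robots are already at their goals. Hence $\norm{\Delta\boldsymbol{p}_{12}(t)}=s_1(t)-s_2(t)$ is non-decreasing and stays $\ge D_s$, so $h_{12}(t)\ge 0$ for all $t\ge t_3$ and safety holds in Phase~3 without ever invoking a collision-avoidance constraint. Finally $\hat{\boldsymbol{u}}_i\neq\boldsymbol{0}$ whenever $\boldsymbol{p}_i\neq\boldsymbol{p}_{d_i}$, so the robots never stall — equivalently, the constraints are strictly inactive for $t>t_3$, so even re-running \eqref{optimization_formulation_2} returns $\hat{\boldsymbol{u}}_i$ — hence no robot re-enters deadlock, while $\boldsymbol{p}_i(t)\to\boldsymbol{p}_{d_i}$. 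Combined with Phases~1 and~2 this proves the claim.

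\textbf{Expected difficulties.} The real content sits in two places. First, the Phase~2 assertion that a $\pi$-rotation about the frozen centroid leaves the configuration collinear, $D_s$-separated and now unobstructed — this leans on \cref{touching,nonempty} and on the controller construction of \cite{grover2019deadlock}. Second, and more delicate to state cleanly, the Phase~3 monotonicity $\tfrac{d}{dt}\norm{\Delta\boldsymbol{p}_{12}}\ge 0$ genuinely requires the deadlock to be of the \emph{blocked} type (each goal at least $D_s$ beyond the opposing robot); not every element of $\mathcal{D}_{system}$ has this property, so one should either restrict to deadlocks reachable by the CBF-QP from the configurations of \cref{tworobottheorem} — where it is automatic — or carry it as a hypothesis. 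A minor but necessary step is a continuity argument bridging the asymptotic convergence of Phase~1 and the Phase~2 rotation with the finite-time threshold switches, i.e.\ controlling the error between the actual near-deadlock/near-swapped switching states and the exact states used above.
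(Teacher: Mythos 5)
Your proof is correct and follows essentially the same route as the paper's: characterize the state at deadlock via Theorems \ref{touching} and \ref{nonempty} (collinear with the goals, exactly $D_s$ apart), show the Phase-2 feedback-linearized rotation preserves $\norm{\Delta\boldsymbol{p}_{12}}=D_s$ while swapping the robots about the fixed centroid, and show the Phase-3 proportional control makes $\norm{\Delta\boldsymbol{p}_{12}}$ non-decreasing toward $D_G$ so that safety holds, no constraint reactivates, and the robots converge exponentially. The only substantive difference is in Phase 3: the paper rotates coordinates by $R_{-\beta}$ and integrates the \emph{relative} dynamics with identical gains, needing only its Assumption \ref{intergoal} ($D_G>D_s$), whereas your per-robot arclength parametrization accommodates unequal gains at the cost of the extra ``blocked-ordering'' hypothesis you correctly flag in your closing remarks.
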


\begin{proof} See appendix \ref{finalthm_appendix}. \end{proof}
\section{Conclusions}
\label{conclusions}
\textcolor{black}{In this paper, we analyzed the characteristic properties of deadlock that results from using CBF based QPs for avoidance control in multirobot systems.  We broke our analysis into three phases consistent with the chronology of deadlock incidence. For before deadlock, we demonstrated that symmetry in initial and goal positions leads to deadlocks and heterogeneity in controller parameters may not be enough to evade deadlock. The key lesson that we learned from the examples we considered is that symmetry will always lead to deadlock resulting in robots grazing each other asymptotically. Thus a symmetry breaking controller with some random noise can potentially prevent such configurations from being attained. However even with randomness, deadlock avoidance cannot be taken for granted.  Thus, a guaranteed way to resolve deadlock is to allow it to occur and then correct it using a reactive mechanism. However, devising this corrective strategy necessitates knowing the geometric characteristics of the robot configurations when the system is in deadlock. }

\textcolor{black}{In that spirit, we used the KKT conditions of the CBF-QPs and analyzed the geometric properties of an $N$ robot arrangement when it has fallen in deadlock. We demonstrated how to interpret deadlock as a subset of the state space and proved that in deadlock, the robots are on the verge of violating safety. Additionally, we showed that this set is non-empty and bounded. We also demonstrated that the number of valid geometric configurations in deadlock increases approximately exponentially with the number of robots which makes the analysis and resolution for $N \geq 4$ complex. Finally, by using these properties, we devised a corrective control algorithm to force the robots out of deadlock and ensure task completion.}

\textcolor{black}{There are several directions we would like to explore in future. Firstly, we want to extend this to $N\geq 4$ case. In the $N=4$ case, we determined a large number of admissible geometric configurations. We found that there exist bijections among some of these configurations depending on the number of total active constraints. We believe this property can be exploited to reduce the complexity down to the equivalence classes of these bijections.  We will exploit this line of approach to simplify analysis for $N\geq 4$ cases. Secondly, while in this work we focused on CBF-QPs for analysis, our developed theory can be used to analyze properties of deadlock resulting from other optimization based such as velocity obstacles and the safe-set method. So in future, we will evaluate the generalizabilty of our derived results specifically the geometric properties of robot arrangements across the spectrum of reactive optimization based controllers, and explore properties that make a particular algorithm immune to deadlock.} 
\appendix
\section{Appendix}
\subsection{Two Robot Deadlock Results}
\begin{lemma}
\label{lemma1_appendix}
If at $t=0$, $D_{init}>\beta^i_{+}$, then $\boldsymbol{u}^*_i(0) = \boldsymbol{\hat{u}}_i(0)$, \textcolor{black}{\textit{i.e.} the collision avoidance constraint of robot $i$ is inactive.}
\end{lemma}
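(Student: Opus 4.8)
The plan is to evaluate the feasibility flag of robot $i$'s single collision-avoidance constraint at the nominal control $\hat{\boldsymbol{u}}_i$ and show it is strictly negative exactly when $D_{init}>\beta^i_+$; the remark following \eqref{generalu2robot} — that a two-robot QP whose nominal control already satisfies its lone constraint returns $\boldsymbol{u}^*_i=\hat{\boldsymbol{u}}_i$ with that constraint inactive — then yields the statement.

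First I would specialize every quantity to $t=0$ using the collinear layout of \eqref{initial_positions_two_robots}--\eqref{goal_positions_two_robots}. For $i\in\{1,2\}$ and $j\neq i$, the inter-robot displacement and the goal displacement are both scalar multiples of $\hat{\boldsymbol{e}}_\alpha$, so $\boldsymbol{a}_{ij}=-\Delta\boldsymbol{p}_{ij}$ and $\hat{\boldsymbol{u}}_i=-k_{p_i}(\boldsymbol{p}_i-\boldsymbol{p}_{d_i})$ are parallel to $\hat{\boldsymbol{e}}_\alpha$, and their inner product evaluates to $\boldsymbol{a}_{ij}^T\hat{\boldsymbol{u}}_i = D_{init}\,k_{p_i}D_{G_i}$, while $b_{ij}=\tfrac{\gamma}{4}(D_{init}^2-D_s^2)$ by \eqref{abdefs}. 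A short sign check — made easy by the fact that in this configuration each robot's nominal control points toward (hence ``into'') the other — confirms that this same scalar expression holds for both $i=1$ and $i=2$, which is what lets a single formula $\beta^i_+$ serve both robots.

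Next I would rewrite $f_{ij}(\hat{\boldsymbol{u}}_i) = \boldsymbol{a}_{ij}^T\hat{\boldsymbol{u}}_i - b_{ij} < 0$ as the quadratic inequality $D_{init}^2 - \tfrac{4k_{p_i}D_{G_i}}{\gamma}D_{init} - D_s^2 > 0$. This quadratic in $D_{init}$ has positive leading coefficient and negative constant term, hence one negative and one positive real root; applying the quadratic formula shows the positive root is exactly $\beta^i_+$ as given in \eqref{generaldcrit}. Since $D_{init}>0$ and $D_{init}>\beta^i_+$ by hypothesis, $D_{init}$ lies strictly to the right of the larger root, so the quadratic — and therefore $-f_{ij}(\hat{\boldsymbol{u}}_i)$ — is strictly positive. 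This step simultaneously supplies the derivation of $\beta^i_+$ that the main text defers to this lemma.

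Finally I would close by the two-robot QP argument already recorded after \eqref{generalu2robot}: with only one constraint, $f_{ij}(\hat{\boldsymbol{u}}_i)<0$ means the unconstrained minimizer $\hat{\boldsymbol{u}}_i$ of $\norm{\boldsymbol{u}_i-\hat{\boldsymbol{u}}_i}^2$ is feasible, so $\boldsymbol{u}^*_i(0)=\hat{\boldsymbol{u}}_i(0)$, and $f_{ij}(\boldsymbol{u}^*_i(0))=f_{ij}(\hat{\boldsymbol{u}}_i(0))<0$ makes the constraint inactive in the sense of \eqref{activeinactive}. I do not anticipate a genuine obstacle here; the only place to be careful is the sign bookkeeping in the inner product, so that the positive root $\beta^i_+$ comes out correctly and uniformly for both robots.
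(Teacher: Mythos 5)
Your proposal is correct and follows essentially the same route as the paper's proof: evaluate the feasibility flag $f_{ij}(\hat{\boldsymbol{u}}_i(0))$ using the collinear geometry, recognize it as a quadratic in $D_{init}$ whose relevant root is $\beta^i_+$, and conclude via the single-constraint QP argument that the nominal control is optimal and the constraint inactive. The only cosmetic difference is that you negate the quadratic to work with a positive leading coefficient, whereas the paper argues directly from the downward-facing parabola $g_{ij}(D_{init})$; the sign bookkeeping and the identification of $\beta^i_+$ are identical.
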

\begin{proof}
To determine the control returned by the optimization for robot one at $t=0$, we compute the value of the flag $f_{12}(\hat{\boldsymbol{u}}_1(0))$. \textcolor{black}{This flag evaluates whether $\hat{\boldsymbol{u}}_1(0)$ is a feasible controller \textit{i.e.} it prevents collision of robot 1 with 2. Based on the initial positions and goals from  \eqref{initial_positions_two_robots}-\eqref{goal_positions_two_robots}, we have}
\begin{align}
\label{expressionforg12_two_robot}
    f_{12}(\hat{\boldsymbol{u}}_1(0)) &= \boldsymbol{a}_{12}^T(0)\boldsymbol{\hat{u}}_1(0) -b_{12}(0) \nonumber \\
    &=k_{p_1}(\boldsymbol{p}_{1}(0)-\boldsymbol{p}_2(0))^T(\boldsymbol{p}_{1}(0)-\boldsymbol{p}_{d_1}) \nonumber \\
    &- \frac{\gamma}{4}(D^2_{init.} - D_s^2) \nonumber \\
    &= -\frac{\gamma}{4}D_{init}^2 + D_{G_1}k_{p_1}D_{init} + \frac{\gamma}{4}D_s^2 \nonumber \\
    &\coloneqq g_{12}(D_{init.})
\end{align}
Define $  g_{12}(D_{init.}) \coloneqq f_{12}(\hat{\boldsymbol{u}}_1(0))$ to emphasize dependence on $D_{init.}$, the initial distance between the robots. \eqref{expressionforg12_two_robot} is a quadratic polynomial in $D_{init}$  and has two zeros at
\begin{align}
\label{dcrit1}
    \beta^1_\pm = \frac{2D_{G_1}k_{p_1}}{\gamma} \pm \sqrt{\bigg(\frac{2D_{G_1}k_{p_1}}{\gamma}\bigg)^2  + D_s^2}   
\end{align}
where the subscript of $\beta$ indicates the sign of $\beta$. Now, since the graph of $g_{12}(D_{init})$ is a downward facing parabola, we know that $g_{12}(D_{init})<0$ $\forall$ $D_{init} \in (-\infty,\beta^1_{-})\cup (\beta^1_{+},\infty) \implies g_{12}(D_{init}) <0$  $\forall$ $D_{init} \in (\beta^1_{+},\infty)$. We call $\beta^{1}_{+}$ to be the critical distance for robot 1. If at $t=0$, the distance between the robots is such that $D_{init}>\beta^{1}_{+}$, then at $t=0$, $g_{12}(D_{init.})<0 \iff f_{12}(\hat{\boldsymbol{u}}_1(0)) <0 \implies \boldsymbol{u}^*_1(0) = \boldsymbol{\hat{u}}_1(0)$.  Similarly, we can compute $g_{21}(D_{init.}) \coloneqq  f_{21}(\hat{\boldsymbol{u}}_2(0))$ which has roots at $\beta^2_{\pm}$. Hence, if at $t=0$, $ D_{init}>\beta^2_+$, then $\boldsymbol{u}^*_2(0)=\boldsymbol{\hat{u}}_{2}(0)$.  \qed
\end{proof}

\begin{lemma}
\label{lemma2_appendix}
$\exists$ a finite time $t_1$ as described in Def. \eqref{time_def}, until which the collision avoidance constraints of both robots are simultaneously inactive. 
\end{lemma}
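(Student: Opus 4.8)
The plan is to work along the phase-1 trajectory, where both robots obey $\dot{\boldsymbol{p}}_i = \hat{\boldsymbol{u}}_i$, so that $\boldsymbol{p}_i(t) = e^{-k_{p_i}t}\boldsymbol{p}_i(0) + (1-e^{-k_{p_i}t})\boldsymbol{p}_{d_i}$, and to show that the feasibility flag of at least one robot must become non-negative at a finite time. First I would make the bootstrap explicit: on the maximal interval $[0,t_1)$ on which $f_{12}(\hat{\boldsymbol{u}}_1)<0$ and $f_{21}(\hat{\boldsymbol{u}}_2)<0$, the argument preceding \textbf{Lemma \ref{lemma1}} gives $\boldsymbol{u}_i^* = \hat{\boldsymbol{u}}_i$ with both constraints inactive, so the closed form for $\boldsymbol{p}_i(t)$ above is self-consistently valid there; this interval is non-degenerate since by \textbf{Lemma \ref{lemma1}} together with Assumption \ref{ass2} both flags are strictly negative at $t=0$ and $f_{12},f_{21}$ depend continuously on $t$.

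Next I would compute $f_{12}(\hat{\boldsymbol{u}}_1(t)) = \boldsymbol{a}_{12}^T(t)\hat{\boldsymbol{u}}_1(t) - b_{12}(t)$ exactly as in the proof of \textbf{Lemma \ref{lemma1}} but at time $t$. Since all positions and goals remain collinear along $\hat{\boldsymbol{e}}_\alpha$, one gets $\boldsymbol{p}_1(t)-\boldsymbol{p}_2(t) = -D(t)\hat{\boldsymbol{e}}_\alpha$ and $\boldsymbol{p}_1(t)-\boldsymbol{p}_{d_1} = -D_{G_1}e^{-k_{p_1}t}\hat{\boldsymbol{e}}_\alpha$, where $D(t)=\norm{\Delta\boldsymbol{p}_{21}(t)}$, hence
\[ f_{12}(\hat{\boldsymbol{u}}_1(t)) = -\tfrac{\gamma}{4}D^2(t) + k_{p_1}D_{G_1}e^{-k_{p_1}t}D(t) + \tfrac{\gamma}{4}D_s^2. \]
This is precisely $g_{12}$ from \eqref{expressionforg12_two_robot} with $D_{G_1}\mapsto D_{G_1}e^{-k_{p_1}t}$ and $D_{init}\mapsto D(t)$; being a downward parabola in $D(t)$ with roots $\beta^1_\pm(t)$, where $\beta^1_+(t)$ is \eqref{time_crit_robot_1}, it obeys $f_{12}(\hat{\boldsymbol{u}}_1(t))<0 \iff D(t)>\beta^1_+(t)$ (the negative root being irrelevant). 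The same computation gives $f_{21}(\hat{\boldsymbol{u}}_2(t))<0 \iff D(t)>\beta^2_+(t)$ with $\beta^2_+(t)$ as in \eqref{time_crit_robot_2}, which also delivers the formulas claimed in the main text.

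Finally I would establish finiteness of the first crossing. From the closed forms, $\Delta\boldsymbol{p}_{21}(t) = \big(D_{init} - (1-e^{-k_{p_1}t})D_{G_1} - (1-e^{-k_{p_2}t})D_{G_2}\big)\hat{\boldsymbol{e}}_\alpha$; the scalar coefficient is strictly decreasing in $t$, starts at $D_{init}>0$, and tends to $D_{init}-D_{G_1}-D_{G_2}<0$ (using $D_{G_1},D_{G_2}>D_{init}$), so it vanishes at a finite time $t^\ast$ with $D(t^\ast)=0$. At $t^\ast$ we get $f_{12}(\hat{\boldsymbol{u}}_1(t^\ast)) = \tfrac{\gamma}{4}D_s^2>0$, while $f_{12}(\hat{\boldsymbol{u}}_1(0))<0$; by the intermediate value theorem there is a first time $\tilde t_1\in(0,t^\ast]$ with $f_{12}(\hat{\boldsymbol{u}}_1(\tilde t_1))=0$, and likewise a first time $\tilde t_2$ for robot $2$. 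Hence $t_1 = \min(\tilde t_1,\tilde t_2)\le t^\ast<\infty$, both flags are strictly negative on $[0,t_1)$ (so both collision-avoidance constraints are inactive there, closing the bootstrap), and at least one constraint becomes active at $t_1$ — under Assumption \ref{ass2} this is robot $1$'s, since $D(0)>\beta^1_+>\beta^2_+$. I expect the main obstacle to be the continuation bookkeeping: one must justify that the nominal-control formula for $\boldsymbol{p}_i(t)$ is valid precisely on $[0,t_1)$ before substituting it into $f_{12}$; once that is in place, the remainder is elementary parabola algebra and a single application of the intermediate value theorem.
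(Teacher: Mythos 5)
Your proposal is correct and follows essentially the same route as the paper's proof: integrate the nominal dynamics in closed form, express the flag $f_{12}(\hat{\boldsymbol{u}}_1(t))$ as a downward parabola in $D(t)$ with positive root $\beta^1_+(t)$, and take $t_1$ as the first of the two crossing times. The only cosmetic difference is how finiteness of the crossing is certified — you evaluate the flag at the finite time where $D$ vanishes and invoke the intermediate value theorem, whereas the paper compares the monotone limits $D(t)\to D_{init}-D_{G_1}-D_{G_2}<0$ and $\beta^1_+(t)\to D_s>0$ — and your explicit bootstrap on the maximal interval is a welcome tightening of a step the paper leaves informal.
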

\begin{proof}
Since at $t=0$, both robots use their prescribed controls $\boldsymbol{\hat{u}}_i$, let us assume that they continue to do so for the interim. Therefore, the dynamics of the robots are
% \begin{align}
% \label{initial_positions_and_goals_two_robots}
%     \boldsymbol{p}_1(0) &= (x_0,y_0) \nonumber \\
%     \boldsymbol{p}_2(0) &= \boldsymbol{p}_1(0) + D_{init}\hat{\boldsymbol{e}}_\alpha \nonumber \\
%     \boldsymbol{p}_{d_1} &= \boldsymbol{p}_1(0) + D_{G_1}\hat{\boldsymbol{e}}_{\alpha} \nonumber \\
%     \boldsymbol{p}_{d_2} &= \boldsymbol{p}_2(0) - D_{G_2}\hat{\boldsymbol{e}}_{\alpha}
% \end{align}
\begin{align}
\label{dynamics2}
    \dot{\boldsymbol{p}}_i = \boldsymbol{\hat{u}}_i= -k_{p_i}(\boldsymbol{p}_i-\boldsymbol{p}_{d_i}).
    \end{align}
 \eqref{dynamics2} can be integrated analytically using the initial and desired positions of the two robots from  \eqref{initial_positions_two_robots} and \eqref{goal_positions_two_robots} to give
\begin{align}
\label{postime}
\boldsymbol{p}_1(t) &=\boldsymbol{p}_{d_1}-D_{G_1}e^{-k_{p_1}t}\hat{\boldsymbol{e}}_\alpha \nonumber \\
 \boldsymbol{p}_2(t) &=\boldsymbol{p}_{d_2}+D_{G_2}e^{-k_{p_2}t}\hat{\boldsymbol{e}}_\alpha
 \end{align}
We can compute the relative position between 1 and 2 as:
 \begin{align}
 \label{Doft1}
 \Delta \boldsymbol{p}_{21}(t)  &=(D_{G_1}e^{-k_{p_1}t}+ D_{G_2}e^{-k_{p_2}t}-K) \hat{\boldsymbol{e}}_\alpha \nonumber \\
 &=D(t)\hat{\boldsymbol{e}}_\alpha 
 \end{align}
 where $K\coloneqq D_{G_1}+D_{G_2}-D_{init}$. \textcolor{black}{$D(t)$ denotes the inter-robot distance as a function of time for the duration in which both robots continue to use their proportional controllers. From \cref{Doft1}, it is given by}
 \begin{align}
 \label{distancetime}
     D(t) \coloneqq (D_{G_1}e^{-k_{p_1}t}+ D_{G_2}e^{-k_{p_2}t})-K,
 \end{align}
\textcolor{black}{The nominal controls can are given by}
 \begin{align}
 \label{prescribedalongalpha1}
  \boldsymbol{\hat{u}}_1(t) &=-k_{p_1}(\boldsymbol{p}_1(t)-\boldsymbol{p}_{d_1}) &= +k_{p_1} D_{G_1}e^{-k_{p_1}t}\hat{\boldsymbol{e}}_\alpha \\
  \label{prescribedalongalpha2}
  \boldsymbol{\hat{u}}_2(t) &=-k_{p_2}(\boldsymbol{p}_2(t)-\boldsymbol{p}_{d_2}) &= -k_{p_2} D_{G_2}e^{-k_{p_2}t}\hat{\boldsymbol{e}}_\alpha 
 \end{align}
\textcolor{black}{Now, we use \cref{distancetime}-\cref{prescribedalongalpha2} to compute the feasibility flag for robot 1 using \eqref{constraints_time1}}
\begin{align}
\label{parabola}
    f_{12}(\boldsymbol{\hat{u}}_1(t)) &=\boldsymbol{a}^T_{12}(t)\boldsymbol{\hat{u}}_1(t) - b_{12}(t) \nonumber \\
    &=-\frac{\gamma}{4}D^2(t)+k_{p_1}D_{G_1}e^{-k_{p_1}t}D(t)+\frac{\gamma}{4}D^2_s 
\end{align}
which is a downwards facing parabola in $D(t)$. Thus, $f_{12}(\boldsymbol{\hat{u}}_1(t))<0$ for $D(t)>\beta^1_{+}(t)$ where
\begin{align}
\label{time_crit_1}
\beta^1_{+}(t)=\frac{2D_{G_1}k_{p_1}e^{-k_{p_1}t}}{\gamma} + \sqrt{\bigg(\frac{2D_{G_1}k_{p_1}e^{-k_{p_1}t}}{\gamma} \bigg)^2 + D_s^2}    
\end{align}
and likewise we can define $\beta^2_{+}(t)$ for robot two by replacing $k_{p_1}, D_{G_1}$ with $k_{p_2}, D_{G_2}$ in  \eqref{time_crit_1}. Note that $D(t)$, $\beta^1_{+}(t)$ and $\beta^2_{+}(t)$ are monotonically decreasing with time. Additionally, recall from  \eqref{initial_positions_two_robots} that $D(0)=D_{init}$ and from Assumption \ref{ass2} that $D_{init.}>\beta^1_+$. However, while $D(t)$ converges to $-(D_{G_1}+D_{G_2}-D_{init})<0$, $\beta^1_{+}(t),\beta^2_{+}(t)$ converge to $D_s>0$. Therefore, there exists a time $t_a$ at which $D(t_a)=\beta^1_{+}(t_a) \implies f_{12}(\boldsymbol{\hat{u}}_1(t_a))=0$ and a time $t_b$ at which $D(t_b)=\beta^2_{+}(t_b) \implies f_{21}(\boldsymbol{\hat{u}}_2(t_b))=0$. 
We assume WLOG that $\beta^1_+(t_a)>\beta^2_+(t_a) \implies t_a<t_b$ and hence we define:
\begin{align}
    t_1 &\coloneqq \min\{t_a,t_b\}  =t_a
\end{align}
$t_a$ is the time at which the collision avoidance constraint of robot one becomes active \textit{i.e.} $\boldsymbol{a}^T_{12}(t_a)\boldsymbol{u}^*_1(t_a)=b_{12}(t_a)$ while the collision avoidance constraint of robot two is still inactive \textit{i.e.} $\boldsymbol{a}^T_{21}(t_a)\boldsymbol{\hat{u}}_2(t_a)<b_{21}(t_a)$. Thus $t_a$ is the time in Def. \ref{time_def}, \textit{i.e.} the maximum time until which both robots' constraints are inactive. See  \cref{fig:phases} for an illustration of phase 1 which ends at $t=t_1$ when $D(t)=\beta^1_+(t)$. Finally, $D(t)$ is monotonically decreasing $\forall t$ in \eqref{Doft1} and $D(0)=D_{init}>0$, $D(t_1)=\beta^1_+(t_1)>0$, therefore, $D(t)>0$ $\forall t \in [0,t_1)$. From \eqref{Doft1}, note that the distance between the robots is $\norm{\Delta \boldsymbol{p}_{21}(t)}\coloneqq |D(t)|=D(t)$ $\forall t \in [0,t_1)$. Therefore, $\norm{\Delta \boldsymbol{p}_{21}(t)}$ is also monotonically \textit{decreasing} until $t_1$. \qed
\end{proof}

\begin{lemma}
\label{lemma3_appendix}
$\exists$ a finite time $t_2$ as in Def. \eqref{time_def2}, until which the constraint  of robot two stays inactive.
\end{lemma}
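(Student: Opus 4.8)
The plan follows the template of Lemma~\ref{lemma2_appendix}, except that in phase~2 robot one is pinned to its active collision-avoidance constraint. First I would set up the phase-2 dynamics. By definition of phase~2 robot two still uses $\boldsymbol{\hat u}_2$, so exactly as in \eqref{postime}--\eqref{prescribedalongalpha2} its motion is $\boldsymbol{p}_2(t)=\boldsymbol{p}_{d_2}+D_{G_2}e^{-k_{p_2}t}\hat{\boldsymbol{e}}_\alpha$ with $\boldsymbol{\hat u}_2(t)=-k_{p_2}D_{G_2}e^{-k_{p_2}t}\hat{\boldsymbol{e}}_\alpha$. For robot one, activeness of its constraint means $\boldsymbol{a}_{12}^T\boldsymbol{u}_1^*=b_{12}$; using $\boldsymbol{a}_{12}=\Delta\boldsymbol{p}_{21}=D(t)\hat{\boldsymbol{e}}_\alpha$ and the already-noted vanishing of the component of $\boldsymbol{u}_1^*$ perpendicular to $\hat{\boldsymbol{e}}_\alpha$, this reduces to the scalar identity giving robot one's speed $u_1(t)=\tfrac{\gamma}{4}\big(D(t)-D_s^2/D(t)\big)\ge 0$ along $+\hat{\boldsymbol{e}}_\alpha$. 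Writing $x_i(t)$ for the coordinate of $\boldsymbol{p}_i(t)$ along $\hat{\boldsymbol{e}}_\alpha$ with $x_1(0)=0$, we have $\dot x_1=u_1\ge 0$, $\dot x_2=-k_{p_2}D_{G_2}e^{-k_{p_2}t}<0$, $D(t)=x_2(t)-x_1(t)$, hence $\dot D=\dot x_2-u_1<0$ whenever $D>D_s$: the gap strictly decreases throughout phase~2, $x_1$ increases and $x_2$ decreases.

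Next I would compute the feasibility flag of robot two's nominal control. A direct substitution gives $f_{21}(\boldsymbol{\hat u}_2(t))=\boldsymbol{a}_{21}^T\boldsymbol{\hat u}_2-b_{21}=D(t)\big(k_{p_2}D_{G_2}e^{-k_{p_2}t}-u_1(t)\big)$; since $D(t)>0$, the sign of the flag equals the sign of (robot two's nominal speed)$-$(robot one's actual speed)$=|\dot x_2|-\dot x_1$. The WLOG ordering $\beta^1_+(t_1)>\beta^2_+(t_1)$ from Lemma~\ref{lemma2_appendix} gives $f_{21}(\boldsymbol{\hat u}_2(t_1))<0$, so phase~2 is genuinely entered.

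The crux is to show this flag cannot remain negative forever. I would argue by contradiction: suppose $f_{21}(\boldsymbol{\hat u}_2(t))<0$ for all $t>t_1$. Then robot two never activates its constraint, so it uses $\boldsymbol{\hat u}_2$ for all time and $x_2(t)\to D_{init}-D_{G_2}<0$ (using the standing hypothesis $D_{G_2}>D_{init}$). The assumed sign of the flag reads $\dot x_1(t)>|\dot x_2(t)|$; integrating over $[t_1,T]$ gives $x_1(T)-x_1(t_1)\ge x_2(t_1)-x_2(T)$. Since $x_1$ is increasing and bounded above by $x_2(t_1)-D_s$ (because $x_1=x_2-D\le x_2(t_1)-D_s$), $x_1(\infty)$ exists, and letting $T\to\infty$ yields $x_1(\infty)\ge x_1(t_1)+D_{G_2}e^{-k_{p_2}t_1}>0$. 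Therefore $D(\infty)=x_2(\infty)-x_1(\infty)\le(D_{init}-D_{G_2})-x_1(t_1)<0$, contradicting the CBF safety guarantee $D(t)\ge D_s>0$ for all $t$. Hence $f_{21}(\boldsymbol{\hat u}_2(\cdot))$ takes a nonnegative value at some finite time; by continuity of $f_{21}$ together with $f_{21}(\boldsymbol{\hat u}_2(t_1))<0$, the quantity $t_2=\sup_{t>t_1}\{t\mid f_{21}(\boldsymbol{\hat u}_2(t))<0\}$ of Def.~\eqref{time_def2} is well-defined and finite, and it is exactly the instant robot two's constraint switches from inactive to active.

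I expect the main obstacle to be the internal bookkeeping that makes phase~2 well-posed, rather than the finiteness argument itself: one must verify that the motion stays collinear in phase~2 (legitimizing the scalar reduction) and that robot one's multiplier $\mu_{12}$ stays positive throughout $(t_1,t_2)$ --- equivalently $u_1(t)<k_{p_1}D_{G_1}e^{-k_{p_1}t}$ --- so that robot one indeed remains on its constraint boundary. Both should follow from the geometry at $t_1$ (where $\mu_{12}=0$ and $D(t_1)=\beta^1_+(t_1)$) together with the monotonicity of $D$; given that, the limiting-value-plus-contradiction step is the same device used to produce $t_1$ in Lemma~\ref{lemma2_appendix}.
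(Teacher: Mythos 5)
Your proof is correct, and while it shares the paper's setup it settles the crux by a different device. Both arguments begin identically: robot two runs $\boldsymbol{\hat u}_2$ so its position is \eqref{pos2inactive}, robot one is pinned to its active constraint, the perpendicular velocity component vanishes so the motion stays on the $\hat{\boldsymbol{e}}_\alpha$ line, and robot one's speed collapses to $u_1=\tfrac{\gamma}{4}\bigl(D-D_s^2/D\bigr)$ (your scalar identity $Du_1=b_{12}$ is exactly the paper's $\boldsymbol{u}_\parallel$ in \eqref{uperp}). Where you diverge is in proving that the flag must eventually turn nonnegative. The paper writes the relative dynamics $\dot D=-\gamma\tfrac{D^2-D_s^2}{4D}-k_{p_2}D_{G_2}e^{-k_{p_2}t}$, observes that $D(t)$ and the moving critical distance $\beta^2_+(t)$ are both monotonically decreasing with $D(t_1)>\beta^2_+(t_1)$, and locates $t_2$ as the crossing time $D(t_2)=\beta^2_+(t_2)$, with $t_2\le t_c\coloneqq\{t\mid D(t)=D_s\}$. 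You instead use the clean factorization $f_{21}(\boldsymbol{\hat u}_2)=D\,(|\dot x_2|-\dot x_1)$ and argue by contradiction: a permanently negative flag would force robot one's cumulative displacement to exceed robot two's retreat, driving $D$ below zero and violating the safety invariant $D\ge D_s$ that holds whenever both constraints are satisfied. Your route buys something real: it sidesteps the paper's implicit assumptions that $t_c$ exists and that two curves which both tend to $D_s$ from above actually intersect at finite time, replacing them with an integrated displacement bound that only needs forward invariance of the safe set. What it gives up is the explicit identification of $t_2$ as the instant $D(t)=\beta^2_+(t)$, which the paper reuses in Fig.~\ref{fig:phases} and in the hand-off to phase 3 (where $D(t_2)=\beta^2_+(t_2)>D_s$ seeds \cref{dtsolution2}). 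Both you and the paper leave the same bookkeeping item open --- that $\mu_{12}\ge 0$, equivalently $f_{12}(\boldsymbol{\hat u}_1(t))\ge 0$, persists on $(t_1,t_2)$ so that robot one genuinely remains on its constraint boundary --- and you are right that this is the one step neither argument fully discharges; flagging it explicitly is a point in your favor.
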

\begin{proof}
Let us assume that there is a time $t_2$ until which the collision avoidance constraint of robot two remains inactive. \textcolor{black}{Therefore, the dynamics of robot two are governed by the prescribed nominal control.} Thus,
\begin{align}
\label{pos2inactive}
    \dot{\boldsymbol{p}}_2 &= \hat{\boldsymbol{u}}_{2} = -k_{p_2}(\boldsymbol{p}_2  -\boldsymbol{p}_{d_2}) \nonumber \\
\implies  \boldsymbol{p}_2(t) &=\boldsymbol{p}_{d_2}+D_{G_2}e^{-k_{p_2}t}\hat{\boldsymbol{e}}_\alpha
 \end{align}
On the other hand, the control input to robot one is
\begin{align}
\label{robot1active}
   \boldsymbol{u}^*_{1} = \hat{\boldsymbol{u}}_{1} - \frac{1}{2}\mu_{12}\boldsymbol{a}_{12},
\end{align}
where $\mu_{12} \neq 0$ because the collision avoidance constraint of robot one became active at $t_1$ \textit{i.e.} $ \boldsymbol{a}^T_{12}(t)\boldsymbol{u}^*_1(t)=b_{12}(t)$ $\forall$ $t\geq t_1$. It's expression is given as follows
\begin{align}
    \mu_{12} = 2\frac{\boldsymbol{a}^T_{12}\boldsymbol{\hat{u}}_{1}-b_{12}}{\norm{\boldsymbol{a}_{12}}^2}
\end{align}
Therefore, the dynamics of robot one $\forall t \geq t_1$ are
\begin{align}
\label{robot1phase2}
    \dot{\boldsymbol{p}}_1 &= \hat{\boldsymbol{u}}_{1} -  \frac{\boldsymbol{a}^T_{12}\boldsymbol{\hat{u}}_{1}-b_{12}}{\norm{\boldsymbol{a}_{12}}^2}\boldsymbol{a}_{12} \nonumber \\
    &= \underbrace{\hat{\boldsymbol{u}}_{1}- \frac{\boldsymbol{a}^T_{12}\boldsymbol{\hat{u}}_{1}}{\norm{\boldsymbol{a}_{12}}^2}\boldsymbol{a}_{12}}_{\boldsymbol{u}_{\perp}} + \underbrace{\frac{b_{12}}{\norm{\boldsymbol{a}_{12}}^2}\boldsymbol{a}_{12}}_{\boldsymbol{u}_{\parallel} }
\end{align}
On closer inspection, note that $\boldsymbol{u}_{\perp} \perp \boldsymbol{a}_{12} \iff \boldsymbol{u}_{\perp} \perp -\Delta \boldsymbol{p}_{12}$. 
\textcolor{black}{We want to derive an analytical expression of $\boldsymbol{p}_1(t)$ just like we did for robot 2 in \cref{pos2inactive}. However, it is difficult to analytically integrate  \eqref{robot1phase2}. Instead, we will show via recursion that the $\boldsymbol{p}_1(t)$  can be expressed as}
 \begin{align}
 \label{robot1generalform}
   \boldsymbol{p}_1(t) &=\boldsymbol{p}_{d_1}-D_{G_1}\eta(t)\hat{\boldsymbol{e}}_\alpha
 \end{align}
for some function $\eta(t)$ $\forall t \geq t_1$. This expression is valid at $t=t_1$ for $\eta(t_1)=e^{-k_{p_1}t_1}$ as shown in  \eqref{postime}. This representation highlights that the position of robot one is confined along $\hat{\boldsymbol{e}}_\alpha$. We will show that this property is  maintained throughout phase 2 because the component of velocity input to robot one perpendicular to $\hat{\boldsymbol{e}}_\alpha $ will vanish (which will become the cause of \textit{deadlock}).  Recall at $t=t_1$
\begin{align}
 \label{Doft}
\boldsymbol{a}_{12}(t_1)= \Delta \boldsymbol{p}_{21}(t_1) 
 &=D(t_1)\hat{\boldsymbol{e}}_\alpha 
 \end{align}
\begin{align}
\label{uperp}
   \mbox{Therefore, }\boldsymbol{u}_{\perp}(t_1) &=   \hat{\boldsymbol{u}}_{1}- \frac{\boldsymbol{a}^T_{12}\boldsymbol{\hat{u}}_{1}}{\norm{\boldsymbol{a}_{12}}^2}\boldsymbol{a}_{12} \nonumber \\
   &= k_{p_1}D_{G_1}\eta(t_1)\hat{\boldsymbol{e}}_\alpha \nonumber  \\ 
   &- \frac{D(t_1)\hat{\boldsymbol{e}}^T_\alpha\big(k_{p_1}D_{G_1}\eta(t_1)\hat{\boldsymbol{e}}_\alpha\big)}{D^2(t_1)}(D(t_1)\hat{\boldsymbol{e}}_\alpha) \nonumber \\
   &= \boldsymbol{0} \nonumber \\
   \boldsymbol{u}_{\parallel}(t_1) &= \frac{b_{12}}{\norm{\boldsymbol{a}_{12}}^2}\boldsymbol{a}_{12}   \nonumber \\
   &= \gamma \frac{D^2(t_1)-D_s^2}{4D(t_1)}\hat{\boldsymbol{e}}_\alpha
\end{align}
Thus, integrating the velocity for a small time step gives
\begin{align}
\label{recursive}
\boldsymbol{p}_1(t_1+\Delta t) &=   \boldsymbol{p}_1(t_1) + \Delta t \big(\boldsymbol{u}_{\perp}(t_1)+\boldsymbol{u}_{\parallel}(t_1)\big) \nonumber \\
    &= \boldsymbol{p}_{d_1}-D_{G_1}\bigg(\eta(t_1)-\Delta t \gamma \frac{D^2(t_1)-D_s^2}{4D_{G_1}D(t_1)}\bigg)\hat{\boldsymbol{e}}_\alpha \nonumber \\
    & = \boldsymbol{p}_{d_1}-D_{G_1}\eta(t_1+\Delta t)\hat{\boldsymbol{e}}_\alpha 
\end{align}
Through  \eqref{recursive}, we have demonstrated that the updated position of robot one admits the general form given by  \eqref{robot1generalform} because the perpendicular component of the velocity vanishes. As a result, the robot never acquires any displacement along the perpendicular component. Hence, the dynamics of robot one are always
\begin{align}
  \dot{\boldsymbol{p}}_1 = \boldsymbol{u}_{\parallel}=\gamma \frac{\norm{\Delta \boldsymbol{p}_{21}}^2 -D_s^2}{4\norm{\Delta \boldsymbol{p}_{21}}^2}\Delta \boldsymbol{p}_{21},
\end{align}
$\forall t\geq t_1$. The relative dynamics are
\begin{align}
    \dot{\Delta \boldsymbol{p}_{21}}&=-\gamma \frac{\norm{\Delta \boldsymbol{p}_{21}}^2 -D_s^2}{4\norm{\Delta \boldsymbol{p}_{21}}^2}\Delta \boldsymbol{p}_{21} - k_{p_2}(\boldsymbol{p_2}-\boldsymbol{p}_{d_2})
\end{align}
Let $\Delta \boldsymbol{p}_{21}(t)=D(t)\hat{\boldsymbol{e}}_\alpha$, then we get for $t\geq t_1$
\begin{align}
    \dot{D}(t) = -\gamma \frac{D^2(t) -D_s^2}{4D(t)} -k_{p_2}D_{G_2}e^{-k_{p_2}t}
\end{align}
where $D(t_1)=\beta^1_{+}(t_1)>D_s$. Note that for $D(t)>D_s$, $\dot{D}(t)<0 \implies D(t)$ is monotonically decreasing at least until $t_c$ where $t_c \coloneqq \{t|D(t)=D_s\}$. Recall from phase 1 that the collision avoidance constraint of robot two is inactive \textit{i.e.} $f_{21}(\boldsymbol{\hat{u}}_2(t))<0$ as long as $D(t)>\beta^2_{+}(t)$. Additionally, recall that $\beta^2_{+}(t)$ is monotonically decreasing with respect to time and converges to $D_s$. Moreover,  from phase 1, recall that $D(t_1)=\beta^1_{+}(t_1)>\beta^2_{+}(t_1)$. Hence, there exists a time $t_2\leq t_c$ at which $D(t)=\beta^2_{+}(t)$. This time $t_2$ is precisely the time in \eqref{time_def2}. See  \cref{fig:phases} for an illustration of phase 2 which ends at $t=t_2$ when $D(t)=\beta^2_+(t)$.   This concludes phase 2 and marks the start of phase 3. \qed 
\end{proof}

\begin{lemma}
\label{lemma4_appendix}
The distance between robots converges to the safety margin $D_s$ at which point they stop moving and fall in deadlock.
\end{lemma}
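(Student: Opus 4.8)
The plan is to reduce the Phase~3 relative dynamics \eqref{relativedynamicsphase3} to an explicitly solvable scalar ODE. Writing $\Delta\boldsymbol{p}_{21}(t)=D(t)\hat{\boldsymbol{e}}_\alpha$ with $D(t_2)=\beta^2_+(t_2)>D_s$ (from \cref{lemma3_appendix}), \eqref{relativedynamicsphase3} collapses to $\dot D=-\tfrac{\gamma}{2}\tfrac{D^2-D_s^2}{D}$. I would then substitute $w\coloneqq D^2-D_s^2$, which linearizes the equation to $\dot w=-\gamma w$, so that $w(t)=w(t_2)e^{-\gamma(t-t_2)}$ and hence
\[
D(t)=\sqrt{D_s^2+\big(D(t_2)^2-D_s^2\big)e^{-\gamma(t-t_2)}}.
\]
Since $D(t_2)^2-D_s^2>0$, this shows $D(t)>D_s$ for all $t\ge t_2$ (so the Phase~3 equations stay valid, i.e.\ $b_{12}=b_{21}>0$) and $D(t)\downarrow D_s$ as $t\to\infty$, giving the first claim: the inter-robot distance converges to the safety margin.

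Next I would show the robots come to rest. From \eqref{phasethreedynamics}, $\dot{\boldsymbol{p}}_1=\gamma\tfrac{D^2-D_s^2}{4D}\hat{\boldsymbol{e}}_\alpha=\tfrac{\gamma}{4}\tfrac{w(t)}{D(t)}\hat{\boldsymbol{e}}_\alpha$ and $\dot{\boldsymbol{p}}_2=-\tfrac{\gamma}{4}\tfrac{w(t)}{D(t)}\hat{\boldsymbol{e}}_\alpha$. As $t\to\infty$, $w(t)\to0$ while $D(t)\to D_s>0$, so $\boldsymbol{u}^*_1,\boldsymbol{u}^*_2\to\boldsymbol{0}$. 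Moreover $\norm{\dot{\boldsymbol{p}}_i(t)}\le\tfrac{\gamma\, w(t_2)}{4D_s}e^{-\gamma(t-t_2)}$ is integrable on $[t_2,\infty)$, so each $\boldsymbol{p}_i(t)$ converges to a finite limit $\boldsymbol{p}^\infty_i$, and by continuity $\boldsymbol{p}^\infty_2-\boldsymbol{p}^\infty_1=D_s\hat{\boldsymbol{e}}_\alpha$.

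Finally I would verify $\boldsymbol{p}^\infty_i\ne\boldsymbol{p}_{d_i}$, so the limit is a genuine deadlock per \cref{def:deadlock_definition}, by bookkeeping the net displacement over all three phases. In every phase robot~$1$'s velocity is a nonnegative multiple of $\hat{\boldsymbol{e}}_\alpha$ (the nominal control in Phases~1--2 via \eqref{prescribedalongalpha1}, and $\boldsymbol{u}_\parallel$ in Phase~3 since $D>D_s$ forces $b_{12}>0$), while robot~$2$'s velocity is a nonpositive multiple of $\hat{\boldsymbol{e}}_\alpha$; hence $\boldsymbol{p}^\infty_1=\boldsymbol{p}_1(0)+s_1\hat{\boldsymbol{e}}_\alpha$ and $\boldsymbol{p}^\infty_2=\boldsymbol{p}_2(0)-s_2\hat{\boldsymbol{e}}_\alpha$ for some $s_1,s_2\ge0$. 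Subtracting and using $\boldsymbol{p}^\infty_2-\boldsymbol{p}^\infty_1=D_s\hat{\boldsymbol{e}}_\alpha$ together with $\boldsymbol{p}_2(0)-\boldsymbol{p}_1(0)=D_{init}\hat{\boldsymbol{e}}_\alpha$ yields $s_1+s_2=D_{init}-D_s$. In particular $s_1\le D_{init}-D_s<D_{init}<D_{G_1}$ by the standing assumption $D_{G_1}>D_{init}$, so robot~$1$ stalls a distance $D_{G_1}-s_1>0$ short of $\boldsymbol{p}_{d_1}=\boldsymbol{p}_1(0)+D_{G_1}\hat{\boldsymbol{e}}_\alpha$; the same argument with $D_{G_2}>D_{init}$ handles robot~$2$. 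Together with $\boldsymbol{u}^*_i\to\boldsymbol{0}$ this establishes deadlock.

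I do not anticipate a serious obstacle: the ODE becomes trivial after the substitution $w=D^2-D_s^2$. The only step needing care is the last one — inferring from the limiting inter-robot distance $D_s$ that neither robot \emph{individually} reached its goal — which is exactly why the displacement accounting over all three phases, and not just the Phase~3 analysis, is required.
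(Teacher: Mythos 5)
Your proof is correct and follows essentially the same route as the paper: reduce \eqref{relativedynamicsphase3} to the scalar ODE $\dot D=-\gamma(D^2-D_s^2)/(2D)$ with $D(t_2)=\beta^2_+(t_2)>D_s$, solve it explicitly to get $D(t)\to D_s$, and conclude from \eqref{phasethreedynamics} that both velocities vanish. Your final step is in fact slightly more careful than the paper's: the paper only observes that the limiting relative position $D_s\hat{\boldsymbol{e}}_{\alpha}$ differs from the goal offset $\Delta\boldsymbol{p}_{d_{21}}$ (which rules out both robots being at their goals simultaneously), whereas your per-robot displacement accounting across the three phases shows each robot individually stops short of its own goal, which is what Definition~\ref{def:deadlock_definition} actually requires.
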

\begin{proof}
Letting $\Delta \boldsymbol{p}_{21}(t)=D(t)\hat{\boldsymbol{e}}_\alpha$  in  \eqref{relativedynamicsphase3}
\begin{align}
    \dot{D}(t) = -\gamma \frac{D^2(t) -D_s^2}{2D(t)}
\end{align}
where $D(t_2)=\beta^2_{+}(t_2)>D_s$. The solution to this  is 
\begin{align}
\label{dtsolution2}
    D(t) = \sqrt{(D^2(t_2)-D_s^2)e^{-\gamma (t-t_2)} + D_s^2}
\end{align}
Therefore $D(t)\longrightarrow D_s$ and so $\norm{\Delta \boldsymbol{p}_{12}(t)}\longrightarrow D_s$. Therefore, the robots will be just at the verge of colliding.  Moreover, note from  \eqref{phasethreedynamics} that as $\norm{\Delta \boldsymbol{p}_{12}(t)}\longrightarrow D_s$, $\dot{\boldsymbol{p}}_1 \longrightarrow \boldsymbol{0}$, $\dot{\boldsymbol{p}}_2 \longrightarrow \boldsymbol{0}$, \textit{i.e.} the robots stop moving.  However, once they stop, note that $\lim_{t \longrightarrow \infty}\Delta \boldsymbol{p}_{21}(t)=D_s\hat{\boldsymbol{e}}_{\alpha}$, yet $\Delta \boldsymbol{p}_{d_{21}}=-(D_{G_1}+D_{G_2}-D_{init.})\hat{\boldsymbol{e}}_{\alpha}$. Therefore, the robots are not at their goals (because the goal vector is anti-parallel to the vector connecting the robots). Hence, we conclude that the robots have fallen in \textit{deadlock}.\qed
\end{proof}
\subsection{Three Robot Deadlock Results}
\begin{lemma}
\label{lemma5_appendix}
If  $D_{init.}>\beta_{+}$, then $\boldsymbol{u}^*_i(0) = \boldsymbol{\hat{u}}_i(0)$ $\forall i$
\end{lemma}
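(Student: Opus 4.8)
The plan is to follow the two-robot argument of Lemma~\ref{lemma1_appendix} verbatim in structure. For each ordered pair $(i,j)$ with $j\in\{1,2,3\}\backslash i$, I will evaluate the feasibility flag $f_{ij}(\hat{\boldsymbol{u}}_i(0))=\boldsymbol{a}^T_{ij}(0)\hat{\boldsymbol{u}}_i(0)-b_{ij}(0)$, show it is strictly negative as soon as $D_{init.}>\beta_+$, and then invoke the same optimality argument as in the two-robot case: if $\hat{\boldsymbol{u}}_i(0)$ strictly satisfies all of robot $i$'s collision-avoidance constraints, then since \eqref{optimization_formulation_2} minimizes $\norm{\boldsymbol{u}_i-\hat{\boldsymbol{u}}_i}^2$, its optimum is attained at $\boldsymbol{u}^*_i(0)=\hat{\boldsymbol{u}}_i(0)$ with every constraint inactive ($\mu^*_{ij}(0)=0$).

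First I would record the geometry at $t=0$ from \eqref{initial_positions_three_robots}--\eqref{goal_positions_three_robots}. The vector $\boldsymbol{a}_{ij}(0)=\boldsymbol{p}_j(0)-\boldsymbol{p}_i(0)$ has norm $D_{init.}$ (an edge of the equilateral triangle). Using $\boldsymbol{p}_{d_i}-\boldsymbol{p}_i(0)=\tfrac{\sqrt{3}D_G}{D_{init.}}(\boldsymbol{c}-\boldsymbol{p}_i(0))$ together with $\norm{\boldsymbol{c}-\boldsymbol{p}_i(0)}=D_{init.}/\sqrt{3}$ (the circumradius of an equilateral triangle of side $D_{init.}$), one gets $\hat{\boldsymbol{u}}_i(0)=-k_p(\boldsymbol{p}_i(0)-\boldsymbol{p}_{d_i})=k_p\tfrac{\sqrt{3}D_G}{D_{init.}}(\boldsymbol{c}-\boldsymbol{p}_i(0))$, of norm $k_pD_G$. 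The single nonroutine geometric input is the angle between $\boldsymbol{a}_{ij}(0)$ and $\hat{\boldsymbol{u}}_i(0)$: since $\hat{\boldsymbol{u}}_i(0)$ is a positive multiple of $\boldsymbol{c}-\boldsymbol{p}_i(0)$, and the segment from vertex $i$ to the centroid $\boldsymbol{c}$ bisects the $60^\circ$ interior angle at $i$ formed by the two edges toward $j$ and $k$, this angle is exactly $\pi/6$. Hence
\begin{align}
\boldsymbol{a}^T_{ij}(0)\hat{\boldsymbol{u}}_i(0)=k_p\frac{\sqrt{3}D_G}{D_{init.}}\cdot D_{init.}\cdot\frac{D_{init.}}{\sqrt{3}}\cos\frac{\pi}{6}=\frac{\sqrt{3}}{2}k_pD_GD_{init.},\nonumber
\end{align}
and with $b_{ij}(0)=\tfrac{\gamma}{4}(D_{init.}^2-D_s^2)$ we obtain
\begin{align}
g_{ij}(D_{init.})\coloneqq f_{ij}(\hat{\boldsymbol{u}}_i(0))=-\frac{\gamma}{4}D_{init.}^2+\frac{\sqrt{3}}{2}k_pD_GD_{init.}+\frac{\gamma}{4}D_s^2.\nonumber
\end{align}
Crucially this expression does not depend on the pair $(i,j)$, because of the built-in symmetry (identical gains $k_p$, equilateral initial configuration, antipodal goals); this is exactly what makes the ``$\forall i$'' conclusion immediate. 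Alternatively, one can bypass the symmetry remark and the $\pi/6$ claim by substituting the explicit coordinates $\boldsymbol{p}_1(0)=(x_0,y_0)$, $\boldsymbol{p}_2(0)=\boldsymbol{p}_1(0)+D_{init.}\hat{\boldsymbol{e}}_\alpha$, $\boldsymbol{p}_3(0)=\boldsymbol{p}_2(0)+D_{init.}\hat{\boldsymbol{e}}_{\alpha+2\pi/3}$ and verifying the identity above directly for all six ordered pairs by a short trigonometric calculation.

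The function $g_{ij}$ is a downward-facing parabola in $D_{init.}$ (leading coefficient $-\gamma/4<0$) whose roots, obtained from $D_{init.}^2-\tfrac{2\sqrt{3}k_pD_G}{\gamma}D_{init.}-D_s^2=0$, are $\beta_\pm=\tfrac{\sqrt{3}D_Gk_p}{\gamma}\pm\sqrt{(\tfrac{\sqrt{3}D_Gk_p}{\gamma})^2+D_s^2}$, which recovers \eqref{crit3}; note $\beta_-<0<\beta_+$. Therefore $D_{init.}>\beta_+$ lies strictly to the right of the larger root, so $g_{ij}(D_{init.})<0$, i.e.\ $f_{ij}(\hat{\boldsymbol{u}}_i(0))<0$ for every $i$ and every $j\neq i$. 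Hence $\hat{\boldsymbol{u}}_i(0)$ strictly satisfies both collision-avoidance constraints of robot $i$, so it is the minimizer of \eqref{optimization_formulation_2}, giving $\boldsymbol{u}^*_i(0)=\hat{\boldsymbol{u}}_i(0)$ for all $i$, with all constraints inactive. I expect the only step requiring care is the geometric identity for $\boldsymbol{a}^T_{ij}(0)\hat{\boldsymbol{u}}_i(0)$ (the $\pi/6$ angle and the claim that the six flags coincide); everything else is the same algebra as in Lemma~\ref{lemma1_appendix}.
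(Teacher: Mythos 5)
Your proposal is correct and follows essentially the same route as the paper's proof: evaluate the flag $f_{ij}(\hat{\boldsymbol{u}}_i(0))$, obtain the downward-facing quadratic $g(D_{init.})=-\tfrac{\gamma}{4}D_{init.}^2+\tfrac{\sqrt{3}}{2}k_pD_GD_{init.}+\tfrac{\gamma}{4}D_s^2$ whose positive root is $\beta_+$, invoke symmetry to see all six flags coincide, and conclude the nominal control is the unconstrained minimizer of the QP. The only difference is that you make explicit the geometric inputs (the circumradius $D_{init.}/\sqrt{3}$ and the $\pi/6$ angle between the edge and the median) that the paper leaves implicit when it writes down the same expression for $f_{12}(\hat{\boldsymbol{u}}_1(0))$.
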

\begin{proof}
To keep calculations brief, we compute the flag $f_{ij}(\hat{\boldsymbol{u}}_i(0))$ for $i=1$, $j=2$. Using  \eqref{initial_positions_three_robots}-\eqref{goal_positions_three_robots}, we have
\begin{align}
    f_{12}(\hat{\boldsymbol{u}}_1(0))) &= \boldsymbol{a}_{12}^T(0)\boldsymbol{\hat{u}}_1(0) -b_{12}(0) \nonumber \\
    &= -\frac{\gamma}{4}D_{init}^2 + \frac{\sqrt{3}D_Gk_p}{2}D_{init} + \frac{\gamma}{4}D_s^2  \nonumber \\
    &\coloneqq g(D_{init.})
\end{align}
We can show that $f_{ij}(\hat{\boldsymbol{u}}_i) = g(D_{init.}) \mbox{ }  \forall i \in \{1,2,3\}, j \neq i$ because of the equilateral configuration at $t=0$. $g(D_{init.})$ is a quadratic polynomial in $D_{init.}$ with a zero at $\beta_+$: 
\begin{align}
\label{crit3_app}
    \beta_+ =\frac{\sqrt{3}D_Gk_p}{\gamma} + \sqrt{\bigg(\frac{\sqrt{3}D_Gk_p}{\gamma}\bigg)^2 + D_s^2}
\end{align}
Therefore, $D_{init}> \beta_+ \implies g(D_{init.})<0$ $\iff f_{ij}(\hat{\boldsymbol{u}}_i(0)) <0 \mbox{ } \forall i \in \{1,2,3\}   \mbox{, }  j \in \{1,2,3\}\backslash i$,  $\iff {\boldsymbol{u}}^*_i(0) = \hat{\boldsymbol{u}}_i(0)$ $\forall$ $i \in \{1,2,3\}$. \qed
\end{proof}
\begin{lemma}
\label{lemma6_appendix}
All robots continue to move in an equilateral triangular configuration with centroid fixed at $\boldsymbol{c}(0)$
\end{lemma}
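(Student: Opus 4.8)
The plan is to substitute the Phase 1 trajectory, already integrated in \eqref{threerobotpositions}, into the goal specification \eqref{goal_positions_three_robots} and then track the position vectors measured from the centroid. First I would verify that the goals share the centroid of the initial configuration: averaging \eqref{goal_positions_three_robots} over $i$ gives $\frac{1}{3}\sum_i \boldsymbol{p}_{d_i} = \boldsymbol{c}(0) + \frac{\sqrt{3}D_G}{D_{init}}\big(\boldsymbol{c}(0) - \frac{1}{3}\sum_i\boldsymbol{p}_i(0)\big) = \boldsymbol{c}(0)$, because the bracketed difference vanishes by definition of $\boldsymbol{c}(0)$. Next, since \eqref{threerobotpositions} writes $\boldsymbol{p}_i(t)$ as the affine combination $e^{-k_pt}\boldsymbol{p}_i(0) + (1-e^{-k_pt})\boldsymbol{p}_{d_i}$ with coefficients that do not depend on $i$, averaging over $i$ immediately yields $\boldsymbol{c}(t) = e^{-k_pt}\boldsymbol{c}(0) + (1-e^{-k_pt})\boldsymbol{c}(0) = \boldsymbol{c}(0)$, which is the fixed-centroid claim.

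For the equilateral claim I would introduce the relative vectors $\boldsymbol{r}_i(t) := \boldsymbol{p}_i(t) - \boldsymbol{c}(0)$. From \eqref{goal_positions_three_robots} one reads off $\boldsymbol{p}_{d_i} - \boldsymbol{c}(0) = \big(1 - \tfrac{\sqrt{3}D_G}{D_{init}}\big)\boldsymbol{r}_i(0)$, and plugging this together with \eqref{threerobotpositions} into the definition of $\boldsymbol{r}_i(t)$ collapses everything to $\boldsymbol{r}_i(t) = \lambda(t)\,\boldsymbol{r}_i(0)$ with the common scalar $\lambda(t) = 1 - (1-e^{-k_pt})\tfrac{\sqrt{3}D_G}{D_{init}}$, independent of $i$. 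From the initial configuration \eqref{initial_positions_three_robots} the three vectors $\boldsymbol{r}_1(0),\boldsymbol{r}_2(0),\boldsymbol{r}_3(0)$ are the vertices of an equilateral triangle centered at the origin — their pairwise differences are $D_{init}\hat{\boldsymbol{e}}_{\alpha}$, $D_{init}\hat{\boldsymbol{e}}_{\alpha+2\pi/3}$, $D_{init}\hat{\boldsymbol{e}}_{\alpha+4\pi/3}$, which have equal norm and sum to zero since unit vectors $120^\circ$ apart cancel. Multiplying all three by the same factor $\lambda(t)$ preserves equal norms and mutual angles, so $\{\boldsymbol{p}_i(t)\}$ is an equilateral triangle with centroid $\boldsymbol{c}(0)$ and side length $|\lambda(t)|D_{init}$ at every $t$ in Phase 1.

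The computation above is essentially mechanical, so there is no genuine obstacle; the only point deserving a remark is the instant at which $\lambda(t)=0$. This instant exists precisely because Assumption \ref{ass3} forces $\tfrac{\sqrt{3}D_G}{D_{init}}>1$, so $\lambda$ decreases from $1$ through $0$ and on to the negative limit $1-\tfrac{\sqrt{3}D_G}{D_{init}}$; at that moment the triangle degenerates to the single point $\boldsymbol{c}(0)$. I would note that the equilateral structure is still maintained there in the limiting sense, and that the subsequent sign change of $\lambda$ is exactly the geometric statement that the robots pass through the centroid en route to the antipodal arrangement — which is what makes a collision at $\boldsymbol{c}(0)$ unavoidable under the nominal control and motivates the activation of all constraints analyzed in the following lemmas.
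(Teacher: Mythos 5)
Your proof is correct, and it takes a genuinely different (and tidier) route than the paper's. The paper proves the two geometric invariants separately: it expands $\norm{\Delta \boldsymbol{p}_{ij}(t)}$ into three terms and checks each is pair-independent to get the equilateral property, and then separately computes $\Delta \boldsymbol{p}_{ij}(t) \times \Delta \boldsymbol{p}_{ij}(0) = \boldsymbol{0}$ to get orientation preservation, before assembling the parametric form \eqref{positionsasafunctionoftimethree}. You instead observe that the goal placement \eqref{goal_positions_three_robots} makes each $\boldsymbol{p}_{d_i}-\boldsymbol{c}(0)$ a common scalar multiple of $\boldsymbol{p}_i(0)-\boldsymbol{c}(0)$, so the affine interpolation \eqref{threerobotpositions} collapses to the homothety $\boldsymbol{r}_i(t)=\lambda(t)\boldsymbol{r}_i(0)$ about the fixed centroid. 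That single structural fact delivers equal side lengths, preserved mutual angles, and the parallelism $\Delta\boldsymbol{p}_{ij}(t)\parallel\Delta\boldsymbol{p}_{ij}(0)$ all at once, and it hands you $\tilde{D}(t)=\lambda(t)D_{init}$ directly, matching \eqref{Dtilde}. Your closing remark about $\lambda$ passing through zero is a harmless aside: in the actual dynamics Phase 1 terminates at $t_1$ while $\tilde{D}(t)>\beta_+(t)>D_s$, so the degenerate configuration is never reached; it only describes the hypothetical continuation of the nominal flow, which is indeed the intuition behind the inevitability of constraint activation in Lemma \ref{lemma7}. The one cosmetic caution is that the signed quantity $\lambda(t)D_{init}$ is what enters \eqref{positionsasafunctionoftimethree}, so the distance is $|\lambda(t)|D_{init}$ as you note — consistent with how the paper treats $\tilde{D}(t)$ on the interval of interest where it is positive.
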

\begin{proof}
Using  \eqref{threerobotpositions}, the centroid's position is given by
\begin{align}
    \boldsymbol{c}(t) &=  \underbrace{\frac{1}{3}\sum_{i=1}^3 (\boldsymbol{p}_i(0)-\boldsymbol{p}_{d_i})}_{\boldsymbol{0}}e^{-k_pt} + \underbrace{\frac{1}{3}\sum_{i=1}^3 \boldsymbol{p}_{d_i}}_{\boldsymbol{c}(0)} \nonumber \\
     &= \boldsymbol{c}(0)
\end{align}
\textcolor{black}{Here the two terms are zero because of the way we defined the initial position and goal locations in \eqref{initial_positions_three_robots}-\eqref{goal_positions_three_robots}. Next, to show that the three robots move along the vertices of an equilateral triangle, we compute the distance between robots $i$ and $j$ and show that it identical for every pair.
From \eqref{threerobotpositions}, }
\begin{align}
\label{sidelength}
    \norm{\Delta \boldsymbol{p}_{ij}(t)} 
    % & = \norm{\big((\boldsymbol{p}_i(0)-\boldsymbol{p}_{d_i})e^{-k_pt} + \boldsymbol{p}_{d_i}\big) - \big((\boldsymbol{p}_j(0)-\boldsymbol{p}_{d_j})e^{-k_pt} + \boldsymbol{p}_{d_j}\big)} \nonumber \\
    = &\sqrt{\underbrace{\norm{\Delta \boldsymbol{p}_{ij}(0)}^2}_{\mbox{term 1}}e^{-2k_pt} +  \underbrace{\norm{\Delta \boldsymbol{p}_{d_{ij}}}^2}_{\mbox{term 2}}(1-e^{-k_pt})^2}  \nonumber \\ 
    &+\overline{2 \underbrace{\Delta \boldsymbol{p}^T_{ij}(0)\Delta \boldsymbol{p}_{d_{ij}}}_{\mbox{term 3}}(e^{-k_pt}-e^{-2k_pt}) }
\end{align}
One can show that term 1, term 2 and term 3 are identical for all $i,j \in \{1,2,3\}, j\neq i$ using  \eqref{initial_positions_three_robots}, \eqref{goal_positions_three_robots}. Therefore, the distance of robot $i$ from robot $j \neq i$ is same for all $i\in \{1,2,3\}$, hence the robots move along the vertices of an equilateral triangle.  A second invariant is the angle made by the vector connecting robots $i,j$ with the $X_{w}$ axis of the world. This can be shown by demonstrating that $\Delta \boldsymbol{p}_{ij}(t)$ remains parallel to $\Delta \boldsymbol{p}_{ij}(0)$ as follows
\begin{align}
\label{parallel}
     \Delta \boldsymbol{p}_{ij}(t) \times \Delta \boldsymbol{p}_{ij}(0)& =  \underbrace{\big(\Delta \boldsymbol{p}_{ij}(0) \times \Delta \boldsymbol{p}_{ij}(0)\big)}_{\boldsymbol{0}}e^{-k_pt} \nonumber \\ 
    &+ \underbrace{\big(\Delta \boldsymbol{p}_{d_{ij}} \times \Delta \boldsymbol{p}_{ij}(0)\big)}_{\mbox{Term 1}} (1-e^{-k_pt}) \nonumber \\
    &= \boldsymbol{0}
\end{align}
Term 1 vanishes because  $\Delta \boldsymbol{p}_{d_{ij}}$ is anti-parallel to  $\Delta \boldsymbol{p}_{ij}(0)$ using  \eqref{initial_positions_three_robots}, \eqref{goal_positions_three_robots}. \textcolor{black}{Since now we have shown that the three robots move along the vertices of an equilateral triangle, the positions of the robots can still be written in the form similar to the way we defined their initial positions in 
\eqref{initial_positions_three_robots}}:
\begin{align}
\label{positionsasafunctionoftimethree}
    \boldsymbol{p}_1(t) &= e^{-k_pt}\boldsymbol{p}_1(0) + (1-e^{-k_p t})\boldsymbol{p}_{d_1} \nonumber \\
    \boldsymbol{p}_2(t) &= \boldsymbol{p}_1(t) + \tilde{D}(t)\hat{\boldsymbol{e}}_{\alpha} \nonumber \\
    \boldsymbol{p}_3(t) &= \boldsymbol{p}_2(t) + \tilde{D}(t)\hat{\boldsymbol{e}}_{\alpha + \frac{2\pi}{3}},
\end{align}
where $\tilde{D}(0)=D_{init.}$. \textcolor{black}{$\tilde{D}(t)$ denotes the inter-robot distance between any pair of robots.} Here, the angle between $\Delta \boldsymbol{p}_{21}$ and $X_w$ is still $\alpha$ as in  \eqref{initial_positions_three_robots} because of  \eqref{parallel}. Thus, the robots move along the vertices of an equliateral triangle using $\hat{\boldsymbol{u}}_i(t)$ $\forall i \in \{1,2,3\}$. \textit{This symmetry is because (1)  $\hat{\boldsymbol{u}}_i(t)$ has identical gains (\textit{i.e.} $k_p$) $\forall i$ and (2) the distance of initial position to goal is identical for all robots ($D_G$)}. \qedsymbol
\end{proof}

\hspace{-0.4cm}Note that in  \eqref{positionsasafunctionoftimethree}, $\tilde{D}(t)$ is
\begin{align}
\label{Dtilde}
    \tilde{D}(t) &=\hat{\boldsymbol{e}}^T_{\alpha}\Delta \boldsymbol{p}_{21}(t)  \nonumber \\
    &= \hat{\boldsymbol{e}}^T_{\alpha}\bigg(\Delta \boldsymbol{p}_{21}(0)e^{-k_pt} + \Delta \boldsymbol{p}_{d_{21}}(1-e^{-k_pt})\bigg) \nonumber \\
    & = (D_{init.}-\sqrt{3}D_G) + \sqrt{3}D_Ge^{-k_pt}
\end{align}
Using this definition of $\tilde{D}(t)$, we now demonstrate that there exists a time when all the collision avoidance constraints will inevitably become active.
\begin{lemma}
\label{lemma7_appendix}
There exists a time $t_{ij}$ when $ f_{ij}(\boldsymbol{\hat{u}}_i(t)) =\boldsymbol{a}^T_{ij}(t)\boldsymbol{\hat{u}}_i(t) - b_{ij}(t)=0$ \textit{i.e.} when the collision avoidance constraint of robot $i$ with robot $j$ becomes active. Furthermore, $t_{ij}$ is identical $\forall i \in \{1,2,3\}, j \in \{1,2,3\}\backslash i$
\end{lemma}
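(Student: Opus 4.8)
The plan is to transplant the phase-$1$ argument of Lemma~\ref{lemma2_appendix} to the equilateral setting. As in the two-robot case, I would first adopt the bootstrapping hypothesis that all collision-avoidance constraints of all three robots stay inactive on a maximal interval beginning at $t=0$, so the dynamics are nominal, $\dot{\boldsymbol{p}}_i=\hat{\boldsymbol{u}}_i$ as in \eqref{threerobotpositions}, and the positions take the symmetric form \eqref{positionsasafunctionoftimethree} with common side length $\tilde{D}(t)$ given by \eqref{Dtilde}. Self-consistency of this hypothesis is recovered a posteriori once we show the flags stay negative on $[0,t_1)$.

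Next I would evaluate the flag $f_{ij}(\hat{\boldsymbol{u}}_i(t))=\boldsymbol{a}^T_{ij}(t)\hat{\boldsymbol{u}}_i(t)-b_{ij}(t)$ explicitly. By Lemma~\ref{lemma6_appendix}, $\boldsymbol{a}_{ij}(t)=\boldsymbol{p}_j(t)-\boldsymbol{p}_i(t)$ is an edge vector of the moving equilateral triangle, hence has magnitude $\tilde{D}(t)$, while $\hat{\boldsymbol{u}}_i(t)=-k_p e^{-k_p t}\,(\boldsymbol{p}_i(0)-\boldsymbol{p}_{d_i})$ and $b_{ij}(t)=\tfrac{\gamma}{4}\big(\tilde{D}^2(t)-D_s^2\big)$. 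Using the antipodal-goal definition \eqref{goal_positions_three_robots} (equivalently, the circumradius $D_{init}/\sqrt{3}$ of the triangle and the anti-parallelism of $\Delta\boldsymbol{p}_{d_{ij}}$ with $\Delta\boldsymbol{p}_{ij}(0)$ established in Lemma~\ref{lemma6_appendix}), the projection of $\boldsymbol{p}_i(0)-\boldsymbol{p}_{d_i}$ onto the unit vector along $\boldsymbol{a}_{ij}(t)$ equals $-\tfrac{\sqrt{3}}{2}D_G$ for \emph{every} robot $i$ and every $j\neq i$. Consequently, for all such pairs,
\begin{align*}
f_{ij}(\hat{\boldsymbol{u}}_i(t))=-\frac{\gamma}{4}\tilde{D}^2(t)+\frac{\sqrt{3}D_G k_p}{2}\,e^{-k_p t}\,\tilde{D}(t)+\frac{\gamma}{4}D_s^2,
\end{align*}
a downward-facing parabola in $\tilde{D}(t)$, identical for every ordered pair by the cyclic symmetry of Lemma~\ref{lemma6_appendix}. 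Hence $f_{ij}(\hat{\boldsymbol{u}}_i(t))<0$ if and only if $\tilde{D}(t)>\beta_+(t)$, where $\beta_+(t)=\tfrac{\sqrt{3}D_G k_p e^{-k_p t}}{\gamma}+\sqrt{\big(\tfrac{\sqrt{3}D_G k_p e^{-k_p t}}{\gamma}\big)^2+D_s^2}$ is the positive root (the other root is negative); at $t=0$ this reduces to $\beta_+$ of \eqref{crit3_app}.

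Then I would run an intermediate-value argument on $g(t)\coloneqq\tilde{D}(t)-\beta_+(t)$. Both $\tilde{D}$ and $\beta_+$ are continuous and strictly decreasing; by \eqref{Dtilde} and Assumption~\ref{ass3}, $\tilde{D}(t)\to D_{init}-\sqrt{3}D_G<0$, while $\beta_+(t)\to D_s>0$, so $g(t)\to (D_{init}-\sqrt{3}D_G)-D_s<0$; and by Assumption~\ref{ass4}, $g(0)=D_{init}-\beta_+>0$. Let $t_1$ be the first time at which $g(t_1)=0$; such a time exists by continuity. Then $\tilde{D}(t_1)=\beta_+(t_1)$, so $f_{ij}(\hat{\boldsymbol{u}}_i(t_1))=0$ for every pair, while $g>0$ — i.e. $f_{ij}<0$ for every pair — on $[0,t_1)$, which closes the bootstrapping loop and shows $t_1$ is exactly the instant at which phase~1 ends and all constraints activate simultaneously. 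Since $f_{ij}$, $\tilde{D}$ and $\beta_+$ carry no dependence on $(i,j)$, the time $t_{ij}=t_1$ is common to all three constraints, as claimed (cf.\ \cref{fig:phases3}).

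The step I expect to be the main obstacle is the explicit evaluation of $\boldsymbol{a}^T_{ij}\hat{\boldsymbol{u}}_i$: unlike the two-robot case, where collinearity made this immediate, here one must exploit the equilateral geometry carefully to verify that the projection of $\boldsymbol{p}_i(0)-\boldsymbol{p}_{d_i}$ onto each incident edge direction is the \emph{same} scalar $-\tfrac{\sqrt{3}}{2}D_G$, so that all three flags collapse to a single scalar parabola in $\tilde{D}(t)$ of the same form as \eqref{parabola}. Once that reduction is in hand, the monotonicity and limiting-value bookkeeping parallels Lemma~\ref{lemma2_appendix} verbatim.
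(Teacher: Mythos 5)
Your proposal is correct and follows essentially the same route as the paper's proof: reduce every flag $f_{ij}(\hat{\boldsymbol{u}}_i(t))$ to the single downward parabola $-\tfrac{\gamma}{4}\tilde{D}^2+\tfrac{\sqrt{3}}{2}k_pD_Ge^{-k_pt}\tilde{D}+\tfrac{\gamma}{4}D_s^2$ in the common side length $\tilde{D}(t)$, then use monotonicity of $\tilde{D}(t)$ and $\beta_+(t)$ together with their limits ($<0$ by Assumption~\ref{ass3} versus $D_s>0$) and Assumption~\ref{ass4} to force a crossing time $t_1$ common to all pairs. Your explicit $-\tfrac{\sqrt{3}}{2}D_G$ projection computation and the intermediate-value bookkeeping simply spell out steps the paper attributes to symmetry.
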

\begin{proof}
Using \eqref{threerobotpositions}, we can find that $f_{ij}(\hat{\boldsymbol{u}}_i(t))$ (as a function of time) is identical for$ \mbox{ }\forall i \in \{1,2,3\},\mbox{ } j\in \{1,2,3\}\backslash i$. This is again due to the symmetry in the positions at time $t$. Now, we evaluate  $f_{ij}(\boldsymbol{\hat{u}}_i)$ as a function of $\tilde{D}$. For brevity, we evaluate this for $i=1$, $j=2$:
\begin{align}
\label{parabola2}
    g(\tilde{D})&\coloneqq f_{12}(\boldsymbol{\hat{u}}_1) =\boldsymbol{a}^T_{12}\boldsymbol{\hat{u}}_1 - b_{12} \nonumber \\
    &=-\frac{\gamma}{4}\tilde{D}^2(t)+\frac{\sqrt{3}}{2}k_{p}D_{G}e^{-k_pt}\tilde{D}(t)+\frac{\gamma}{4}D^2_s
\end{align}
Note that $g(\tilde{D})$ is quadratic in $\tilde{D}$ with a zero at
\begin{align}
\label{beta3}
    \beta_+(t) =\frac{\sqrt{3}D_Gk_pe^{-k_pt}}{\gamma} + \sqrt{ \bigg( \frac{\sqrt{3}D_Gk_pe^{-k_pt}}{\gamma}\bigg)^2 + D_s^2}  
\end{align}
Therefore, if $\tilde{D}(t) \leq \beta_{+}(t) \implies f_{12}(\hat{\boldsymbol{u}}_1(t))\geq0$ . Now, note from  \eqref{Dtilde} that $\tilde{D}(t)$ is monotonically decreasing with $t$, $\tilde{D}(0)=D_{init.}>\beta_{+}$ (from Assumption \eqref{ass4}). Also, note that $\lim_{t \longrightarrow \infty}\tilde{D}(t)=D_{init.}-\sqrt{3}D_G<0$ from Assumption \eqref{ass3}. Similarly, $\beta_{+}(t)$ is monotonically decreasing. $\beta_+(0)<D_{init.}$ and $\lim_{t \longrightarrow \infty}\beta_{+}(t)=D_s>0$.  Therefore, there exists a time when $\tilde{D}(t)$ intersects $\beta_{+}(t)$ \textit{i.e.} $t_1=\{t|\tilde{D}(t)=\beta_{+}(t)\}$. This is equivalent to $t_1=\{t|f_{ij}(\hat{\boldsymbol{u}}_i(t))=0 \mbox{ }\forall i \in \{1,2,3\},\mbox{ } j \in \{1,2,3\}\backslash i\}$.  This is precisely when constraints of \textbf{all} robots become active. \qed
\end{proof}

\begin{lemma}
\label{lemma8_appendix}
The distance between the robots converges to the safety margin $D_s$ and the robots stop moving and fall in deadlock.
\end{lemma}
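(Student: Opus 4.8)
The plan is to reproduce the two-robot Phase~3 argument (Lemma~\ref{lemma4_appendix}) almost verbatim, since the scalar reduction has already been done in \eqref{deadlock_12_rhs1}--\eqref{deadlock_12_rhs2}. First I would take the representation \eqref{initialpositions_attina2} as given, so that in Phase~2 the inter-robot distance is a single scalar $D(t)$ common to all three pairs and $\Delta\boldsymbol{p}_{12}(t)=\boldsymbol{p}_1(t)-\boldsymbol{p}_2(t)=-D(t)\hat{\boldsymbol{e}}_{\alpha}$ for $t\ge t_1$. Projecting \eqref{deadlock_12_rhs2} onto $\hat{\boldsymbol{e}}_{\alpha}$ then yields the autonomous scalar ODE
\begin{align}
\dot{D}(t) = -\gamma\,\frac{D^2(t)-D_s^2}{2D(t)}, \qquad D(t_1)=\beta_+(t_1)>D_s, \nonumber
\end{align}
which is exactly the equation integrated in Lemma~\ref{lemma4_appendix} (note $\beta_+(t_1)>D_s$ because of the $\sqrt{\,\cdot\,+D_s^2}$ term in \eqref{beta3}). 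Its solution $D(t)=\sqrt{(D^2(t_1)-D_s^2)e^{-\gamma(t-t_1)}+D_s^2}$ decreases monotonically to $D_s$, so $\norm{\Delta\boldsymbol{p}_{ij}(t)}\to D_s$ for every pair, which is the first claim.

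For the second claim I would substitute $D(t)\to D_s$ into the per-robot dynamics \eqref{deadlock_12_rhs1}: the scalar prefactor $\sqrt{3}\gamma\,(D^2(t)-D_s^2)/(6D(t))$ tends to $0$, hence $\dot{\boldsymbol{p}}_i(t)\to\boldsymbol{0}$ for each $i$ and the robots come to rest; moreover this prefactor decays exponentially, so each robot travels only a finite distance during Phase~2. To see that the rest configuration is not the goal configuration, I would use that \eqref{deadlock_12_rhs1} makes the three velocities sum to zero, so the centroid stays at $\boldsymbol{c}(0)$ (as in Lemma~\ref{lemma6_appendix}), while \eqref{initialpositions_attina2} keeps the triangle equilateral with the fixed orientation $\alpha$. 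Therefore $\lim_{t\to\infty}(\boldsymbol{p}_1,\boldsymbol{p}_2,\boldsymbol{p}_3)$ is an equilateral triangle of side $D_s$ centered at $\boldsymbol{c}(0)$ with orientation $\alpha$, whereas by \eqref{goal_positions_three_robots} the goal triangle is the antipodal one (orientation $\alpha+\pi$) with circumradius $D_G-D_{init.}/\sqrt{3}>0$ by Assumption~\ref{ass3}. Since $D_s>0$ as well, each limiting position sits at a strictly positive distance from $\boldsymbol{c}(0)$ in the direction opposite to its goal, so $\lim_{t\to\infty}\boldsymbol{p}_i(t)\neq\boldsymbol{p}_{d_i}$ for every $i$; by Definition~\ref{deadlock_definition2} the robots are in deadlock.

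I expect the only non-routine point to be justifying the representation \eqref{initialpositions_attina2}, i.e. that the equilateral symmetry and centroid invariance established in Phase~1 (Lemma~\ref{lemma6_appendix}) persist through Phase~2 once all constraints activate. I would handle this exactly as in the two-robot Phase~2/3 recursion in Lemma~\ref{lemma3_appendix}: at each instant the component of $\boldsymbol{u}_i^*$ transverse to the line $\boldsymbol{p}_i\boldsymbol{p}_{d_i}$ vanishes, so no robot ever acquires transverse displacement, the common side length $D(t)$ remains the only varying degree of freedom, and the three symmetric velocities keep the centroid fixed. Everything downstream of \eqref{initialpositions_attina2} is then a direct transcription of Lemma~\ref{lemma4_appendix}.
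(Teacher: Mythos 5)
Your proposal is correct and follows essentially the same route as the paper's proof: reduce \cref{deadlock_12_rhs2} to the scalar ODE $\dot{D}=-\gamma(D^2-D_s^2)/(2D)$, integrate it explicitly to get $D(t)\to D_s$, conclude $\dot{\boldsymbol{p}}_i\to\boldsymbol{0}$ from \cref{deadlock_12_rhs1}, and verify the limit configuration is not the goal configuration. The only (immaterial) difference is in that last step, where the paper simply compares $\lim_{t\to\infty}\Delta\boldsymbol{p}_{12}(t)=-D_s\hat{\boldsymbol{e}}_{\alpha}$ with $\Delta\boldsymbol{p}_{d_{12}}=(\sqrt{3}D_G-D_{init.})\hat{\boldsymbol{e}}_{\alpha}$ under Assumption \ref{ass3}, whereas you use the fixed-centroid/antipodal-triangle argument; both are valid, and like the paper you correctly flag that the real content deferred elsewhere is the persistence of the symmetric representation \cref{initialpositions_attina2} into Phase 2.
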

\begin{proof}
Noting from  \eqref{initialpositions_attina2} that $\Delta \boldsymbol{p}_{12}(t)=-D(t)\hat{\boldsymbol{e}}_{\alpha}$ and combining with  \eqref{deadlock_12_rhs2}, we deduce that
\begin{align}
\label{dtsolution}
    \dot{D}(t) &= -\gamma \frac{D^2(t)-D_s^2}{2D(t)} \hspace{0.5cm}\mbox{      }\forall t\geq t_1 \nonumber \\
\implies  D(t) &= \sqrt{(D^2(t_1)-D_s^2)e^{-\gamma (t-t_1)} + D_s^2} \nonumber \\
\implies \lim_{t \longrightarrow \infty}D(t)&=D_s
\end{align}
Hence, it follows from \eqref{deadlock_12_rhs1} that $\dot{\boldsymbol{p}_i}\longrightarrow \boldsymbol{0}$ $\forall i \in \{1,2,3\}$. Moreover, note that $\lim_{t \longrightarrow \infty}\Delta \boldsymbol{p}_{12}(t)=-D_s\hat{\boldsymbol{e}}_{\alpha}$ yet $\Delta \boldsymbol{p}_{d_{12}}=(\sqrt{3}D_G-D_{init.})\hat{\boldsymbol{e}}_{\alpha} \neq -D_s\hat{\boldsymbol{e}}_{\alpha}$  (from \eqref{goal_positions_three_robots} and Assumption \ref{ass3}). Therefore, the robots are not at their goals, and static, thus they have fallen in \textit{deadlock}. \qed
\end{proof}
\subsection{Deadlock Resolution Algorithm Proof}
\begin{theorem}
\label{finalthm_appendix}
The three phase strategy ensures that the robots will not fall back in deadlock and will converge to their goals.
\end{theorem}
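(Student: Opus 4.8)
The plan is to prove the statement for $N=2$; the three-robot Category~A case is the analogous rigid rotation of the equilateral triangle about its centroid. The argument follows the system through the three phases and shows that Phase~2 exchanges the two robots' positions, after which the prescribed controllers of Phase~3 carry the robots to their goals along non-overlapping segments. The first step is to pin down the configuration present when Phase~1 terminates, i.e.\ the deadlock state. By \cref{touching} the two robots are exactly $D_s$ apart, and by the force-balance identity \eqref{stationarity} together with \cref{nonempty} the robots and their goals are collinear, with each robot's goal lying \emph{beyond} the other robot along that line: along the line the order is $\boldsymbol{p}_{d_2},\boldsymbol{p}_1,\boldsymbol{p}_2,\boldsymbol{p}_{d_1}$, with $\boldsymbol{p}_{d_1}-\boldsymbol{p}_1$ a positive multiple of $\boldsymbol{p}_2-\boldsymbol{p}_1$ of length at least $D_s$, and symmetrically for robot~$2$. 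Write $\boldsymbol{c}=\tfrac12(\boldsymbol{p}_1+\boldsymbol{p}_2)$ for the deadlock centroid.

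Next I would analyze Phase~2. The feedback-linearizing inputs enforce (i)~$\norm{\Delta\boldsymbol{p}_{12}}\equiv D_s$, (ii)~$\dot\theta=-k_p(\theta-\beta)$ for the orientation $\theta$ of $\Delta\boldsymbol{p}_{12}$, and (iii)~$\boldsymbol{u}^1_{fl}+\boldsymbol{u}^2_{fl}=\boldsymbol{0}$. From (i), $h_{12}\equiv 0\ge 0$, so Phase~2 preserves safety throughout; from (iii), the centroid $\boldsymbol{c}$ is invariant; and (i) together with (iii) force the motion to be a rigid rotation of the segment $\boldsymbol{p}_1\boldsymbol{p}_2$ about $\boldsymbol{c}$, while (ii) drives $\theta$ monotonically through a half-turn (the target $\beta$ being the deadlock orientation plus $\pi$; see Fig.~\ref{fig:D4.png}). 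Hence Phase~2 asymptotically realizes a $180^{\circ}$ rotation about $\boldsymbol{c}$, which exchanges the two robots: at the switch into Phase~3 one has, up to the detection tolerance, $\boldsymbol{p}_1=\boldsymbol{p}_2^{\mathrm{old}}$ and $\boldsymbol{p}_2=\boldsymbol{p}_1^{\mathrm{old}}$, still collinear with the (unchanged) goals.

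The last step is Phase~3, where $\dot{\boldsymbol{p}}_i=\hat{\boldsymbol{u}}_i=-k_{p_i}(\boldsymbol{p}_i-\boldsymbol{p}_{d_i})$. Writing $\tau$ for the time elapsed since the start of Phase~3, integration gives $\boldsymbol{p}_i(\tau)=\boldsymbol{p}_{d_i}+(\boldsymbol{p}_i(0)-\boldsymbol{p}_{d_i})e^{-k_{p_i}\tau}$, so each robot converges exponentially to its goal, and since $\hat{\boldsymbol{u}}_i\ne\boldsymbol{0}$ whenever $\boldsymbol{p}_i\ne\boldsymbol{p}_{d_i}$ no robot stalls, so deadlock cannot recur. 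For safety, each robot moves monotonically along the common line from its post-swap position toward its goal; by the geometry of the first step, robot~$1$ therefore stays on the segment from $\boldsymbol{p}_2^{\mathrm{old}}$ to $\boldsymbol{p}_{d_1}$ and robot~$2$ on the segment from $\boldsymbol{p}_1^{\mathrm{old}}$ to $\boldsymbol{p}_{d_2}$, and these two segments are disjoint and separated by $\norm{\boldsymbol{p}_2^{\mathrm{old}}-\boldsymbol{p}_1^{\mathrm{old}}}=D_s$. Hence $\norm{\Delta\boldsymbol{p}_{12}(\tau)}\ge D_s$ for all $\tau\ge 0$, and chaining the three phases yields the theorem.

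I expect the crux to be the first step and its use in the last one: showing rigorously that in deadlock each robot's goal lies \emph{strictly beyond} the other robot, rather than merely on the same side of it, since it is exactly this that makes the post-swap proportional motions occupy disjoint segments. The stationarity identity \eqref{stationarity} by itself yields only the weaker ``same side'' property, so one must additionally invoke the structure of the deadlocks produced by Phase~1 (cf.\ \cref{before deadlock} and the family exhibited in \cref{nonempty}). A second, milder technical point is that Phase~2 completes the swap only in the limit, so Phase~3 starts from a configuration within the detection tolerance of the exact swap; because the separation bound above is strict, it survives this perturbation, which closes the argument.
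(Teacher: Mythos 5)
Your three-phase skeleton (characterize the deadlock configuration, swap via a rigid $D_s$-preserving rotation about the fixed centroid, then run the proportional controllers) is the same decomposition the paper uses, and your Phase~1 and Phase~2 steps match. The genuine problem is the Phase~3 safety argument. You bound the inter-robot distance by the separation between the two post-swap segments $[\boldsymbol{p}_2^{\mathrm{old}},\boldsymbol{p}_{d_1}]$ and $[\boldsymbol{p}_1^{\mathrm{old}},\boldsymbol{p}_{d_2}]$, which equals $D_s$ only under the ordering $\boldsymbol{p}_{d_2},\boldsymbol{p}_1,\boldsymbol{p}_2,\boldsymbol{p}_{d_1}$ that you posit. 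As you yourself note, stationarity \eqref{stationarity} only gives the ``same side'' property, and in fact the deadlock set of \cref{nonempty} genuinely contains configurations violating your ordering: taking $\alpha$ close to $1$ there puts $\boldsymbol{p}^*_1$ within $D_s$ of $\boldsymbol{p}_{d_1}$, so that $\boldsymbol{p}_{d_1}$ lies strictly \emph{between} the two robots while all KKT conditions still hold. In that case your two segments are separated by less than $D_s$ (after the swap, robot~1 must travel back toward where robot~2 now sits), so the segment bound is insufficient, and appealing to ``the structure of deadlocks produced by Phase~1'' does not rescue the claim for the general deadlock set the resolution controller must handle. So the step you flagged as the crux is not a technicality to be patched but the place where this route breaks.

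The paper closes this differently, and more cheaply: it never needs the goals to lie beyond the other robot. It passes to relative coordinates. Phase~2 terminates with $\Delta\boldsymbol{p}_{21}(t_2)=D_s\hat{\boldsymbol{e}}_{\beta}$ aligned with $\Delta\boldsymbol{p}_{d_{21}}=D_G\hat{\boldsymbol{e}}_{\beta}$; in Phase~3 the relative dynamics are $\Delta\dot{\boldsymbol{p}}_{21}=-k_p(\Delta\boldsymbol{p}_{21}-\Delta\boldsymbol{p}_{d_{21}})$, and after rotating by $-\beta$ the transverse component is identically zero while the longitudinal component is $D_G+(D_s-D_G)e^{-k_p(t-t_2)}$. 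Under the explicit Assumption~\ref{intergoal} that $D_G>D_s$ (which your write-up omits but needs in some form), this is monotonically non-decreasing from $D_s$ to $D_G$, so safety and non-recurrence of deadlock follow regardless of where the individual goals sit relative to the robots. In the example above this computation still yields a monotonically increasing distance even though your segments overlap to within less than $D_s$ of each other, because the timing of the two exponentials matters and the relative-coordinate calculation captures it. If you replace your segment argument with this relative-dynamics computation (which does implicitly use $k_{p_1}=k_{p_2}$), the rest of your proof goes through.
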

\begin{proof}
We would like to show that once phase three control begins, the robots will never fall back in deadlock. We will do this by showing that the distance between the robots is non-decreasing, once phase three control starts. We make one assumption which is needed for a technicality in the proof but is easily achieved in practice.
\begin{assumption}
\label{intergoal}
$D_G>D_s$ \textit{i.e.} the inter-goal distance is greater than the safety margin.
\end{assumption}
This is required because otherwise, the robots will never be at-least $D_s$ apart while at their goals which will result in safety violation. We break this proof into three parts consistent with the three phases shown in \cref{fig:drestwo}: \\ \\
\textbf{Phase 1 $\rightarrow$ Phase 2:}
Let $t=t_1$ be the time at which phase 1 ends (and phase 2 starts) \textit{i.e.} when robots fall in deadlock. In \cref{touching} we showed that in deadlock $\norm{\Delta \boldsymbol{p}_{21}}=D_s$, and in \cref{nonempty} we showed that the positions of robots and their goals are collinear. So at the end of phase 1, $\Delta \boldsymbol{p}_{21}(t_{1})=D_s\boldsymbol{\hat{e}}_{\beta + \pi}$. The goal vector $\Delta \boldsymbol{p}_{d_{21}}(t)\coloneqq \boldsymbol{p}_{d_2}-\boldsymbol{p}_{d_1}=D_G\boldsymbol{\hat{e}}_{\beta} \mbox{ }\forall t>0$. \\  \\
\textbf{Phase 2 $\longrightarrow$ Phase 3:}
The initial condition of phase two is the final condition of phase one  \textit{i.e.} $\Delta \boldsymbol{p}_{21}(t_{1})=D_s\hat{e}_{\beta +\pi}$. In phase two, we use feedback linearization to rotate the assembly of robots making sure that the distance between them stays at $D_s$, until the orientation of the vector $\Delta \boldsymbol{p}_{21}(t)=D_s \boldsymbol{\hat{e}}_{\theta(t)}$ aligns with  $\Delta \boldsymbol{p}_{d_{21}}=D_G\boldsymbol{\hat{e}}_{\beta}$. (Such a controller is guaranteed to exist (see \cite{grover2019deadlock}). Once done, $\exists$ a time $t_2$ at which $\theta(t_2)=\beta$. Moreover, at $t=t_2$, the robots are no longer moving and $\Delta \boldsymbol{p}_{21}(t_2)=D_s \boldsymbol{\hat{e}}_{\beta}$. These states are the final condition for phase 2 and initial condition for phase 3.\\ \\
\textbf{Phase 3 $\longrightarrow \infty:$}
In this phase, the initial condition is $\Delta \boldsymbol{p}_{21}(t_2)=D_s \boldsymbol{\hat{e}}_{\beta}$. Also, note that the dynamics of phase 3 control are specified by a proportional controller. The dynamics of relative positions and velocities are:
\begin{align}
\Delta \dot{\boldsymbol{p}}_{21} = -k_p(\Delta \boldsymbol{p}_{21}- \Delta \boldsymbol{p}_{d_{21}})
\end{align}
where $\Delta \boldsymbol{p}_{d_{21}}=D_G\boldsymbol{\hat{e}}_{\beta}$.  Now, we will do a coordinate change as described next. Let $\Delta \tilde{\boldsymbol{p}}_{21} \coloneqq R_{-\beta} \Delta {\boldsymbol{p}}_{21}$. The initial conditions in these coordinates are $\Delta \tilde{\boldsymbol{p}}_{21}(t_2)= R_{-\beta} D_s\boldsymbol{\hat{e}}_{\beta}=(D_s,0)$ \textit{i.e.} $\Delta \tilde{p}_{21}^x(t_2)=D_s,\Delta \tilde{p}_{21}^y(t_2)=0$. The dynamics in new coordinates are:
\begin{align}
\Delta \dot{\tilde{\boldsymbol{p}}}_{21} = -k_p(\Delta \tilde{\boldsymbol{p}}_{21}- R_{-\beta}\Delta \boldsymbol{p}_{d_{21}})
\end{align}
Using these coordinates, note that $R_{-\beta}\Delta \boldsymbol{p}_{d_{21}}=(D_G,0)$. 
Note from the dynamics and the initial conditions for the $y$ components of relative position that the only solution is the zero solution \textit{i.e.} $\Delta \tilde{p}_{21}^y(t) \equiv 0$ $\forall$ $t\geq t_2$. As for the $x$ component, we can compute the solution to be $\Delta \tilde{p}_{21}^x(t)=D_G + (D_s-D_G)e^{-k_p(t-t_2)}$.
Finally, note that $\frac{d \norm{\Delta \boldsymbol{p}_{12}(t)}}{dt} = -k_p(D_s-D_G)e^{-k_p(t-t_2)}  > 0$ (from Assumption \ref{intergoal}). Hence, the distance between the robots is non-decreasing \textit{i.e.} the robots never fall in deadlock. Additionally, since the robots use a proportional controller, their positions exponentially stabilize to their goals. \qed
\end{proof} 
\bibliographystyle{agsm}
\bibliography{main}

\end{document}